\documentclass[twoside]{article}
\usepackage[accepted]{aistats2021}

\usepackage[round]{natbib}

\usepackage{booktabs}
\usepackage{multirow}
\usepackage{enumitem}
\usepackage{algorithmic}
\usepackage[utf8]{inputenc} 
\usepackage[T1]{fontenc}    
\usepackage{hyperref}       
\usepackage{url}            
\usepackage{booktabs}       
\usepackage{amsfonts}       
\usepackage{nicefrac}       
\usepackage{microtype}      
\usepackage{graphicx}
\usepackage{float}


\usepackage{amsmath,amsthm,amssymb}
\usepackage{color}
\usepackage{comment}
\usepackage{cancel}

\usepackage{dsfont}
\usepackage[english]{babel}
\usepackage{bbm}
\usepackage{float}
\usepackage{trimclip}
\usepackage{wrapfig}
\newcommand{\mtrian}{\raisebox{-0.25ex}{\clipbox{0em 1.25ex 0em 0em}{$\triangleq$}}}
\newcommand{\trianglepropto}{\overset{\mtrian}{\propto}}

\usepackage[ruled,vlined]{algorithm2e}
\newtheorem{definition}{{Definition}}
\newtheorem{assumption}{{Assumption}} 
\newtheorem{theorem}{{Theorem}}
\newtheorem{proposition}{{Proposition}}

\newtheorem{assumption_SM}{{Assumption }} 

\newtheorem{proposition_SM}{{Proposition }}

\setlength{\tabcolsep}{0.25em}


\usepackage[dvipsnames]{xcolor}

\usepackage{footnote}
\makeatletter
\newcommand\footnoteref[1]{\protected@xdef\@thefnmark{\ref{#1}}\@footnotemark}
\makeatother





\usepackage{amsmath}
\usepackage{amsfonts}
\usepackage{amssymb}
\usepackage{wrapfig}
\usepackage{subcaption}
\usepackage{multirow}
 \usepackage{mathtools} 

\usepackage{verbatim}

\usepackage{anyfontsize}

\usepackage{color}
\usepackage{tikz}
\usetikzlibrary{arrows,shapes,snakes,automata,backgrounds,fit,petri}
\usepackage{adjustbox}

\makeatletter
\newcommand{\distas}[1]{\mathbin{\overset{#1}{\kern\z@\sim}}}%

\usepackage{enumitem}



\newcommand{\indep}{\perp \!\!\!\! \perp}

\newcommand{\beq}{\vspace{0mm}\begin{equation}}
\newcommand{\eeq}{\vspace{0mm}\end{equation}}
\newcommand{\beqs}{\vspace{0mm}\begin{eqnarray}}
\newcommand{\eeqs}{\vspace{0mm}\end{eqnarray}}
\newcommand{\barr}{\begin{array}}
\newcommand{\earr}{\end{array}}

\ifx\theorem\undefined
\newtheorem{theorem}{Theorem} 
\fi

\ifx\lemma\undefined
\newtheorem{lemma}{Lemma}
\fi

\ifx\proposition\undefined
\newtheorem{proposition}[theorem]{Proposition}
\fi

\ifx\corollary\undefined
\newtheorem{corollary}{Corollary}
\fi


\ifx\assumption\undefined
\newtheorem{assumption}{Assumption}
\fi

\ifx\definition\undefined
\newtheorem{definition}{Definition}
\fi

\ifx\remark\undefined

\fi






%
%


\setlength{\pdfpageheight}{11in}
\setlength{\pdfpagewidth}{8.5in}


\begin{document}

%

%

\twocolumn[

\aistatstitle{Counterfactual Representation Learning with Balancing Weights}
\aistatsauthor{Serge Assaad$^1$ \And Shuxi Zeng$^2$ \And Chenyang Tao$^1$ \And Shounak Datta$^1$}
\aistatsauthor{Nikhil Mehta$^1$ \And Ricardo Henao$^1$ \And Fan Li$^2$ \And Lawrence Carin$^1$}
\aistatsaddress{ $^1$Department of ECE, Duke University \And $^2$Department of Statistical Science, Duke University}
\runningauthor{Assaad, Zeng, Tao, Datta, Mehta, Henao, Li, Carin}]
\begin{abstract}
A key to causal inference with observational data is achieving balance in predictive features associated with each treatment type. Recent literature has explored representation learning to achieve this goal. In this work, we discuss the pitfalls of these strategies -- such as a steep trade-off between achieving balance and predictive power -- and present a remedy via the integration of balancing weights in causal learning. Specifically, we theoretically link balance to the quality of propensity estimation, emphasize the importance of identifying a proper target population, and elaborate on the complementary roles of feature balancing and weight adjustments. Using these concepts, we then develop an algorithm for flexible, scalable and accurate estimation of causal effects. Finally, we show how the learned weighted representations may serve to facilitate alternative causal learning procedures with appealing statistical features. We conduct an extensive set of experiments on both synthetic examples and standard benchmarks, and report encouraging results relative to state-of-the-art baselines.
\end{abstract}
\section{INTRODUCTION}
Solving many scientific, engineering, and socioeconomic problems -- \textit{e.g.}, personalized healthcare \citep{CI_health,EHR}, computational advertising \citep{CI_ads}, and policymaking \citep{CI_policy} -- requires an understanding of cause and effect beyond observed associations. Consequently, the study of {\it causal inference} \citep{Pearl,Rubin_potential_outcomes} is central to various disciplines and has received growing attention in the machine learning community. To exploit the new opportunities and cope with the challenges brought by modern datasets, various new causal inference methods have been proposed  \citep{Shalit,GANITE,louizos2017causal,hassanpour2019counterfactual,johansson2018learning,johansson2020generalization,LiFu_matching,alaa18,alaa2017bayesian}.

This paper focuses on predicting conditional average treatment effects (CATE) from observational data, defined as the difference between an individual's expected potential outcomes for different treatment conditions. This problem differs fundamentally from standard supervised learning \citep{Pearl,Rubin_potential_outcomes}, because for each unit only the potential outcome corresponding to the assigned treatment is observed and the other potential outcome is missing. The absence of the ``counterfactual'' outcome prohibits the direct learning and validation of causal effects. Further, observational studies are subject to selection bias due to confounders \citep{SelectionBias} -- variables that affect both the treatment assignment and the outcomes. Within the associated data this is typically manifested as covariate imbalance \citep{Shalit}, {\it i.e.}, treatment-dependent distributions of covariates. Without careful adjustment, this leads to a biased estimate of the causal effect \citep{Jose_stable_weights}.

Mitigation of covariate imbalance in high-dimensional spaces has motivated representation learning schemes for causal inference that seek balance in the learned feature space \citep{Shalit,johansson2016learning}. Despite the empirical success of such methods, it has been recognized that over-enforcing balance can be harmful, as it may inadvertently remove information that is predictive of outcomes \citep{alaa18}. To see this, one may consider an example where a moderately predictive feature might get erased in the learned representation for being highly imbalanced. As such, representation learning-based schemes are sensitive to the hyperparameter that tunes the desired level of imbalance mitigation. 

More classical causal inference approaches seek to match the statistics of the covariates associated with both treatment types \citep{Pearl,Lunceford2004,Rubin_potential_outcomes,holland1986causal}.
Matching methods create a balanced sample by searching for ``similar'' units from the opposite treatment group \citep{Matching}. Matching unfortunately does not scale well to higher dimensions \citep{abadie2006large}, and will often improve balance for some covariates at the expense of balance for others.
Weighting methods assign to each unit a different importance weight so as to match the covariate distributions in different treatment arms after reweighting \citep{Fan_overlap,Lunceford2004}.
In much of the causal inference literature, weighting is employed for {\em average} treatment effect (ATE) estimation over a population. 

In this paper, we employ weighting for {\em conditional} average treatment effect (CATE) estimation. In this context we demonstrate the advantages of learning from regions of good overlap, achieved by employing weighting prior to representation learning. We investigate the coupling of weighting methods \citep{Fan_overlap,Jose_stable_weights,hassanpour2019counterfactual,johansson2018learning} with representation-based causal inference, and demonstrate how the use of properly designed weights alleviates the aforementioned difficulties of representation learning applied to causal inference. We show how targeting an alternative population for empirical loss minimization \citep{Fan_overlap} benefits CATE estimation.
As discussed below, if appropriately designed weights are learned perfectly, then balance is achieved for {\em any} features constituted from the covariates (since balance is achieved in the covariates themselves). However, most weighting methods are computed from the propensity score \citep{Doagostino}, which must be {\em approximated} numerically. Because in practice the weights are always imperfect, exact balance is rarely achieved based on weighting alone, motivating our augmentation of weighting with representation learning. 

This paper makes the following contributions:
($i$) demonstration that the integration of balancing weights alleviates the trade-off between feature balance and predictive power for representation learning;
($ii$) derivation of theoretical results bounding the degree of imbalance as a function of the quality of the propensity model;
$(iii)$ exploration of the benefits of the learned weights and representations as inputs to other learning procedures such as causal forests. We demonstrate that our method, \textit{Balancing Weights Counterfactual Regression (BWCFR)}, mitigates the weaknesses of propensity-weighting and representation learning. In this approach, we do not impose that the features themselves be balanced, as this would likely result in loss of information. Instead, we promote balance for \textit{reweighted} feature distributions, with weights targeting regions for which there is already good overlap.

\section{RELATED WORK}
\paragraph{Representation learning} has been used to achieve balance between treatment group distributions, seeking representations that are both predictive of potential outcomes, and balanced across treatment groups \citep{kallus2018deepmatch,Shalit}. \cite{zhang2020learning} argue that there is often a tradeoff between these objectives, and that over-enforcing balance leads to representations that are less useful for outcome prediction -- our proposal mitigates this tradeoff by enforcing balance between \textit{weighted} feature distributions. Our theory on the discrepancy between the treatment arm distributions (Propositions \ref{prop:KL_bound} and \ref{prop:wass_mmd_bounds}) is also conceptually related to sensitivity modeling in causal analysis \citep{kallus19a}.

\paragraph{Weighting-based methods} typically construct weights as a function of the propensity score to balance covariates \citep{rosenbaum1983a,Lunceford2004}, such as inverse probability weighting (IPW). The performance of these methods critically depends on the quality of the propensity score model and is highly sensitive to the extreme weights \citep{hainmueller2012entropy}. To overcome these limitations, alternative weighting schemes such as Matching Weights \citep{MW}, Truncated IPW \citep{Crump2009} or Overlap Weights \citep{Fan_overlap} seek to change the target population, thereby eliminating extreme weights. Another popular line of solutions directly incorporates covariate balance in constructing the weights \citep{graham2012inverse,diamond2013genetic}, and usually calculate weights via an optimization program with moment matching conditions as the hard \citep{LiFu_matching,hainmueller2012entropy,imai2014covariate} or soft constraints \citep{Jose_stable_weights}. While these bypass propensity score modeling and hence are no longer afflicted by extreme weights, they struggle to scale in high-dimensional settings.

\paragraph{Combining weighting with representation learning} is appealing, as it avoids over-enforcing covariate balance at the expense of predictive power. \cite{hassanpour2019counterfactual} reweight regression terms with inverse probability weights (IPW) estimated from the representations.
Our solution differs in a few ways: First, we do not recommend the use of IPW weights since they often take on extreme values, especially in high dimensions \citep{LiFu_matching}. Second, \cite{hassanpour2019counterfactual} do not state the theoretical benefits of using weights in the first place -- that is, that weights including (but not limited to) the IPW achieve balance between treatment group distributions, given the true propensity. Finally, \cite{hassanpour2019counterfactual} learn the propensity score from the learned representations -- this leads to an optimization procedure where one is required to alternate between learning weights and learning regressors. In contrast, we propose to train a propensity score estimator in the design stage (before any representation learning), then use it to train the regressors to estimate causal effects.

Also related to our setup is the work of \cite{johansson2018learning}, which tackles the slightly different problem of model generalization under design shift, for which they alternately optimize a weighting function and outcome models for prediction. 
Importantly, our work differs from that of \cite{johansson2018learning} in that we learn a propensity score model, and use it to compute the weights, inspired by \cite{crump2008nonparametric,Fan_overlap} -- we argue that this constitutes a more principled approach to learning weights, since we benefit from the so-called \textit{balancing property}, that is: given the true propensity, the reweighted treatment and control arms are guaranteed to be balanced, a desirable property for the estimation of causal effects. The work of \cite{johansson2018learning} does not provide a similar guarantee about the weights allowing achievement of balance, and their learned weights are harder to interpret.

\paragraph{Empowering other causal estimators with the learned balanced representations} is an appealing proposal, motivated by several considerations: ($i$) empirical evidence suggests that there is no ``silver bullet'' causal estimator given the diversity of causal mechanisms investigators might encounter \citep{alaa19a_validating}; ($ii$) many classical solutions (\textit{e.g.}, BART [\citeauthor{chipman2010bart}, \citeyear{chipman2010bart}], causal forests [\citeauthor{wager2018estimation}, \citeyear{wager2018estimation}]) that do not have the luxury of automated representation engineering may possess appealing statistical properties (\textit{e.g.}, built-in CATE uncertainty quantification). Repurposing the learned balanced representations and associated weights can help to free other causal inference procedures from the struggle of resolving the complexity of high-dimensional inputs, thereby boosting both performance and scalability. 

\section{METHODOLOGY}
\subsection{Basic setup}
\paragraph{Assumptions, Identifiability of CATE} Suppose we have $N=N_0+N_1$ units, with $N_{0}$ and $N_{1}$ units in the control and treatment group, respectively. For each unit $i$, we have a binary treatment indicator $T_{i}$ ($T_i=1$ for treated and $T_i=0$ for control), covariates $X_{i}\in\mathcal{X}\subset \mathbb{R}^p$, and two potential outcomes $\{Y_{i}(0),Y_{i}(1)\}\in\mathcal{Y}\subset\mathbb{R}$ corresponding to the control and treatment conditions, respectively. We refer to $Y_i = Y_i(T_i)$ as the factual outcome, and $Y_i^{CF} = Y_i(1-T_i)$ as the counterfactual/unobserved outcome.
The observed dataset is denoted $\mathcal{D}_F = \{X_{i},T_{i},Y_{i}\}_{i=1}^N$. The \textit{propensity score} is $e(x)=\textup{Pr}(T_{i}=1|X_{i}=x)$, and in practice it is estimated from $\{X_i,T_i\}_{i=1}^N$ \citep{rosenbaum1983a}. 

We are interested in predicting the \textit{conditional average treatment effect} (CATE) for a given unit with covariates $x$: $\tau(x)= \mathbb{E}[Y_{i}(1)-Y_{i}(0)|X_i=x].$
As is typical in causal inference, we make the strong ignorability assumptions:
($i$) ignorabililty, which states $\{Y_i(1),Y_i(0)\} \indep T_i \mid X_i$; and ($ii$) positivity, represented as $0<e(x)<1,~~~\forall x \in \mathcal{X}$. Under these assumptions, we can show  that $\tau(x)$ is \textit{identifiable} from observed data \citep{Imbens,Pearl}, and $\tau(x) = \mathbb{E}[Y_{i}|X_{i}=x,T_{i}=1]-\mathbb{E}[Y_{i}|X_{i}=x,T_{i}=0]$.

\paragraph{Target populations} Often causal comparisons are not for a single unit but rather on a \textit{target distribution} of the covariates. Denote $p(x)\triangleq \textup{Pr}(X_i=x)$ as the density of the covariates,
and the densities in the treated and control arms as $p(x|T=1)\triangleq \textup{Pr}(X_i=x|T_i=1)$ and $p(x|T=0)\triangleq \textup{Pr}(X_i=x|T_i=0)$, respectively.
We are interested in performing inference w.r.t. some \textit{target} population density $g(x) \trianglepropto f(x)p(x)$, 
where $f(x)$ is a pre-specified \textit{tilting function} \citep{Fan_overlap}.
Different choices of target densities $g(x)$ give rise to a class of average causal estimands 
\begin{small}
\begin{align}
    \tau_{\textup{ATE},g}\triangleq\mathbb{E}_{g(x)}[\tau(x)]=\int_{\mathcal{X}}\tau(x)g(x)dx,
\end{align}
\end{small}
which includes popular estimands such as the {\it average treatment effect} (ATE) (with $g(x)=p(x)$) and the {\it average treatment effect on the treated} (ATT) (with $g(x)=p(x|T=1)$). 
Table \ref{tab:weights} details popular target populations defined by their tilting functions.
Intuitively, the tilting functions in Table \ref{tab:weights} (with the exception of IPW) place an emphasis on regions of covariate space that are \textit{balanced} in both treatments, \textit{i.e.} regions of overlap, where $e(x)\approx 0.5$ -- this is shown in Figure \ref{fig:tilting_functions}.

\paragraph{Metrics for effect estimation} Suppose we have a model $h(x,t)$ for the expected outcome $\mathbb{E}[Y_{i}|X_{i}=x,T_{i}=t]$ with covariates $x$ under treatment $t$. We can estimate $\tau(x)$ and $\tau_{\textup{ATE},g}$ with
\begin{small}
\begin{align}
    \hat{\tau}(x)&\triangleq h(x,1)-h(x,0),\\\hat{\tau}_{\textup{ATE},g}&\triangleq \mathbb{E}_{g(x)}[\hat{\tau}(x)]\approx \frac{1}{\sum_{i=1}^Nf(X_i)}\sum_{i=1}^{N}f(X_i)\hat{\tau}(X_i).
\end{align}
\end{small}
To evaluate the quality of estimation of the treatment effect on average, we use a metric
$\epsilon_{\textup{ATE},g}\triangleq |\tau_{\textup{ATE},g}-\hat{\tau}_{\textup{ATE},g}|.$ 
To quantify the prediction accuracy of a CATE model $\hat{\tau}$, we use the {\it Precision in Estimation of Heterogeneous Effects} (PEHE)  \citep{Hill} with target density $g(x)$:
\begin{small}
\begin{align} \label{eq:pehe_g}
    \epsilon_{\textup{PEHE},g}&\triangleq \mathbb{E}_{g(x)}[\{\tau(x)-\hat{\tau}(x)\}^2] \\&\approx \frac{1}{\sum_{i=1}^Nf(X_i)}\sum_{i=1}^Nf(X_i)\{\tau(X_i)-\hat{\tau}(X_i)\}^2.
\end{align}
\end{small}
Equation \eqref{eq:pehe_g} is a generalization of the PEHE used in previous work \citep{Shalit,GANITE,louizos2017causal} to target populations $g(x)$.
In the next section, we propose different weighting schemes and discuss how to reweight the units in the treated and control group to match the same target distribution $g(x)$.
\subsection{Balancing weights\label{sec:weight}}
\begin{table}[t]
\caption{\label{tab:weights}
Choices of tilting function $f(x)$ and associated weight schemes $w(x,t)$ in \eqref{eq:balancing_weights}. Note $\mathds{1}(\cdot)$ is the indicator function. We set $\xi=0.1$ as in \cite{Crump2009}.}
\centering
\begin{small}
\begin{tabular}{@{}cl@{}}
\toprule
\textbf{Tilting function $f(x)$} & \textbf{Weight scheme $w(x,t)$}\\
\midrule
 $1$ & Inverse Prob. Weights (IPW)\\
 [2pt]
 ${\mathds{1}(\xi<e(x)<1-\xi)}$ &
Truncated IPW (TruncIPW)  \\
[2pt]
${\text{min}(e(x),1-e(x))}$ & Matching Weights (MW) \\
[2pt]
$e(x)(1-e(x))$ & Overlap Weights (OW)\\
\bottomrule
\end{tabular}
\end{small}
\end{table}
\begin{figure}
    \centering
    \includegraphics[width=\linewidth]{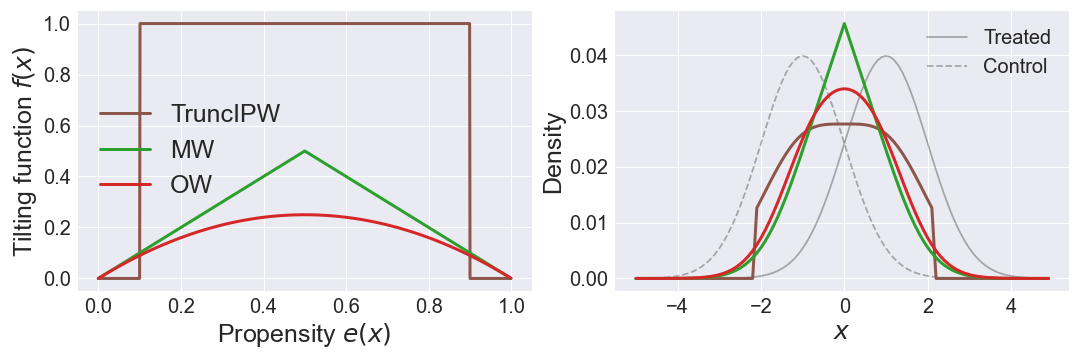}
    \caption{(Left) Tilting functions $f(x)$ used.\\(Right) Illustrative treatment group densities $p(x|T=t)$, and reweighted densities $g(x)\propto f(x)p(x)$ for different $f(x)$, which emphasize regions of good overlap between the treatment and control groups.}
    \label{fig:tilting_functions}
\end{figure}
\paragraph{Balancing with true propensity} For observational studies, typically $p(x|T=1) \neq p(x|T=0)$ due to selection bias resulting from confounding. To achieve balance in the statistics of covariates between the two treatment types, we would like to weight each unit in respective treatment arms towards a common target density $g(x)$. In this study we are particularly interested in a family of target distributions defined by the \textit{balancing weights} \citep{Fan_overlap},
\begin{align}
    w(x,t) = {f(x)}/[{t\cdot e(x) + (1-t)\cdot(1-e(x))}]. \label{eq:balancing_weights}
\end{align}
Table \ref{tab:weights} details popular choices of balancing weights and their corresponding tilting functions. For example, when $f(x)=1$, the weights are the inverse probability weights (IPW) $w(x,1)=1/e(x),w(x,0)=1/(1-e(x))$.
Using balancing weights, we define the reweighted conditional distributions as $g(x|T=1)\trianglepropto w(x,1)p(x|T=1)$ and $g(x|T=0)\trianglepropto w(x,0)p(x|T=0)$. Due to space limitations, all proofs are relegated to the Supplementary Material (SM).
\begin{proposition}[Balancing Property; \citeauthor{Fan_overlap}, \citeyear{Fan_overlap}]
\label{prop:balancing}
Given the true propensity $e(x)$, the reweighted treatment and control arms both equal the target distribution. In other words, $g(x|T=1) = g(x|T=0) = g(x)$.
\end{proposition}

Per Proposition \ref{prop:balancing}, we can balance the treatment and control distributions for estimation of treatment effects prior to any representation learning: the use of balancing weights thus complements the use of representation learning (addressed in Section \ref{sec:rep_learning}) in seeking balance between treatment group distributions -- Figure \ref{fig:tilting_functions} shows the emphasis that balancing weights place on regions with good overlap between treated and control distributions.

\paragraph{Balancing with model propensity} In practice, we do not have access to the true propensity $e(x)$, and we need to estimate it using a model $e_\eta(x)$ with parameters $\eta$ \citep{IPW}.
We plug in the estimated propensity score $e_{\eta}(x)$ in \eqref{eq:balancing_weights} to obtain the approximated balancing weights $w_{\eta}(x,t)$. With the estimated propensity score, Proposition \ref{prop:balancing} no longer holds in general, unless $e_\eta(x) = e(x)$. Given this, we may define the approximate reweighted conditional distributions $g_\eta(x|T=1) \trianglepropto w_\eta(x,1)p(x|T=1)$ and $g_\eta(x|T=0) \trianglepropto w_\eta(x,0)p(x|T=0)$. Though they are not equal in general, we can intuit that, the better the propensity score model, the closer we are to achieving balance between the reweighted treatment arms -- this intuition is supported by Proposition \ref{prop:KL_bound} below.

\begin{assumption}
\label{assum:MSM}
The odds ratio between the model propensity and true propensity is bounded, namely:
\begin{align}
    \exists~\Gamma\geq 1~~s.t.~\forall x\in \mathcal{X},~~\frac{1}{\Gamma} \leq \frac{e(x)(1-e_\eta(x))}{e_\eta(x)(1-e(x))} \leq \Gamma \notag 
\end{align}
\end{assumption}

\begin{proposition}[Generalized Balancing]
\label{prop:KL_bound}
Under Assumption \ref{assum:MSM}, and further assuming that all tilting functions $f$ satisfy $f(x)>0~~\forall x\in\mathcal{X}$, we have:
$$D_{KL}(g_{\eta}(x|T=1)||g_{\eta}(x|T=0)) \leq 2\cdot\log\Gamma,$$
where $D_{KL}$ is the KL-divergence.
\end{proposition}
Proposition \ref{prop:KL_bound} links the (im)balance between reweighted treatment groups to the quality of estimation of the propensity score, quantified by $\Gamma$: the closer $\Gamma$ is to 1, the better the propensity score model. It can be shown immediately that this bound is tight when $\Gamma=1$ -- indeed, perfect estimation of the propensity score yields balance between reweighted treatment and control arms (Proposition \ref{prop:balancing}), so the KL-divergence vanishes.


To estimate treatment effects, we learn a model $h(x,t)$. Such a model is less needed in regions of covariate space that are highly imbalanced (\textit{i.e.}, where $e(x)$ is close to 0 or 1), as for such covariates domain experts generally have a good sense of the appropriate treatment to assign. The MW, OW and TruncIPW weights emphasize regions of covariate space where $e(x)(1-e(x))$ is {\em not} close to zero, and it is this region for which causal predictions are often of most practical utility (the characteristics of $e(x)$ here imply that practitioners are less clear on what the best treatment is). Further, MW, OW and TruncIPW weightings have the advantage of de-emphasizing extreme propensity scores, concentrating on where $e_\eta(x)$ is expected to be most accurate.

MW are a weighting analogue to pair matching \citep{MW}, and OW \citep{Fan_overlap} target the units who are at equipoise (\textit{i.e.}, who are likely to appear in either treatment group). In general, we recommend using OW since there is no cutoff hyperparameter (as in TruncIPW). Further, \cite{Fan_overlap} showed that OW is the minimal asymptotic variance balancing weight for the weighted ATE (though we have yet to show an analogous result for CATE estimation). For a more exhaustive treatment of the different weighting schemes and their interpretation, we refer the readers to \cite{Fan_overlap} and \cite{MW}.

Figure \ref{fig:2D_KDE}(a) illustrates the effect of the Overlap Weights in covariate space -- namely, the emphasis on balanced regions of covariate space.

\subsection{Representation learning with weighting}\label{sec:rep_learning}
Representation learning makes use of an encoder $\Phi:\mathcal{X}\rightarrow \mathcal{R}\subset{\mathbb{R}^{p'}}$ to transform the original covariates to a representation space for CATE prediction using the outcome model $h(\cdot,\cdot):\mathcal{R}\times\{0,1\}\rightarrow \mathcal{Y}$, where $h(\Phi(x),t)$ is the predicted mean potential outcome given covariates $x$ under treatment $t$. The overall model consists of the parameters for $\Phi(x)$ (typically a deep neural network) and the parameters associated with $h(\cdot,t)$, with the latter consisting of two fully-connected neural networks, one for $t=1$ and the other for $t=0$.

Our development is motivated by a generalization bound modified from \cite{Shalit}, which states that under mild technical assumptions the counterfactual prediction error, and consequently, the causal effect prediction error can be upper bounded by a sum of the factual prediction error and a representation discrepancy ({\it i.e.}, quantified imbalance) between the treatment groups. More formally, let
\begin{align}
    \ell_{h,\Phi}(x,t) \triangleq \int_\mathcal{Y}L(y,h(\Phi(x),t))\textup{Pr}(Y(t)=y|X=x)dy,\notag
\end{align}
be the unit loss, where $L(y,y'): \mathcal{Y}\times\mathcal{Y}\rightarrow\mathbb{R}^+$ is a loss function ({\it e.g.}, squared loss $(y-y')^2$). We can further define the expected factual loss w.r.t. the target density under treatment $t \in \{0,1\}$:
\begin{align}
    \epsilon_{F,g}^{T=t} &\triangleq \int_{\mathcal{X}} \ell_{h,\Phi}(x,t)g(x|T=t)dx.
\end{align}
{\it Remark:} This differs from the original setup in \cite{Shalit} in that our expectation is taken w.r.t. the {\em target densities} $g(x|T=t)$ rather than the observational densities $p(x|T=t)$. Under the technical conditions listed in the SM, the following generalization bound holds:
\begin{align}
\label{eq:weighted_balance}
    \epsilon_{\textup{PEHE},g}\leq &2\cdot(\epsilon_{F,g}^{T=1} + \epsilon_{F,g}^{T=0}) + C\\ 
    + &\alpha\cdot \textup{IPM}_G(g_\Phi(r|T=1),g_\Phi(r|T=0))\triangleq B,\notag
\end{align}
where $C$ is a constant w.r.t. model parameters, $r=\Phi(x)$ is the representation for a unit with covariates $x$, and $g_\Phi(r|T=1),g_\Phi(r|T=0)$ are the distributions induced by the map $\Phi$ (which is invertible by assumption) from $g(x|T=1),g(x|T=0)$, respectively. The \textit{integral probability metric} is defined as $\textup{IPM}_G(u,v) \triangleq \underset{m\in G}{\textup{sup}}\int_{\mathcal{R}}m(r)[u(r)-v(r)]dr$ \citep{IPM}, and measures the discrepancy between two distributions $u$ and $v$ by identifying the maximal expected contrast w.r.t. function class $G$. Prominent examples of IPMs include the Wasserstein distance \citep{Wasserstein} and the Maximum Mean Discrepancy (MMD; \citeauthor{GrettonMMD}, \citeyear{GrettonMMD}). With stronger technical assumptions (such as $G$ being the space of all Lipschitz-1 functions, being dense in $L^2$, or derived from a characteristic kernel), the IPM becomes a formal  distance metric for distributions.

Standard decomposition of generalization error typically consists of two parts: the training error and model complexity, where the latter is often formally characterized by measures like Rademacher complexity or VC dimension \citep{shalev2014understanding}. The latter term usually encourages models from a simpler hypothesis space, to avoid overfitting. Compared with the bound in \cite{Shalit}, we can reduce the bound in \eqref{eq:weighted_balance} through proper weighting in the design stage without restricting the representations themselves to be exactly balanced across treatment groups, but rather enforcing that the \textit{reweighted} representations are balanced.
Equivalently, with the proper weighting, we can improve the overall generalization bound by reducing the factual training error without sacrificing the counterfactual generalization. This reconciles the conflict that the IPM and prediction error are at odds.

\begin{figure}
\centering
\begin{subfigure}[b]{0.5\textwidth}
    \includegraphics[width=\linewidth]{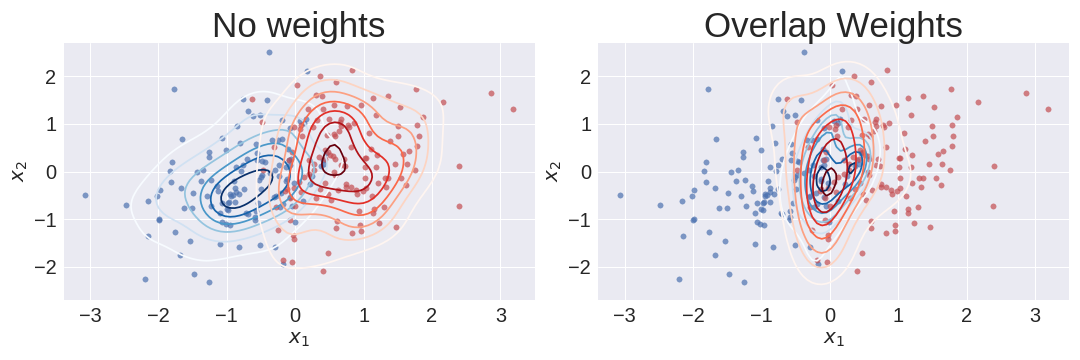}
  \caption{2D treatment (red) \& control (blue) covariate samples, with distributions $g_\eta(x|T=1)$ (red) and $g_\eta(x|T=0)$ (blue), shown as kernel density estimates (contour plots -- left: unweighted, right: overlap-weighted).}
  \label{fig:row1_KDE} 
\end{subfigure}

\begin{subfigure}[b]{0.5\textwidth}
    \includegraphics[width=\linewidth]{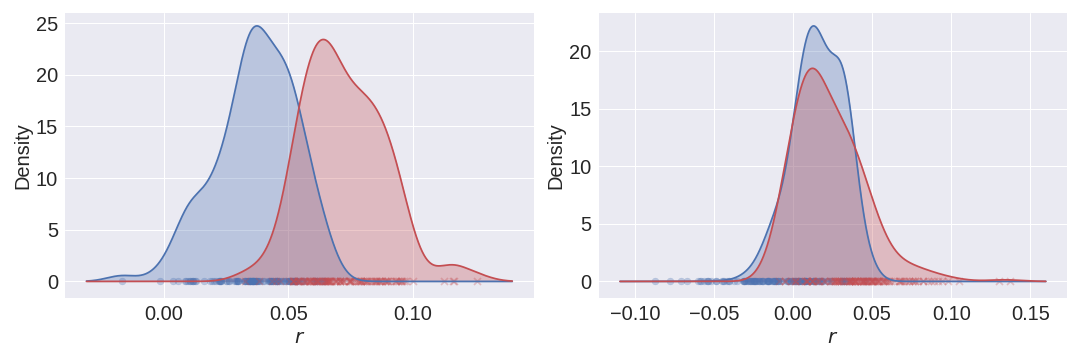}
   \caption{Learned representation distributions $g_{\Phi,\eta}(r|T=1)$ (red) and $g_{\Phi,\eta}(r|T=0)$ (blue).}
  \label{fig:row2_KDE}
\end{subfigure}
\caption[]{Illustrative example with highly imbalanced treatment arms. The columns are the weight schemes used for training the outcome models and weighting the representations. (a) shows that the overlap weights (OW) focus the learning on regions of overlap in covariate space. (b) illustrates that the weighting schemes can help achieve balance in representation space under severe selection bias.}
\label{fig:2D_KDE}
\vspace{-4mm}
\end{figure}
If the weights $w_\eta(x,t)$ are computed perfectly (\textit{i.e.}, if the propensity-score model satisfies $e_\eta(x) = e(x), \forall x\in \mathcal{X}$), the IPM term in \eqref{eq:weighted_balance} vanishes -- a direct consequence of Proposition \ref{prop:balancing}. However, as we do not know the true propensity in practice, we approximate the bound as
\begin{align}
\label{eq:bound_approx}
    B \approx &2\cdot(\epsilon_{F,g_\eta}^{T=1}+ \epsilon_{F,g_\eta}^{T=0}) + C \\+ &\alpha\cdot \textup{IPM}_G(g_{\Phi,\eta}(r|T=1),g_{\Phi,\eta}(r|T=0)) \notag ,
\end{align}
where $g_{\Phi,\eta}(r|T=1),g_{\Phi,\eta}(r|T=0)$ are the distributions induced by the map $\Phi$ from the reweighted distributions $g_\eta(x|T=1)$ and $g_\eta(x|T=0)$. In practice, we use the Wasserstein distance and the MMD as the IPM in equation \eqref{eq:bound_approx}.
\vspace{2mm}
\begin{proposition}\label{prop:wass_mmd_bounds}
Under Assumption \ref{assum:MSM}, assuming the representation space $\mathcal{R}$ is bounded, and assuming the tilting functions satisfy $f(x)>0~\forall x \in \mathcal{X}$, the following bounds hold:
\begin{small}
\begin{align}
    &\mathcal{W}(g_{\Phi,\eta}(r|T=1),g_{\Phi,\eta}(r|T=0)) \leq \textup{diam}(\mathcal{R})\sqrt{\log\Gamma}; \notag\\
    &\textup{MMD}_k(g_{\Phi,\eta}(r|T=1),g_{\Phi,\eta}(r|T=0)) \leq 2\sqrt{C_k\log\Gamma},\notag
\end{align}
\end{small}
where $\mathcal{W}$ is the Wasserstein distance, $\textup{diam}(\mathcal{R}) \triangleq \textup{sup}_{r,r'\in\mathcal{R}}||r-r'||_2$, $\textup{MMD}_k$ is the MMD with kernel $k$, and $C_k \triangleq \textup{sup}_{r\in\mathcal{R}}k(r,r)$.
\end{proposition}
Proposition \ref{prop:wass_mmd_bounds} bounds the IPM by the factor $\Gamma$ which quantifies the quality of the propensity score model as in Assumption \ref{assum:MSM} -- it is again easy to show that the bounds are tight when $\Gamma=1$. This result is intuitive, and shows that, the better the propensity model, the more balanced the reweighted feature distributions.\\
Here the IPM term may be seen as a {\em correction} to the weights, addressing errors manifested by imperfections in the estimated propensity score. However, since much of the balance is achieved by the weights, it is less likely that the weighted IPM term will remove predictive features. Figure \ref{fig:2D_KDE}(b) illustrates how weighting can achieve balance in representation space. 
The weighted density plots show that the learned weighted representations become more balanced compared with the unweighted one. Weighting achieves a similar effect as the IPM term in balancing the representations, but it does not enforce that the (unweighted) empirical distributions of the representations need to be matched across treatments.
\subsection{Implementation}
We train a propensity score model $e_\eta(x)$ by minimizing $\mathcal{L}_{prop}(\eta)$ w.r.t. $\eta$, where:
\begin{small}
\begin{align}
    \mathcal{L}_{prop}(\eta) = -\sum_{i=1}^N&\{\frac{T_i}{N_1}\cdot\log[\sigma(s_\eta(X_i))] \\&+ \frac{1-T_i}{N_0}\cdot\log[1-\sigma(s_\eta(X_i))]\}.\notag
\end{align}
\end{small}
$\sigma(z)\triangleq 1/[1+\exp(-z)]$, and $s_\eta(x)$ is a fully-connected neural network with $e_\eta(x) \triangleq \sigma(s_\eta(x))$. 
Once $e_\eta(x)$ is trained, we learn the parameters of the encoder $\Phi(x)$ and the outcome models $h(\Phi(x),1)$ and $h(\Phi(x),0)$. We can show that the approximation in \eqref{eq:bound_approx} leads to the following finite-sample objective, which we minimize w.r.t. $h,\Phi$:
\begin{align}
\label{eq:objective}
\mathcal{L}(h,\Phi) \triangleq &\mathcal{L}_F(h,\Phi)\\ + &\alpha\cdot \text{IPM}_G\left({\hat{g}_{\Phi,\eta}}(r|T=1),{\hat{g}_{\Phi,\eta}}(r|T=0)\right)\notag
\end{align}
where $\mathcal{L}_F(h,\Phi)$ is a Monte Carlo approximation of $\epsilon_{F,g_\eta}^{T=1}+\epsilon_{F,g_\eta}^{T=0}$ and $\hat{g}_{\Phi,\eta}(r|T=t)$ is the empirical approximation of $g_{\Phi,\eta}(r|T=t)$ ($t \in \{0,1\}$), defined as:
\begin{small}
\begin{align}
\mathcal{L}_F(h,\Phi) &\triangleq \frac{1}{N}\sum_{i=1}^N w_{\eta}(X_i,T_i) \, \left(Y_i - h(\Phi(X_i),T_i)\right)^2 \notag,\\ 
{\hat{g}_{\Phi,\eta}}(r|T=t) &\triangleq \sum_{i: T_i=t} \frac{w_{\eta}(X_i,t)}{\sum_{j: T_j=t}w_{\eta}(X_j,t)}\delta(r-\Phi(X_i)) \notag.
\end{align}
\end{small}
$\delta(r-z)$ is a point mass centered at $z$ and $\Phi(\cdot), h(\cdot,1), h(\cdot,0)$ are fully-connected neural networks. 
More details on how to obtain the finite-sample approximation in \eqref{eq:objective} and how to compute the weighted IPM term in practice are provided in the SM.
\section{EXPERIMENTS}
\subsection{Synthetic data}
\paragraph{Data generating process} We wish to understand the effect of distribution imbalance (the extent to which the treatment and control distributions differ) on the performance of our methods for CATE estimation. Specifically, we construct datasets for which we vary the distribution imbalance and the amount of confounding. Consider the following data-generating mechanism:
    \begin{itemize}[leftmargin=*]
    \item Fix $\sigma_X,\sigma_Y, \rho, \theta \in \mathbb{R}$. Set $\beta_0,\beta_\tau,\gamma \in \mathbb{R}^p$ to be $p^*$-sparse vectors ($i.e.$, $||\beta_0||_0 = ||\beta_\tau||_0 = ||\gamma||_0 = p^*$), and further set $supp(\beta_0) = supp(\beta_\tau) \triangleq \mathcal{B}$, $\mathcal{G} \triangleq supp(\gamma)$, and the \textit{confounding parameter} $\Omega \triangleq |\mathcal{B}\cap\mathcal{G}|$.
    \item For simplicity, set $\gamma = \Tilde{\gamma}\cdot\mathds{1}_\mathcal{G}$, where $\mathds{1}_\mathcal{G}\in \{0,1\}^p$ is a binary vector with ones at elements of $\mathcal{G}$, and $\Tilde{\gamma}\geq 0$ is the \textit{imbalance parameter}. Note $||\gamma||_2=\Tilde{\gamma}\cdot p^*$.
    \item Draw $X_i,T_i, Y_i(1),Y_i(0)$ as follows:
        \begin{align}
        &X_i \sim \mathcal{MVN}(0,\sigma_X^2 [(1-\rho)I_p + \rho 1_p1_p^T]),\\
        &T_i|X_i \sim \textup{Bernoulli}(\sigma(X_i^T\gamma)),\\
        &\epsilon_i \sim \mathcal{N}(0,\sigma_Y^2),\hspace{0.1in}Y_i(0) = X_i^T\beta_0 + \epsilon_i,\\&Y_i(1) = X_i^T\beta_0 + X_i^T\beta_\tau + \theta + \epsilon_i.
        \end{align}
\end{itemize}
This data-generating process satisfies the assumptions of ignorability and overlap. We construct multiple such datasets by varying the distribution imbalance and amount of confounding, as follows:
\begin{figure}
\centering
\includegraphics[width=\linewidth]{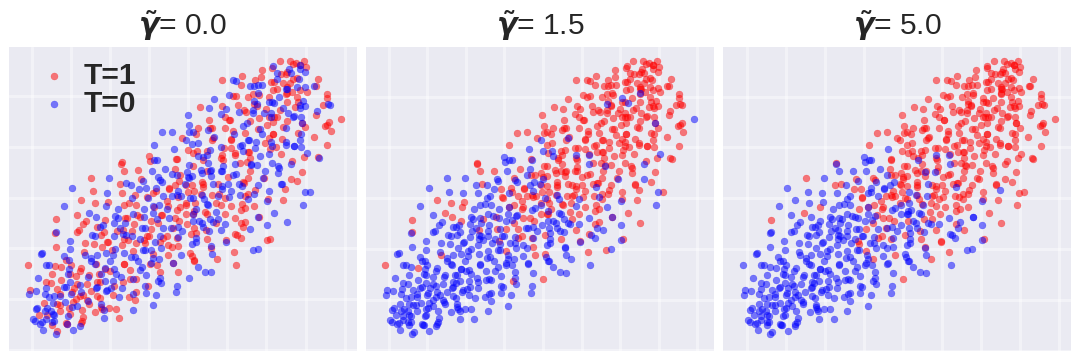}
\caption{\label{fig:gamma_overlap_clean}
Higher $\Tilde{\gamma}$ leads to larger imbalance between treatment groups.}
\end{figure}

\textit{Distribution imbalance:} We increase the distribution imbalance by increasing $\Tilde{\gamma}$ in the range [0,5]. Figure \ref{fig:gamma_overlap_clean} shows a $t$-SNE plot of the 50-dimensional covariates, for different values of $\Tilde{\gamma}$.\\
\textit{Level of confounding:} We increase the level of confounding by increasing $\Omega$, \textit{i.e.}, the extent to which the same covariates are predictive of treatment and potential outcomes. We vary $\Omega$ to be equal to $p^*$ (``high confounding''), $\tfrac{p^*}{2}$ (``moderate confounding''), and 0 (``low confounding''). In the ``low confounding'' setting, ($\Omega=0$), there is still some confounding by way of the correlation $\rho$ between the covariates.

In total we generate 33 datasets (3 values of $\Omega~~\times$ 11 values of $||\gamma||_2$). For more details on the data-generating process, see the SM.

\paragraph{Results} We compare the weighted-model performance across the 33 datasets generated as discussed above, and we compare against using no weights. For a fair comparison, we fix all hyperparameters with the exception of the IPM regularization strength $\alpha$ (for details on hyperparameters, see the SM). Figure \ref{fig:min_alpha_clean} shows the performance of each method for all datasets. For a given dataset and weight scheme, we select the $\alpha$ that minimizes $\epsilon_{\textup{PEHE},p}$. We picked the $\alpha$ minimizing the true $\epsilon_{\textup{PEHE},p}$ (which includes knowledge of counterfactual outcomes) to avoid introducing any noise in the comparisons via a proxy such as a 1-nearest-neighbor imputation (1NNI) of missing potential outcomes. 
For the remaining experiments on real data (Sections \ref{sec:IHDP} and \ref{sec:CF}), we use 1NNI, which makes no use of counterfactual outcomes, for model selection in order to compare with existing work.

From Figure \ref{fig:min_alpha_clean}, one can immediately see the benefit of using a weighted objective (weighted regression + weighted IPM) over its unweighted counterpart. More specifically, the MW, OW, and TruncIPW weights do well in comparison with the other weight schemes, especially in settings of high imbalance (\textit{i.e.}, high values of $\Tilde{\gamma}$). On the other hand, IPW is numerically unstable \citep{LiFu_matching} and yields only marginally better results than its unweighted counterpart, so we do not recommend its use as a weighting scheme.
This provides empirical evidence for the fact that weighted CATE models, though trained to perform well on a \textit{target} population $g(x)$ (namely, for the non-IPW weights, regions of good overlap), vastly improve CATE estimation on the \textit{observed} population $p(x)$.
We also compared the performance of our models on the target populations (\textit{i.e.}, as measured by $\sqrt{\epsilon_{\textup{PEHE},g}}$), and found that the weight schemes perform well on the respective populations they target. For details about the performance on target populations, see the SM.
\begin{figure}
    \centering
    \includegraphics[width=\linewidth]{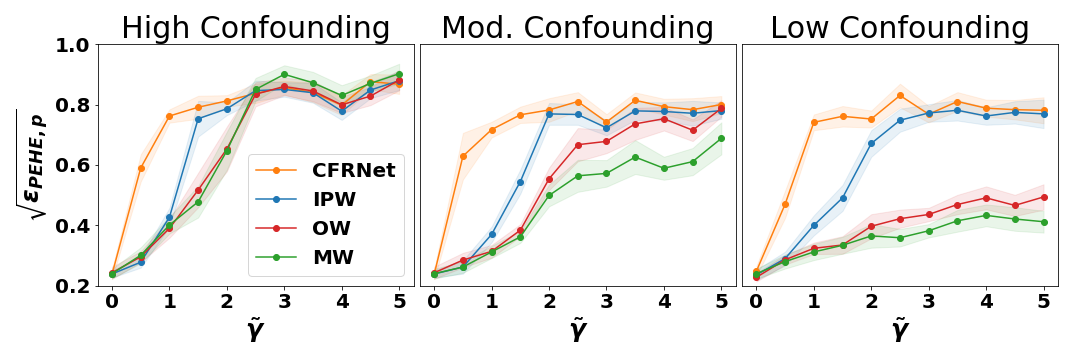}
    \caption{$\sqrt{\epsilon_{\textup{PEHE},p}}$ \textit{vs.} dataset imbalance parameter $\Tilde{\gamma}$ for different confounding settings (high, moderate, low). The colored bands are standard errors over 20 realizations. TruncIPW was omitted to avoid clutter, because it was similar to OW and MW. CFRNet is from \cite{Shalit},
    which uses propensity-independent weights $(1-T_i)/N_0+T_i/N_1$. Lower is better.}
    \label{fig:min_alpha_clean}
\end{figure}

\paragraph{Benefit of weighted IPM regularization}
\begin{figure}
    \centering
    \includegraphics[width=\linewidth]{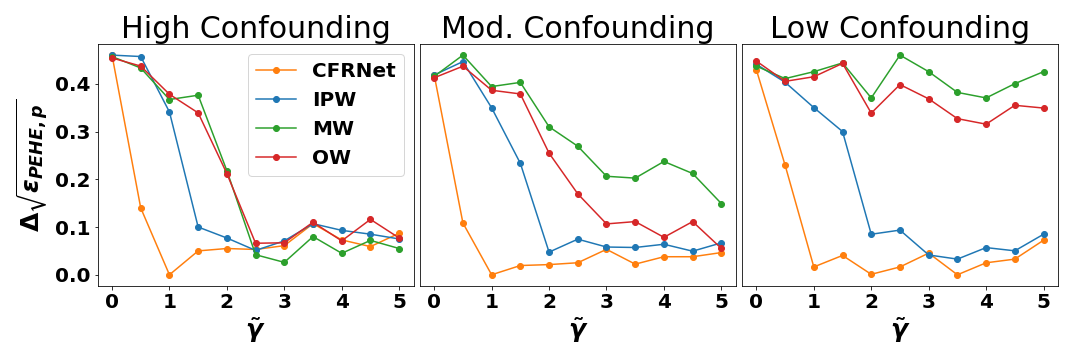}
    \caption{Improvement in $\sqrt{\epsilon_{\textup{PEHE},p}}$ between the model with and without the IPM term, denoted $\Delta\sqrt{\epsilon_{\textup{PEHE},p}}$, \textit{vs.} imbalance parameter $\Tilde{\gamma}$. The colored bands are standard errors over 20 realizations. Higher is better.}
    \label{fig:IPM_ablation_clean}
\end{figure}
We seek to understand the benefit of the weighted IPM term in our objective formulation. We make this comparison by taking the difference between the best $\sqrt{\epsilon_{\textup{PEHE},p}}$ across all values of $\alpha$ (\textit{i.e.}, the plots shown in Figure \ref{fig:min_alpha_clean}), and the $\sqrt{\epsilon_{\textup{PEHE},p}}$ for $\alpha=0$ (\textit{i.e.}, without the IPM term).
The benefit of using the weighted IPM term vs. an unweighted IPM term is immediately visible from Figure \ref{fig:IPM_ablation_clean}, especially in cases of moderate to high imbalance. A likely explanation for this is that the IPM term is attempting to match the \textit{weighted} distributions in representation space rather than the unweighted ones, which means it is less prone to ``erasing'' information from confounders.
Finally, the IPM term when $\Tilde{\gamma}$=0 should theoretically be 0, since the treatment and control covariate distributions are the same. However, there still may be imbalance between the drawn \textit{samples}, but the Wasserstein distance would vanish with increasing sample size \citep{sriperumbudur2009integral}.
\subsection{Infant Health and Development Program} \label{sec:IHDP}
The Infant Health and Development Program (IHDP) dataset \citep{Hill} is semi-simulated (real covariates with simulated outcomes) measuring the effect of home visits from a trained provider on children's cognitive test scores. This dataset has a more realistic covariate distribution than the above synthetic data, but we cannot control the degree of imbalance.
We report out-of-sample results on the IHDP1000 dataset  from \cite{Shalit} in Table \ref{tab:IHDP_results}, showing competitive performance both in terms of CATE prediction ($\sqrt{\epsilon_{\textup{PEHE},p}}$) and ATE prediction ($\epsilon_{\textup{ATE},p}$). For details on model selection and training, see the SM. For results with an IPW-based solution, we point the readers. We note from Table $\ref{tab:IHDP_results}$ that our method (BWCFR) outperforms many classical causal inference methods, such as the causal forest. This is in part because our method benefits from automated representation learning (via the map $\Phi(x)$) upstream of the outcome models $h(\Phi(x),1),h(\Phi(x),0)$. In the next section, we explore what happens when we leverage the learned features and weights to benefit classical methods with appealing statistical features (\textit{e.g.}, CATE uncertainty estimates).
\begin{savenotes}
\begin{table}[!h]
    \caption{Results on IHDP1000 test set. The top block consists of baselines from recent work. The bottom block consists of our proposed methods. Lower is better.}
    \centering
    \begin{scriptsize}
    \begin{tabular}{@{}lcc@{}}
    \toprule
     Model &  $\sqrt{\epsilon_{\textup{PEHE},p}}$ &   $\epsilon_{\textup{ATE},p}$\\
     \midrule
     OLS-1 \citep{johansson2016learning}& $5.8 \pm .3$ & $.94 \pm .06$\\
        OLS-2  \citep{johansson2016learning}& $2.5 \pm .1$ & $.31 \pm .02$\\ 
        BLR   \citep{johansson2016learning}& $5.8 \pm .3$ & $.93 \pm .05$\\
        $k$-NN \citep{crump2008nonparametric}& $4.1 \pm .2$ & $.79 \pm .05$\\
        BART  \citep{chipman2010bart}& $2.3 \pm .1$ & $.34 \pm .02$\\
        Random Forest  \citep{breiman2001random}& $6.6 \pm .3$ & $.96 \pm .06$\\
        Causal Forest \citep{wager2018estimation}& $3.8 \pm .2$ & $.40 \pm .03$ \\
        BNN \citep{johansson2016learning} & $2.1 \pm .1$ & $.42 \pm .03$\\
        TARNET \citep{Shalit}& $.88\pm .02 $ & $.26\pm .01 $\\
        
        CFRNet-Wass \citep{Shalit}& $.76 \pm .02$ & $.27 \pm .01$\\
        CFR-ISW\footnote{These methods reported their results on the IHDP100 dataset (equivalent to IHDP1000, but with 100 repetitions instead of 1000). For a comparison on IHDP100, see SM.\label{fn:ihdp100}} \citep{hassanpour2019counterfactual} & $.70 \pm .1$ & $.19 \pm .03$ \\
        RCFR\footnoteref{fn:ihdp100}\citep{johansson2018learning} & $.67 \pm .05$ & -  \\
        CMGP \citep{alaa2017bayesian}& $.74 \pm .11$ & -\\

        DKLITE \citep{zhang2020learning}& $.65 \pm .03$ & - \\
        [5pt]
        BWCFR-MW (Ours) &  .66 $\pm$ .02 &  {\bf .18 $\pm$ .01} \\
        BWCFR-OW (Ours) &   .65 $\pm$ .02 &  {\bf .18 $\pm$ .01} \\
        BWCFR-TruncIPW (Ours) &  {\bf .63 $\pm$ .01} &  .19 $\pm$ .01 \\
        \bottomrule
    \end{tabular}
    \end{scriptsize}
    \label{tab:IHDP_results}
\end{table}
\end{savenotes}
\subsection{Improving causal forest with the balanced representations learned}
\label{sec:CF}
We further examine the extent to which the learned balanced representations of our proposal can facilitate other causal learning algorithms. In particular, we quantitatively assess the potential gains for causal forests (CF; \citeauthor{wager2018estimation}, \citeyear{wager2018estimation}), and report our findings in Table \ref{tab:causal_forest_ihdp_acic}. In the first experiment we evaluate the performance difference with and without the learned balanced representation and weights on the IHDP100 dataset \citep{Hill,Shalit} w.r.t. both the individual and population level metrics ({\it i.e.}, $\sqrt{\epsilon_{\textup{PEHE},p}}$,  $\epsilon_{\textup{ATE},p}$). Also, we examine the proportion of datasets (out of 77) in the ACIC2016 benchmark \citep{dorie2019,alaa19a_validating} for which the learned representations and weights improve ({\it i.e.}, $\%\downarrow\sqrt{\epsilon_{\textup{PEHE},p}}$ and $\%\downarrow\epsilon_{\textup{ATE},p}$, respectively), compared to a ``vanilla'' CF model. For both datasets we observe substantial gains in both CATE and ATE estimation (relative to the vanilla CF trained on the original covariates), which demonstrates the effectiveness of using pre-balanced representations and weights, in this case learned by our model, to augment other causal models.

To note, our methods on IHDP (bottom-right of Table \ref{tab:IHDP_results}) still outperform that of the causal forest (in terms of $\sqrt{\epsilon_{\textup{PEHE},p}}$), even when the causal forest has access to the same balanced representations and weights learned (bottom-left of Table \ref{tab:causal_forest_ihdp_acic}). One potential explanation is that tree-based learner lacks the sophistication to decode the rich representation encoded by a more flexible neural net. 
It would be interesting to explore an end-to-end optimization strategy that combines our proposed representation engineering and the causal forest model; we leave this for future work.
For details about hyperparameter tuning and additional analyses, see the SM.

\begin{table}[!h]
    \centering
    \caption{Causal forest (CF) results. The top block is a vanilla CF model. The bottom block consists of causal forest models using the learned representations and weights. The bottom block rows are the weights used in the objective \eqref{eq:objective} and as the per-sample weights to train the CF. The left block shows $\sqrt{\epsilon_{\textup{PEHE},p}}$ and $\epsilon_{\textup{ATE},p}$ results on the IHDP dataset (lower is better), and the right block shows $\%\downarrow\sqrt{\epsilon_{\textup{PEHE},p}}$ and $\%\downarrow\epsilon_{\textup{ATE},p}$ results on the ACIC2016 dataset (higher is better).}
    \begin{scriptsize}
    \begin{tabular}{@{}lcc@{\hskip 8pt}cc@{}}
    \toprule
        &\multicolumn{2}{c}{IHDP100}&\multicolumn{2}{c}{ACIC2016}\\
     &  $\sqrt{\epsilon_{\textup{PEHE},p}}$ &  $\epsilon_{\textup{ATE},p}$ & $\downarrow\sqrt{\epsilon_{\textup{PEHE},p}}$& $\downarrow\epsilon_{\textup{ATE},p}$
     \\
     \midrule
     CF & 3.54 $\pm$ .58 & .47 $\pm$ .06 & - & -\\
     [5pt]
     CF + MW & 1.51 $\pm$ .31 & .20 $\pm$ .03 & 92.2\% & 89.6\% \\
     CF + OW & 1.59 $\pm$ .31 & .19 $\pm$ .03 & 93.5\% & 85.7\%\\
     CF + TruncIPW & 1.55 $\pm$ .35 & .22 $\pm$ .03 & 87.0\% & 71.4\%\\   \bottomrule
    \end{tabular}
    \end{scriptsize}
    \label{tab:causal_forest_ihdp_acic}
\end{table}

\section{CONCLUSIONS}
We show that the use of balancing weights complements representation learning in mitigating covariate imbalance.
Our claims are supported with theoretical results and evaluations on synthetic datasets and realistic test benchmarks, reporting better or competitive performance throughout. Further, we demonstrated how our learned balanced features can augment other causal inference procedures, towards the goal of building more reliable and accurate hybrid solutions. Directions for future work include \textit{learning} the tilting function $f$ rather than selecting it in order to determine an ``optimal'' target population, as well as exploring more advanced weighting approaches \citep{hainmueller2012entropy,Jose_stable_weights,ozery2018adversarial}. Additionally, the current form of Assumption \ref{assum:MSM} is somewhat restrictive (it posits the existence of $\Gamma$ such that odds ratio is bounded \textit{for all} $x$). Softening it (\textit{e.g.}, by bounding the odds ratio, integrated over the population of interest) would provide a more realistic form of Assumption \ref{assum:MSM}, which would make Proposition \ref{prop:KL_bound} more useful. Finally, an end-to-end approach to training the propensity and regression models (as in \citeauthor{hassanpour2020}, \citeyear{hassanpour2020}) seems like a promising alternative to our current two-step training procedure.
\subsubsection*{Acknowledgements}
The research reported here was supported in part by the DOE, NSF and ONR.
\setcounter{section}{0}
\renewcommand{\thesection}{S\arabic{section}}
\setcounter{table}{0}
\renewcommand{\thetable}{S\arabic{table}}
\setcounter{figure}{0}   
\renewcommand{\thefigure}{S\arabic{figure}}
\newpage
\onecolumn
\aistatstitleSM{Supplementary Material: Counterfactual Representation Learning with Balancing Weights}
\section{Theory}
\subsection{Proof of Proposition 1}
\begin{proposition_SM}[Balancing Property]
\label{prop:balancing_SM}
Given the true propensity score $e(x)$, the reweighted treatment and control arms both equal the target distribution. In other words, $g(x|T=1) = g(x|T=0) = g(x)$
\end{proposition_SM}

\begin{proof}
\begin{align}
g(x|T=1)\trianglepropto w(x,1)p(x|T=1) = \frac{f(x)}{e(x)}p(x|T=1) = \frac{f(x)\textup{Pr}(X=x|T=1)}{\textup{Pr}(T=1|X=x)} \propto f(x)p(x) \propto g(x)
\end{align}
Similarly, we can also show that $g(x|T=0) = g(x)$.
\end{proof}

\subsection{Proof of Proposition 2}
\begin{assumption_SM}
\label{assum:MSM_SM}
The odds ratio between the model propensity score and true propensity score is bounded, namely:
\begin{align}
    \exists~\Gamma\geq 1~~s.t.~\forall x\in \mathcal{X},~~\frac{1}{\Gamma} \leq \frac{e(x)(1-e_\eta(x))}{e_\eta(x)(1-e(x))} \leq \Gamma 
\end{align}
\end{assumption_SM}
This assumption is conceptually related to the Marginal Sensitivity Model of \cite{kallus19a} in that it measures the gap between two propensity functions -- we use it here to quantify the gap between true and model propensities rather than the degree of unobserved confounding.
\begin{proposition_SM}[Generalized Balancing]
\label{prop:KL_bound_SM}
Under Assumption \ref{assum:MSM_SM}, and assuming that all tilting functions $f$ satisfy $f(x)>0~~\forall x\in\mathcal{X}$, we have:
$$D_{KL}(g_{\eta}(x|T=1)||g_{\eta}(x|T=0)) \leq 2\cdot\log\Gamma,$$
where $D_{KL}$ is the KL-divergence.
\end{proposition_SM}
\begin{proof}
First, we write the reweighted treatment group distribution as follows:
\begin{align}
    g_\eta(x|T=1) \trianglepropto w_\eta(x,1)p(x|T=1) =  \frac{f_\eta(x)}{e_\eta(x)}p(x|T=1),
\end{align} where we write $f_\eta$ since the tilting function is (in general) computed from the propensity score model. With $f(x)$ the ``true'' tilting function (\textit{i.e.}, the tilting function computed from the true propensity $e(x)$), we may write:
\begin{align}
    g_\eta(x|T=1)\propto \frac{f(x)}{e(x)}p(x|T=1)\frac{f_\eta(x)}{f(x)}\frac{e(x)}{e_\eta(x)} \propto g(x) \frac{f_\eta(x)}{f(x)}\frac{e(x)}{e_\eta(x)}
\end{align}
where the last equality holds from Proposition 1.
Similarly, we can write the reweighted control group distribution as $$ g_\eta(x|T=0)\propto g(x) \frac{f_\eta(x)}{f(x)}\frac{1-e(x)}{1-e_\eta(x)}.$$
Now, computing the KL-divergence between $g_\eta(x|T=1)$ and $g_\eta(x|T=0)$, we get:
\begin{align}
D_{KL}(g_\eta(x|T=1)||g_\eta(x|T=0)) &= \int_\mathcal{X} g_\eta(x|T=1)\log\left[\frac{\frac{1}{Z_1} g(x)\frac{f_\eta(x)}{f(x)}\frac{e(x)}{e_\eta(x)}}{\frac{1}{Z_0}g(x)\frac{f_\eta(x)}{f(x)}\frac{1-e(x)}{1-e_\eta(x)}}\right]dx \label{eq:proof_KLbound}
\end{align}
where $Z_1 \triangleq \int_\mathcal{X}g(x)\frac{f_\eta(x)}{f(x)}\frac{e(x)}{e_\eta(x)} dx$ and $Z_0 \triangleq \int_\mathcal{X} g(x)\frac{f_\eta(x)}{f(x)}\frac{1-e(x)}{1-e_\eta(x)} dx$.
Simplifying \eqref{eq:proof_KLbound} further, we get:
\begin{align}
D_{KL}(g_\eta(x|T=1)||g_\eta(x|T=0)) &= \int_\mathcal{X} g_\eta(x|T=1)\left[ \log\frac{Z_0}{Z_1} + \log\frac{e(x)(1-e_\eta(x))}{e_\eta(x)(1-e(x))}\right]dx\\
&\overset{(*)}{\leq} \int_\mathcal{X} g_\eta(x|T=1)\left[ \log\frac{Z_0}{Z_1} + \log\Gamma\right]dx = \log\frac{Z_0}{Z_1} + \log\Gamma \label{eq:proof_KLbound2}
\end{align}
where $(*)$ holds from Assumption \ref{assum:MSM_SM}.
Notice that we may relate $Z_1$ and $Z_0$ as follows:
\begin{align}
    Z_0 \triangleq\int_\mathcal{X}g(x)\frac{f_\eta(x)}{f(x)}\frac{1-e(x)}{1-e_\eta(x)}dx  \overset{(**)}{\leq} \int_\mathcal{X}g(x)\frac{f_\eta(x)}{f(x)}\Gamma\frac{e(x)}{e_\eta(x)} dx = \Gamma Z_1
\end{align}
where $(**)$ also holds from Assumption \ref{assum:MSMSM}.
Hence $\log\frac{Z_0}{Z_1} \leq \log\Gamma$ -- plugging this into \eqref{eq:proof_KLbound2} yields:
\begin{align}
    D_{KL}(g_\eta(x|T=1)||g_\eta(x|T=0)) \leq \log\Gamma + \log\Gamma = 2\log\Gamma.
\end{align}
\end{proof}
\begin{corollary} \label{cor:KL_bound}
The bound presented in Proposition \ref{prop:KL_bound_SM} also holds for the induced distributions $g_{\Phi,\eta}(r|T=1),g_{\Phi,\eta}(r|T=0)$ from $g_\eta(x|T=1),g_\eta(x|T=0)$ (respectively) via any invertible map $\Phi:\mathcal{X}\rightarrow\mathcal{R}$ (with inverse $\Psi$), namely:
\begin{align}
    D_{KL}(g_{\Phi,\eta}(r|T=1)||g_{\Phi,\eta}(r|T=0)) \leq 2\log\Gamma.
\end{align}
\end{corollary}
\begin{proof}
To see this, we can write:
\begin{align}
g_{\Phi,\eta}(r|T=1) &\overset{(*)}{\propto} g_\eta(\Psi(r)|T=1)|\textup{det}(\Psi')|\propto \frac{f(\Psi(r))}{e(\Psi(r))}p(\Psi(r)|T=1)\frac{f_\eta(\Psi(r))}{f(\Psi(r))}\frac{e(\Psi(r)}{e_\eta(\Psi(r))}|\textup{det}(\Psi')| \\&\propto g(\Psi(r))\frac{f_\eta(\Psi(r))}{f(\Psi(r))}\frac{e(\Psi(r))}{e_\eta(\Psi(r))}|\textup{det}(\Psi')| \label{eq:g_phi_1}
\end{align}
where $\textup{det}(\Psi')$ is the determinant of the Jacobian of $\Psi$, and $(*)$ holds from the change-of-variables formula. Similarly, we can write $g_{\Phi,\eta}(r|T=0)$ as:
\begin{align} \label{eq:g_phi_0}
    g_{\Phi,\eta}(r|T=0) \propto g(\Psi(r))\frac{f_\eta(\Psi(r))}{f(\Psi(r))}\frac{1-e(\Psi(r))}{1-e_\eta(\Psi(r))}|\textup{det}(\Psi')|
\end{align}
Computing the KL divergence between \eqref{eq:g_phi_1} and \eqref{eq:g_phi_0} is then similar to the proof of Proposition \ref{prop:KL_bound_SM}, and the same bound holds.
\end{proof}
\subsection{Proof of Proposition 3}
\begin{definition}
The total variation distance (TVD) between distributions $p$ and $q$ on $\mathcal{R}$ is defined as 
\begin{align}
    \delta(p,q) \triangleq \frac12\cdot \underset{m: ||m||_\infty \leq 1}{\textup{sup}} \Big\{\int_\mathcal{R}m(r)(p(r)-q(r))dr\Big\}
\end{align}
\end{definition}
\begin{lemma}\label{lem:tvd_bound}
Under Assumption \ref{assum:MSM_SM}, the total variation distance between the reweighted representation distribution for the treatment and control groups is upper bounded as:
\begin{align}
    \delta(g_{\Phi,\eta}(r|T=1),g_{\Phi,\eta}(r|T=0)) \leq \sqrt{\log\Gamma}
\end{align}
\end{lemma}
\begin{proof}
\begin{align}
    \delta(g_{\Phi,\eta}(r|T=1),g_{\Phi,\eta}(r|T=0)) \overset{(*)}{\leq}
    \sqrt{\frac12 D_{KL}(g_{\Phi,\eta}(r|T=1)||g_{\Phi,\eta}(r|T=0))} 
   \overset{(**)}{\leq} \sqrt{\log\Gamma}
\end{align}
where $(*)$ follows from Pinsker's inequality, and $(**)$ follows from Corollary \ref{cor:KL_bound}.
\end{proof}

\begin{proposition_SM}\label{prop:wass_mmd_bounds_SM}
Under Assumption \ref{assum:MSM_SM}, assuming the representation space $\mathcal{R}$ is bounded, and assuming the tilting functions satisfy $f(x)>0~\forall x \in \mathcal{X}$, the following bounds hold:
\begin{align}
    &\mathcal{W}(g_{\Phi,\eta}(r|T=1),g_{\Phi,\eta}(r|T=0)) \leq \textup{diam}(\mathcal{R})\sqrt{\log\Gamma} \notag\\
    &\textup{MMD}_k(g_{\Phi,\eta}(r|T=1),g_{\Phi,\eta}(r|T=0)) \leq 2\sqrt{C_k\log\Gamma},
\end{align}
where $\mathcal{W}$ is the Wasserstein distance, $\textup{diam}(\mathcal{R}) \triangleq \textup{sup}_{r,r'\in\mathcal{R}}||r-r'||_2$, $\textup{MMD}_k$ is the MMD with kernel $k$, and $C_k \triangleq \textup{sup}_{r\in\mathcal{R}}k(r,r)$.
\end{proposition_SM}
\begin{proof}
\begin{align}
\mathcal{W}(g_{\Phi,\eta}(r|T=1),g_{\Phi,\eta}(r|T=0)) \overset{(*)}{\leq} \textup{diam}(\mathcal{R})\delta(g_{\Phi,\eta}(r|T=1),g_{\Phi,\eta}(r|T=0)) \overset{(**)}{\leq} diam(\mathcal{R})\sqrt{\log\Gamma}
\end{align}
where $(*)$ holds from Theorem 4 of \cite{TVD_Wass_bound}, and $(**)$ holds from Lemma \ref{lem:tvd_bound}.
\begin{align}
    \textup{MMD}_k(g_{\Phi,\eta}(r|T=1),g_{\Phi,\eta}(r|T=0)) \overset{(*)}{\leq} 2\sqrt{C_k}\delta(g_{\Phi,\eta}(r|T=1),g_{\Phi,\eta}(r|T=0))\overset{(**)}{\leq} 2\sqrt{C_k\log\Gamma}
\end{align}
where $(*)$ holds from Theorem 14-ii of \cite{sriperumbudur2009integral}, and $(**)$ holds from Lemma \ref{lem:tvd_bound}.
\end{proof}

\subsection{Relationship between $\epsilon_{\textup{PEHE},p}$ and $\epsilon_{\textup{PEHE},g}$}\label{sec:pehe_equivalence_supp}
In this section, we establish a relationship between $\epsilon_{\textup{PEHE},p}$ and $\epsilon_{\textup{PEHE},g}$ which explains why targeting the population $g(x)\trianglepropto f(x)p(x)$ for ITE prediction may also aid ITE prediction on the original covariate distribution $p(x)$.
As a reminder, Table \ref{tab:weights_supp} and Figure \ref{fig:target_populations} shows the different tilting functions of interest and their corresponding weighting schemes. $e(x)\triangleq \textup{Pr}(T=1|X=x)$ is the propensity score.
The weight schemes we use here have been carefully examined in classical causal inference literature \citep{Crump2009,Fan_overlap,MW}. Specifically, the Matching Weights \citep{MW} were designed as a weighting analogue to matching, the Truncated IPW weights \citep{Crump2009} were used to estimate a low-variance average treatment effect for a subpopulation, and the Overlap Weights \citep{Fan_overlap} were proven to minimize (out of all the possible balancing weights) the asymptotic variance of the estimated weighted average treatment effect. Figure \ref{fig:target_populations} shows how TruncIPW, MW, and OW place a specific emphasis on regions of good overlap in covariate space.
\begin{table}[!ht]
\caption{\label{tab:weights_supp} \small Choices of tilting function $f(x)$ and associated weight schemes $w(x,t)$ (see equation (6) in the main text). Note $\mathds{1}(\cdot)$ is the indicator function. We set $\xi=0.1$ as in \cite{Crump2009}.}
\centering
\begin{tabular}{@{}cc@{}}
\toprule
\textbf{Tilting function $f(x)$} & \textbf{Associated weight scheme $w(x,t)$}\\
\midrule
 $1$ & Inverse Probability Weights (IPW)\\ {\footnotesize${\mathds{1}(\xi<e(x)<1-\xi)}$}&
Truncated IPW (TruncIPW) \\
${\text{min}(e(x),1-e(x))}$ & Matching Weights (MW) \\
$e(x)(1-e(x))$ & Overlap Weights (OW)\\
\bottomrule
\end{tabular}
\end{table}
\begin{figure}[H]
    \centering
    \includegraphics[width=\textwidth]{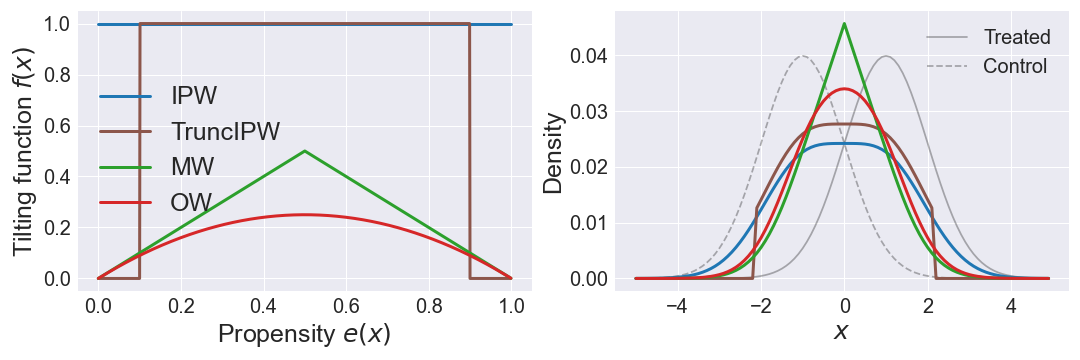}
     \caption{(Left) Tilting functions $f(x)$ used. (Right) Illustrative treatment group densities $p(x|T=t)$, and reweighted densities $g(x)\propto f(x)p(x)$ for different $f(x)$. TruncIPW, MW, and OW specifically emphasize regions of good overlap between the treatment and control groups.}
    \label{fig:target_populations}
\end{figure}
\begin{definition}[$\delta$-strict overlap]
$\exists\delta\in(0,0.5): \forall x\in\mathcal{X}~\delta<e(x)<1-\delta$. 
\end{definition}
\begin{definition}[$\epsilon_{\textup{PEHE},g}$]
 $$\epsilon_{\textup{PEHE},g}(\hat{\tau}) \triangleq \int_{\mathcal{X}} (\tau(x)-\hat{\tau}(x))^2g(x)dx, $$
 where $\tau(x)\triangleq \mathbb{E}[Y(1)-Y(0)|X=x]$ is the true individual treatment effect, and $\hat{\tau}$ is an estimate of $\tau(x)$. We often omit $\hat{\tau}$ from $\epsilon_{\textup{PEHE}}(\hat{\tau})$ for brevity.
\end{definition}
\begin{proposition_SM}\label{prop:pehe_equiv}
Assuming $\delta$-strict overlap, for all the tilting functions presented in Table \ref{tab:weights_supp} (for $f(x)=\mathds{1}(\xi<e(x)<1-\xi)$, the additional condition $\delta\geq\xi$ is required), we have:
$$
    A_f \cdot \epsilon_{PEHE,g}(\hat{\tau})\leq \epsilon_{PEHE,p}(\hat{\tau})\leq B_f\cdot\epsilon_{PEHE,g}(\hat{\tau}),
$$
where $A_f$ and $B_f$ are constants depending on $f$, $p(x)\triangleq \textup{Pr}(X_i=x)$, and $g(x)\trianglepropto f(x)p(x)$
\end{proposition_SM}
\begin{proof}
For all the tilting functions $f(x)$ in Table 1, we have $\textup{sup}_x f(x)<\infty$. Assuming $\delta$-strict overlap (and for $f(x)=\mathds{1}(\xi<e(x)<1-\xi)$, assuming $\delta\geq \xi$) we also get (for all $f(x)$ in Table \ref{tab:weights_supp}) that $\textup{inf}_x f(x)>0$.
Since $g(x)\trianglepropto f(x)p(x)$ by definition (with $p(x)\triangleq \textup{Pr}(X=x)$ the marginal covariate density), we may write:
\begin{align}
    \epsilon_{PEHE,g} &\triangleq \int_{\mathcal{X}} (\tau(x)-\hat{\tau}(x))^2g(x)dx\\
    &= \int_{\mathcal{X}} (\tau(x)-\hat{\tau}(x))^2\frac{f(x)p(x)}{Z_f}dx
\end{align}
where $Z_f \triangleq \int_\mathcal{X} f(x)p(x)dx$.
We may bound this expression above and below via:

\begin{align}
   &\int_{\mathcal{X}} (\tau(x)-\hat{\tau}(x))^2\frac{\textup{inf}_x[f(x)]p(x)}{Z_f}dx \leq \epsilon_{PEHE,g} \leq \int_{\mathcal{X}} (\tau(x)-\hat{\tau}(x))^2\frac{\textup{sup}_x[f(x)]p(x)}{Z_f}dx\\
   \Rightarrow &\frac{\textup{inf}_x[f(x)]}{Z_f}\int_{\mathcal{X}} (\tau(x)-\hat{\tau}(x))^2p(x)dx \leq \epsilon_{PEHE,g} \leq \frac{\textup{sup}_x[f(x)]}{Z_f}\int_{\mathcal{X}} (\tau(x)-\hat{\tau}(x))^2p(x)dx\\
   \Rightarrow &\frac{\textup{inf}_x[f(x)]}{Z_f}\cdot \epsilon_{PEHE,p}\leq \epsilon_{PEHE,g} \leq \frac{\textup{sup}_x[f(x)]}{Z_f}\cdot \epsilon_{PEHE,p}
\end{align}

Defining $B_f \triangleq \frac{Z_f}{\textup{inf}_x[f(x)]}$ and $A_f \triangleq \frac{Z_f}{\textup{sup}_x[f(x)]}$:
\begin{align}
    \frac{1}{B_f} \cdot \epsilon_{PEHE,p}\leq \epsilon_{PEHE,g} \leq \frac{1}{A_f}\cdot\epsilon_{PEHE,p}
\end{align}
Which we may also write as:
\begin{align}
    A_f \cdot \epsilon_{PEHE,g}\leq \epsilon_{PEHE,p} \leq B_f \cdot\epsilon_{PEHE,g} \label{eq:pehe_norms}
\end{align}
\end{proof}

Proposition \ref{prop:pehe_equiv} gives a ``two birds, one stone'' property, whereby $\epsilon_{\textup{PEHE},p}$ may also be minimized when $\epsilon_{\textup{PEHE},g}$ is minimized. This is a possible justification for why targeting the population $g(x)$ (via minimizing an upper bound on $\epsilon_{\textup{PEHE},g}$) may also benefit ITE estimation on the observed population $p(x)$.

\subsection{$\epsilon_{\textup{PEHE},g}$ bound}\label{sec:bound_pehe_supp}
Here, we establish conditions for which the bound on $\epsilon_{\textup{PEHE},g}$ (equation (8) in the main text) holds.
\begin{proposition_SM}
Assuming the encoder $\Phi$ is invertible, and assuming $\frac{1}{\alpha}\ell_{h,\Phi}\in G$ for a function class $G$ and a constant $\alpha$, we have:\\
\begin{align}
    \epsilon_{\textup{PEHE},g}\leq 2\cdot(\epsilon_{F,g}^{T=1} + \epsilon_{F,g}^{T=0}) + \alpha\cdot \textup{IPM}_G(g_\Phi(r|T=1),g_\Phi(r|T=0)) + C\triangleq B,
\end{align}
where $C$ is a constant w.r.t. model parameters, and $g_\Phi(r|T=t)$ is the distribution induced by the invertible map $\Phi$ from the distribution $g(x|T=t)$ (for $t\in\{0,1\}$).
\end{proposition_SM}
\begin{proof}
The proof follows straightforwardly by applying Theorem 1 from \cite{Shalit} on the target population $g(x)$.
\end{proof}

\section{Finite-sample objective}\label{sec:finite_sample}
From equation (8) in the main text, we know $\epsilon_{\textup{PEHE},g}\leq B$.
From equation (9) in the main text, we have:
\begin{align}
    B \approx 2\cdot(\epsilon_{F,g_\eta}^{T=1}+ \epsilon_{F,g_\eta}^{T=0}) + \alpha\cdot \textup{IPM}_G(g_{\Phi,\eta}(r|T=1),g_{\Phi,\eta}(r|T=0)) + C\label{eq:bound_approx_sup}
\end{align}

We would like to obtain a finite-sample estimate of $B$ (shown in equation (11) in the main text).
\subsection{Finite-sample factual error terms $\epsilon_{F,g_\eta}^{T=1}$,$\epsilon_{F,g_\eta}^{T=0}$}
We will start by estimating the first 2 terms in \eqref{eq:bound_approx_sup}, choosing $\epsilon_{F,g_\eta}^{T=1}$ WLOG.

\begin{align}
\epsilon_{F,g_\eta}^{T=1} &\triangleq \int_{\mathcal{X}}\ell_{h,\Phi}(x,1)g_\eta(x|T=1)dx = \int_{\mathcal{X}}\ell_{h,\Phi}(x,1)\frac{w_\eta(x,1)p(x|T=1)}{Z_1}dx
\end{align}
where $Z_1 = \int_\mathcal{X} w_\eta(x,1)p(x|T=1)dx$, and $\ell_{h,\Phi}(x,t) \triangleq \int_\mathcal{Y}L(y,h(\Phi(x),t))\textup{Pr}(Y(t)=y|X=x)dy$, with $L(y,y') = (y-y')^2$.

We may approximate $\epsilon_{F,g_\eta}^{T=1}$ as:
\begin{align}
    \epsilon_{F,g_\eta}^{T=1} &\approx \frac{1}{Z_1\cdot n_1}\sum_{i\in\mathcal{B}:T_i=1}w_\eta(X_i,1)(Y_i-h(\Phi(X_i),1))^2 \label{eq:err_1_approx}
\end{align}
where $\mathcal{B}$ is a sampled batch, and $n_1 \triangleq \sum_{i\in \mathcal{B}} T_i$.\\
The target distribution $g(x)$ is defined as $g(x)\triangleq \frac{f(x)p(x)}{Z}$ where $Z\triangleq \int_\mathcal{X} f(x)p(x)$.
We make the following approximation for $Z_1$:
\begin{align}
Z_1 &\triangleq \int_{\mathcal{X}}w_\eta(x,1)p(x|T=1) \approx \int_{\mathcal{X}}w(x,1)p(x|T=1)dx = \int_{\mathcal{X}}\frac{f(x)}{e(x)}p(x|T=1)dx \\
&= \int_{\mathcal{X}}\frac{f(x)}{\textup{Pr}(T=1)}p(x)dx = \frac{Z}{\textup{Pr}(T=1)}\int_{\mathcal{X}}\frac{f(x)p(x)}{Z}dx = \frac{Z}{\textup{Pr}(T=1)}\int_{\mathcal{X}}g(x)dx \\
&= \frac{Z}{\textup{Pr}(T=1)} \approx \frac{Z\cdot N}{N_1}\end{align}
where $N$ is the number of samples in the dataset, and $N_1 = \sum_{i=1}^N T_i$ is the number of treatment samples in the dataset.
We explicitly construct the batches $\mathcal{B}$ of size $n$ such that $n_1/n = N_1/N$, so we get:
\begin{align}
Z_1 \approx \frac{Z\cdot n}{n_1}
\end{align}
Finally, we plug in the above approximation of $Z_1$ into \eqref{eq:err_1_approx} to get:
\begin{align}
\epsilon_{F,g_\eta}^{T=1} &\approx \frac{1}{Z\cdot n}\sum_{i\in\mathcal{B}:T_i=1}w_\eta(X_i,1)(Y_i-h(\Phi(X_i),1))^2 \label{eq:err_1_approx_final}
\end{align}
Similarly, we may approximate $ \epsilon_{F,g_\eta}^{T=0}$ as:\\
\begin{align}
 \epsilon_{F,g_\eta}^{T=0} &\approx
    \frac{1}{Z\cdot n}\sum_{i\in\mathcal{B}:T_i=0}w_\eta(X_i,0)(Y_i-h(\Phi(X_i),0))^2 \label{eq:err_0_approx_final}
\end{align}
We also tried the approximations $Z_1 \approx \frac{1}{n_1}\sum_{i\in\mathcal{B}:T_i=1}w_\eta(X_i,1)$ and $Z_1 \approx \frac{1}{N_1}\sum_{i:T_i=1}w_\eta(X_i,1)$ (and similar approximations for $Z_0$), but they did not work well in practice.
\subsection{Finite-sample IPM term}
Finally, we seek a Monte-Carlo approximation of the third term in \eqref{eq:bound_approx_sup}. Recalling the definition of $ g_\eta(x|T=1)$, we have:
\begin{align}
    g_\eta(x|T=1) &\triangleq \frac{w_\eta(x,1)p(x|T=1)}{Z_1} \label{eq:def_g_eta_sup}
\end{align}
where $Z_1 \triangleq \int_\mathcal{X}w_\eta(x,1)p(x|T=1)dx$.

We assume that $\Phi(\cdot): \mathcal{X}\rightarrow \mathcal{R}$ is an invertible transformation (with inverse $\Psi$), so it induces distributions $g_{\Phi,\eta}(r|T=1)$ and $p_{\Phi}(r|T=1)$ from $g_\eta(x|T=1)$ and $p(x|T=1)$, respectively.
From the change of variables formula:
\begin{align}
    g_{\Phi,\eta}(r|T=1) = g_\eta(\Psi(r)|T=1)\cdot |\textup{det}(\Psi')|
\end{align}
where $\Psi'$ is the Jacobian of $\Psi$, and $\textup{det}(\cdot)$ is the determinant. From \eqref{eq:def_g_eta_sup}, we get:\\
\begin{align}
    g_{\Phi,\eta}(r|T=1) = \frac{w_\eta(\Psi(r),1)}{Z_1}\cdot p(\Psi(r)|T=1)\cdot|\textup{det}(\Psi')|
\end{align}
By the change of variables formula on the last 2 terms above, we get:
\begin{align}
     g_{\Phi,\eta}(r|T=1) = \frac{w_\eta(\Psi(r),1)}{Z_1}\cdot p_\Phi(r|T=1) \label{eq:ratio_1}
\end{align}
We may approximate $g_{\Phi,\eta}(r|T=1)$ from samples in a batch $\mathcal{B}$ as:
\begin{align}
    &g_{\Phi,\eta}(r|T=1) \approx \frac{1}{\sum_{i\in \mathcal{B}: T_i=1}w_\eta(X_i,1)/Z_1}\sum_{i\in \mathcal{B}: T_i=1} \frac{w_\eta(X_i,1)}{Z_1}\delta(r-\Phi(X_i))\\
    &= \frac{1}{\sum_{i\in \mathcal{B}: T_i=1}w_\eta(X_i,1)}\sum_{i\in \mathcal{B}: T_i=1} w_\eta(X_i,1)\delta(r-\Phi(X_i)) \triangleq \hat{g}_{\Phi,\eta}(r|T=1) \label{eq:ratio_1_approx}
\end{align}
Where $\delta(r-z)$ is a point-mass centered at $z$.
Similarly, we can approximate $g_{\Phi,\eta}(r|T=0)$ as:
\begin{align}
    g_{\Phi,\eta}(r|T=0) \approx \hat{g}_{\Phi,\eta}(r|T=0) \triangleq  \frac{1}{\sum_{i\in \mathcal{B}: T_i=0}w_\eta(X_i,0)}\sum_{i\in \mathcal{B}: T_i=0} w_\eta(X_i,0)\delta(r-\Phi(X_i)) \label{eq:ratio_0_approx}
\end{align}

\subsection{Putting it all together}
Plugging \eqref{eq:err_1_approx_final}, \eqref{eq:err_0_approx_final}, \eqref{eq:ratio_1_approx}, and \eqref{eq:ratio_0_approx} into \eqref{eq:bound_approx_sup}, we may write an approximation of the bound $B$ (from \eqref{eq:bound_approx_sup}) as:
\small
\begin{align}
\frac{2}{Z\cdot n}\sum_{i\in \mathcal{B}}w_\eta(X_i,T_i)(Y_i-h(\Phi(X_i),T_i))^2
    +\alpha\cdot \textup{IPM}_G(\hat{g}_{\Phi,\eta}(r|T=1),\hat{g}_{\Phi,\eta}(r|T=0))
    + C
\end{align}
\normalsize
The above has the same argmin as:\\
\small
\begin{align}
    \mathcal{L}(h,\Phi,\mathcal{B}) &=  \frac{1}{n}\sum_{i\in \mathcal{B}}w_\eta(X_i,T_i)(Y_i-h(\Phi(X_i),T_i))^2
    +\alpha'\cdot \textup{IPM}_G(\hat{g}_{\Phi,\eta}(r|T=1),\hat{g}_{\Phi,\eta}(r|T=0))
\end{align}
\normalsize
for some constant $\alpha'$ (which we leave as $\alpha$ in the main text to avoid introducing more notation).\\
This is the finite-sample objective presented in equation (11) of the main text -- the version presented here is over a mini-batch $\mathcal{B}$, but we omitted this detail from the main text for simplicity.
\subsection{Weighted Integral Probability Metric (IPM) computation}
As a reminder, IPMs \citep{IPM} are defined as follows:
\begin{align}
    \textup{IPM}_G(u,v) = \underset{m\in G}{\textup{sup}}\int_\mathcal{R} m(r)[u(r)-v(r)]dr
\end{align}
where $G$ is a function class, and $u$ and $v$ are probability measures.
In our implementation, similar to \cite{Shalit}, we use two kinds of IPMs: namely, the Wasserstein distance, by setting the function class $G = \{m: ||m||_L \leq 1 \}$ to be the set of 1-Lipschitz functins, and the Maximum Mean Discrepancy (MMD; \citeauthor{GrettonMMD}, \citeyear{GrettonMMD}), by setting $G = \{m: ||m||_\mathcal{H} = 1\}$ to be the set of norm-1 functions in a reproducing kernel Hilbert space $\mathcal{H}$. In this section, we provide details for how to compute these IPMs between the reweighted distributions $g_{\Phi,\eta}(r|T=1)$ and $g_{\Phi,\eta}(r|T=0)$, which is the last term in our objective in equation (11) of the main text.

\paragraph{Finite-sample weighted MMD}
First, suppose the class of functions $G = \{m: ||m||_\mathcal{H}=1\}$ is the set of norm-1 functions in a reproducing kernel Hilbert space (RKHS) $\mathcal{H}$ with corresponding kernel $k(\cdot,\cdot)$. $\textup{IPM}_G$ is then equivalent to the Maximum-Mean Discrepancy (MMD).
From Lemma 4 in \cite{GrettonMMD}, the squared MMD is equal to:
\begin{align}
\textup{MMD}^2(p,q) &= ||\mu_p-\mu_q||_\mathcal{H}^2\\
&= \langle \mu_p,\mu_p\rangle_{\mathcal{H}} + \langle \mu_q,\mu_q\rangle_{\mathcal{H}} - 2\cdot\langle \mu_p,\mu_q\rangle_{\mathcal{H}}
\end{align}
where $\mu_p(\cdot) \triangleq \mathbb{E}_{x \sim p}[k(\cdot,x)]$ and $\mu_q$ is defined similarly.\\
We now wish to get a finite sample estimate of $\textup{MMD}^2(g_{\Phi,\eta}(r|T=1),g_{\Phi,\eta}(r|T=0))$. Assuming $\Phi$ is invertible with inverse $\Psi$, from equation \eqref{eq:ratio_1}, we have:
\begin{align}
g_{\Phi,\eta}(r|T=1) = \frac{w_\eta(\Psi(r),1)p_\Phi(r|T=1)}{Z_1}\\
g_{\Phi,\eta}(r|T=0) = \frac{w_\eta(\Psi(r),0)p_\Phi(r|T=0)}{Z_0}
\end{align}

Where $Z_t = \int_{\mathcal{R}}w_\eta(\Psi(r),t)p_\Phi(r|T=t)$ for $t\in \{0,1\}$.\\
WLOG, we now seek a finite-sample estimate of $\langle \mu_1,\mu_1\rangle_\mathcal{H}$, where $\mu_1 \triangleq \mathbb{E}_{R \sim g_{\Phi,\eta}(r|T=1)}[k(\cdot,R)]$.\\
\begin{align}
\mu_1(\cdot) \approx &\sum_{i\in\mathcal{B}:T_i=1}\frac{w_\eta(X_i,1)}{\sum_{i\in\mathcal{B}:T_i=1}w_\eta(X_i,1)}k(\cdot,\Phi(X_i))\\
\Rightarrow \langle\mu_1,\mu_1\rangle_{\mathcal{H}} \approx &\frac{\sum_{i\in\mathcal{B}:T_i=1}\sum_{j\in\mathcal{B}:T_j=1}w_\eta(X_i,1)w_\eta(X_j,1)k(\Phi(X_i),\Phi(X_j))}{[\sum_{i\in\mathcal{B}:T_i=1}w_\eta(X_i,1)]^2}
\end{align}
Using the V-statistic version \citep{GrettonMMD} of the above, we get:
\begin{align}
 \langle\mu_1,\mu_1\rangle_{\mathcal{H}} \approx &\frac{\sum_{i\in\mathcal{B}:T_i=1}\sum_{j\in\mathcal{B}:T_j=1,j\neq i}w_\eta(X_i,1)w_\eta(X_j,1)k(\Phi(X_i),\Phi(X_j))}{\sum_{i\in\mathcal{B}:T_i=1}\sum_{j\in\mathcal{B}:T_j=1,j\neq i }w_\eta(X_i,1)w_\eta(X_j,1)} \label{eq:mmd_treatment}
\end{align}
Similarly, we can approximate $\langle\mu_0,\mu_0\rangle_\mathcal{H}$ as:
\begin{align}
\langle\mu_0,\mu_0\rangle_{\mathcal{H}} \approx &\frac{\sum_{i\in\mathcal{B}:T_i=0}\sum_{j\in\mathcal{B}:T_j=0,j\neq i}w_\eta(X_i,0)w_\eta(X_j,0)k(\Phi(X_i),\Phi(X_j))}{\sum_{i\in\mathcal{B}:T_i=0}\sum_{j\in\mathcal{B}:T_j=0,j\neq i }w_\eta(X_i,0)w_\eta(X_j,0)]} \label{eq:mmd_control}
\end{align}
Finally, we similarly approximate $\langle\mu_1,\mu_0\rangle_\mathcal{H}$ as:
\begin{align}
\langle\mu_1,\mu_0\rangle_{\mathcal{H}} \approx &\frac{\sum_{i\in\mathcal{B}:T_i=1}\sum_{j\in\mathcal{B}:T_j=0}w_\eta(X_i,1)w_\eta(X_j,0)k(\Phi(X_i),\Phi(X_j))}{\sum_{i\in\mathcal{B}:T_i=1}\sum_{j\in\mathcal{B}:T_j=0}w_\eta(X_i,1)w_\eta(X_j,0)]} \label{eq:mmd_cross}
\end{align}

Finally, we get the finite-sample estimate of $\textup{MMD}^2(g_\eta(r|T=1),g_\eta(r|T=0))$ via:\\
\begin{align}
\textup{MMD}^2(g_\eta(r|T=1),g_\eta(r|T=0)) \approx
\eqref{eq:mmd_treatment}+ \eqref{eq:mmd_control} -2\cdot \eqref{eq:mmd_cross}
\end{align}

In practice we set $k(\cdot,\cdot)$ to either be a linear kernel, i.e. $k(R_i,R_j) = R_i^TR_j$, or a RBF kernel, i.e. $k(R_i,R_j) = \exp(-\frac{||R_i-R_j||_2^2}{\sigma^2})$, where $\sigma$ is set to 0.1.

\paragraph{Finite-sample weighted Wasserstein distance}
For the finite sample approximation of the weighted Wasserstein distance, we use Algorithm 3 of \cite{Cuturi} (shown here in Algorithm \ref{algo:sk} for convenience), with the entropic regularization strength set to $\lambda=10$, and vectors $a\in\mathbb{R}^{n_1}$, $b\in\mathbb{R}^{n_0}$ and matrix $M\in\mathbb{R}^{n_1\times n_0}$ set to:
\begin{align}
    a^{(i)}= \frac{w_\eta(X_i,1)}{\sum_{k\in\mathcal{B}:T_k=1}w_\eta(X_k,1)};\hspace{0.15in}
    b^{(j)}= \frac{w_\eta(X_j,0)}{\sum_{k\in\mathcal{B}:T_k=0}w_\eta(X_k,0)};\hspace{0.15in}
    M^{(i,j)}= ||\Phi(X_i)-\Phi(X_j)||_2
\end{align}
We fix the number of Sinkhorn iterations to $S=10$.

\begin{algorithm}[!h]
        \begin{algorithmic}
	\caption{Sinkhorn-Knopp Algorithm for weighted Wasserstein distance approximation}\label{algo:sk}
          \STATE \textbf{Input} batch $\mathcal{B}$, entropic regularization parameter $\lambda\in\mathbb{R}$, number of Sinkhorn iterations $S$, encoder $\Phi(\cdot)$, propensity score parameters $\eta$\\
          \hfill\\
          \STATE $n_1 = \sum_{i\in\mathcal{B}}T_i$;~~~ $n_0 = \sum_{i\in\mathcal{B}}(1-T_i);$\\
          \hfill\\
          \STATE Compute weight vectors $a\in\mathbb{R}^{n_1},b\in\mathbb{R}^{n_0}$ of empirical approximations $\hat{g}_{\Phi,\eta}(r|T=1),\hat{g}_{\Phi,\eta}(r|T=0)$, as:
          \STATE    $a^{(i)}= \frac{w_\eta(X_i,1)}{\sum_{k\in\mathcal{B}:T_k=1}w_\eta(X_k,1)} ~~\forall i\in \mathcal{B}: T_i=1$;\hspace{0.15in} $b^{(j)}= \frac{w_\eta(X_j,0)}{\sum_{k\in\mathcal{B}:T_k=0}w_\eta(X_k,0)}~~\forall j\in \mathcal{B}: T_j=0;$\\
          \hfill\\
          \STATE Compute pairwise distance matrix $M\in \mathbb{R}^{n_1\times n_0}$ between treatment \& control representations, as:
          \STATE $M^{(i,j)}= ||\Phi(X_i)-\Phi(X_j)||_2 ~~\forall i\in \mathcal{B}: T_i=1, \forall j\in \mathcal{B}: T_j=0$;\\
          \hfill\\
		  \STATE $K=\exp(-\lambda M)$; \hspace{0.15in} \% elementwise exponential
		  \STATE $\tilde{K}=\textup{diag}(a^{-1}) K$;
		  \STATE Initialize $u=a$;

		  \FOR{$s\in[0,...,S-1]$}
		  \STATE \texttt{$u=1./(\tilde{K}(b./(K^Tu)))$}; \hspace{0.15in} \% Sinkhorn iterations
		  \ENDFOR
		  \STATE $v=b./(K^Tu).$
		  \STATE  $T^{\star}_\lambda=\textup{diag}(u)K\textup{diag}(v)$;\\
		  \hfill\\
		  \STATE \textbf{return} $ \textup{Wass}(\hat{g}_{\Phi,\eta}(r|T=1),\hat{g}_{\Phi,\eta}(r|T=0)) \approx \sum_{i,j} T^{\star (i,j)}_\lambda M^{(i,j)}$
        \end{algorithmic}
     \end{algorithm}
\section{Experimental details}
\subsection{Toy experiment}\label{sec:toy_exp_details}
\paragraph{Data-generating parameters}
We specify $\beta_0,\beta_\tau$,and $\gamma \in \mathbb{R}^p$ (from Section 4.1 in the main text) as follows:
\begin{align}
\beta_0 \triangleq \Tilde{\beta_0}\cdot\mathds{1}_\mathcal{B};\hspace{0.5in}\beta_\tau \triangleq\Tilde{\beta_\tau}\cdot\mathds{1}_\mathcal{B};\hspace{0.5in}\gamma \triangleq\Tilde{\gamma}\cdot\mathds{1}_\mathcal{G}
\end{align}
where $\Tilde{\beta_0},\Tilde{\beta_\tau},\Tilde{\gamma} \in \mathbb{R}$,~~$\mathcal{B} \triangleq supp(\beta_0) = supp(\beta_\tau)$, and $\mathcal{G} \triangleq supp(\gamma)$.

Note that $\Tilde{\beta_0},\Tilde{\beta_\tau},\Tilde{\gamma}$ can be used to control the magnitudes of $\beta_0,\beta_\tau,\gamma$ respectively.
Table \ref{tab:toy_details} indicates the value of every parameter used to generate the toy dataset described in the previous section. With the values in Table \ref{tab:toy_details}, we get 33 datasets (each simulated 20 times), indexed by $\Tilde{\gamma}$ (which controls the imbalance) and $\Omega \triangleq |\mathcal{B} \cap \mathcal{G}|$ (which controls the level of confounding).

\begin{table}[h!]
    \centering
    \caption{Data-generating parameters for Section 4.1 in the main text}
\begin{tabular}{p{0.8in}p{3in}p{1.4in}}
\toprule
     \textbf{Parameter} & \textbf{Description} & \textbf{Value/Range}\\
     \midrule
     $N$ & number of data points & 525/225/250 (train/val/test)\\
     $p$ & dimension of covariates &50\\
     $p^*$ & non-zero dimensions in $\beta_0,\beta_\tau,\gamma$ & 20\\
     $\sigma_X^2$ & variance of covariates & 0.05\\
     $\sigma_Y$ & variance of additive Gaussian noise in potential outcomes $Y_i(0),Y_i(1)$ & 1.0\\
     $\rho$ & correlation between covariates & 0.3 \\
     $\Tilde{\beta_0}$ & effective magnitude of $\beta_0$& 1.0\\
     $\Tilde{\beta_\tau}$ & effective magnitude of $\beta_\tau$& 0.3\\
     $ \Tilde{\gamma}$ & imbalance parameter & $\{0,0.5,1.00,...,5.00\}$\\
     $\mathcal{B}$ & support of $\beta_0,\beta_\tau$ & \\
     $\mathcal{G}$ & support of $\gamma$ & \\
     $\Omega$ & confounding parameter: $|\mathcal{B}\cap \mathcal{G}|$ & $\{0,10,20\}$\\
     $\theta$ & True ATE & 3.0\\
     \bottomrule
\end{tabular}
\label{tab:toy_details}
\end{table}

\paragraph{Model hyperparameters}
For the purposes of the toy experiment, we fix a regression neural network architecture, as well as a propensity score network architecture. The only hyperparameter we vary is $\alpha$, which is the strength of the IPM regularization term. This was done for reasons of time efficiency, as well as to have an ``apples-to-apples'' comparison between different weighting schemes used in the regression loss. The model hyperparameter values used are shown in Table \ref{tab:hyperparams}.

\begin{table}[h!]
    \centering
    \caption{Model hyperparameter ranges for toy experiment (middle column) and ``real'' datasets (IHDP/ACIC, right column). ``Wass'' is the Wasserstein distance, ``MMD-linear'' is the MMD with a linear kernel, ``MMD-RBF'' is the MMD with an RBF kernel. $e_\eta(\cdot)$ is the fully-connected neural network predicting the propensity score. ``ELU'' is the exponential linear unit activation, ``ReLU'' is the rectified linear unit activation.}
\begin{tabular}{lll}
\toprule
     \textbf{Hyperparameter} & \multicolumn{2}{c}{\textbf{Value/Range}}\\
     & Toy experiment & IHDP \& ACIC2016 \\
     \midrule
    $\alpha$ (strength of IPM term) & \{0, 0.01, 0.1, 1, 10, 100\} & $\{10^{k/2}\}_{k=-10}^6$\\
     IPM used & Wass & \{Wass, MMD-linear, MMD-RBF\}\\
     Num. hidden layers in $\Phi(\cdot)$ & 1 & \{1,2,3\}\\
     Num. hidden layers in $h(\cdot,t)$ & 1 & \{1,2,3\}\\
    Num. hidden layers in $e_\eta(\cdot)$ & 1 & \{1,2,3\}\\
     $\Phi(\cdot)$  hidden layer dim. & 100 & \{20,50,100,200\}\\
     $h(\cdot,t)$ hidden layer dim. & 100 & \{20,50,100,200\}\\
     $e_\eta(\cdot)$ hidden layer dim. & 10 & \{10,20,30\}\\
     $h(\cdot,t),\Phi$ hidden-layer activations & ELU & ELU\\
     $e_\eta(\cdot)$ hidden-layer activations & ReLU & ReLU\\
     Batch size & 200 & 200\\
     Learning rate & 0.001 & 0.001\\
     Optimizer & Adam & Adam\\
     \bottomrule
\end{tabular}
    \label{tab:hyperparams}
\end{table}
\subsection{Infant Health and Development Program (IHDP)}\label{sec:ihdp_details}
From the IHDP dataset \citep{Hill}, \cite{Shalit} made 2 datasets, named IHDP100 and IHDP1000\footnote{both datasets were downloaded from https://www.fredjo.com/}. We used the former (IHDP100) for parameter tuning/model selection, and the latter (IHDP1000) for evaluation.
For the IHDP dataset, we randomly sampled 100 hyperparameter configurations (the hyperparameter ranges are shown in Table \ref{tab:hyperparams}) -- for each sampled configuration, we train 3 models ( with respective weight schemes MW, OW, TruncIPW). We train on the IHDP100 dataset, perform early stopping based on the validation loss, and we select 3 best models (one for each weighting scheme) according to $\epsilon_{\textup{PEHE},p}^{\textup{NN}}$ on the validation set, where:
\begin{align}
    \epsilon_{\textup{PEHE},p}^{\textup{NN}} &\triangleq \frac{1}{N}\sum_{i=1}^N  [(1-2\cdot T_i)(Y_{j(i)}-Y_i)-(h(\Phi(X_i,1)-h(\Phi(X_i,0)]^2\\
    j(i) &\triangleq \underset{j: T_j = 1-T_i}{\textup{argmin}}||X_i-X_j||_2
\end{align}
This is a proxy for $\epsilon_{\textup{PEHE},p}$ which does not make use of counterfactual information.
After the model tuning stage on IHDP100, we report 3 results (1 for each weight scheme) on the IHDP1000 dataset.

For the causal forest results in Section 4.3 of the main text, we obtained the representations and weights (obtained from our 3 best models) for IHDP100, and used them as input to a causal forest (CF) algorithm. We then compared the augmented CF models to a vanilla CF model on IHDP100. More details are provided in the section below.
\subsection{Causal Forests}
\paragraph{IHDP100 weight ablation}
In addition to comparing the vanilla CF with the CF augmented with learned weights and representations, we add a comparison to the CF augmented with the representations only (\textit{i.e.}, without weights). We find that the unweighted augmented CF (``CF+$\Phi$'' in Table \ref{tab:causal_forest_ihdp}) performs similarly to its weighted counterparts for the IHDP100 dataset.
\begin{table}[!h]
    \vspace{-1mm}
    \centering
    \caption{Causal forest (CF) results for IHDP100. The top block is a vanilla CF model. The middle block is a causal forest model using learned representations (denoted $\Phi$) without weights (\textit{i.e.}, the equivalent of CFRNet). The bottom block consists of causal forest models using the learned representations and weights. The bottom block rows are the weights used in the training objective and as the per-sample weights to train the CF.}
    \begin{tabular}{@{}lcc@{}}
    \toprule
     &  $\sqrt{\epsilon_{\textup{PEHE},p}}$ &  $\epsilon_{\textup{ATE},p}$\\
     \midrule
     CF & 3.54 $\pm$ .58 & .47 $\pm$ .06\\
     \midrule
     CF + $\Phi$ & 1.52 $\pm$ .35 & $.20 \pm .04$\\
     \midrule
     CF + $\Phi$ + MW & 1.51 $\pm$ .31 & .20 $\pm$ .03\\
     CF + $\Phi$ + OW & 1.59 $\pm$ .31 & .19 $\pm$ .03\\
     CF + $\Phi$ + TruncIPW & 1.55 $\pm$ .35 & .22 $\pm$ .03\\   \bottomrule
    \end{tabular}
    \vspace{-4mm}
    \label{tab:causal_forest_ihdp}
\end{table}
\paragraph{Atlantic Causal Inference Competition 2016 (ACIC2016)}
In Section 4.3 of the main text, we considered the ACIC2016 dataset \citep{dorie2019}, which comprises 77 datasets (we use 10 repetitions of each)\footnote{generated using https://github.com/vdorie/aciccomp/tree/master/2016, and setting \texttt{parameterNum} between 1 and 77, and \texttt{simulationNum} between 1 and 10.}, each with 4802 samples, and 58-dimensional covariates. ACIC2016 uses the same covariates for all the datasets, but different data-generating mechanisms for potential outcomes and treatment across datasets. We removed the categorical covariates (named $x_2,x_{21},x_{24}$ in the dataset), since our models are not equipped to handle categorical data. We standardized the remaining 55 covariate dimensions (\textit{i.e.}, for each dimension we subtract the mean and divide by the standard deviation). We used the first 4000 samples for training, and the remaining 802 for testing. We used 30\% of the training set for validation.

The tuning procedure is similar to the one described for IHDP. We use the first 10 (out of 77) datasets, with 1 repetition of each, as a tuning set. We pick 3 best models (one for each weighting scheme) according to the average $\epsilon_{\textup{PEHE},p}^{\textup{NN}}$ (across the 10 datasets) on the validation set. The hyperparameter ranges used for tuning are shown in Table \ref{tab:hyperparams}. Hence, after tuning, we have 3 ``best'' models (1 for each weighting scheme), which we apply to the 77 datasets (with 10 repetitions).

We then use the obtained representations and weights as input to a causal forest (CF) model\footnote{Implemented using the \texttt{causal\_forest} function of the \texttt{GRF} package in R: https://CRAN.R-project.org/package=grf}, and we compare the performance of the ``vanilla'' causal forest (\textit{i.e.}, CF using only the original covariates) with the performance of the ``augmented'' CF models (\textit{i.e.}, which use the learned representations and weights as input). More specifically, for the augmented models, we use the learned representations as the ``covariates'', and we use the propensity-based weights as the per-sample weights to train the CF.
\section{Additional Results}

\subsection{Toy experiment}
\paragraph{Performance on target populations $g(x)$}
In Section 4.1 of the main text, we measured the performance of our models on the observed population $p(x)$. Here, we extend this evaluation to the different target populations $g(x)$. We report performance using $\sqrt{\epsilon_{\textup{PEHE},g}}$, computed via equation (5) from the main text. Figure \ref{fig:flipped_weighted_pehe} shows a plot of $\epsilon_{\textup{PEHE},g}$ for all choices of $g(x)$, for all toy datasets, and for all weight schemes used during training. 
From Figure \ref{fig:flipped_weighted_pehe}, we can see that each weighting scheme (row) tends to do well for the target population it was trained for -- \textit{i.e.}, within each row, the corresponding metric (color) is lowest (or close to the lowest). This provides some evidence for the fact that the models perform well on the target population they were trained for.
\begin{figure}
    \centering
    \includegraphics[width=\textwidth]{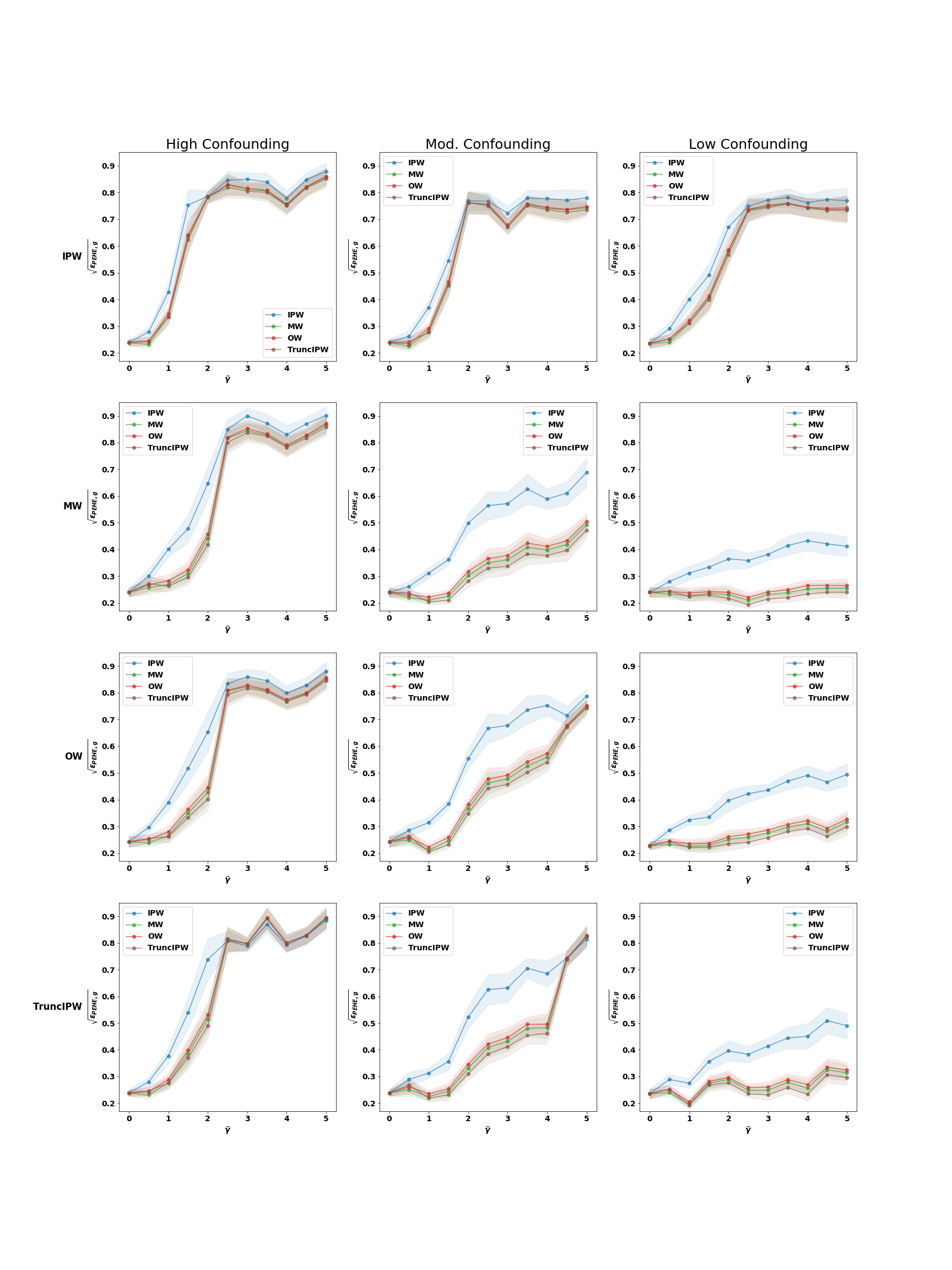}
    \caption{$\epsilon_{\textup{PEHE},g}$ vs. $\Tilde{\gamma}$. The colors are the different choices of $g(x)$ (to calculate $\sqrt{\epsilon_{\textup{PEHE},g}}$ from equation (5) in the main text), and the rows are the weight schemes used during training (\textit{i.e.}, in equation (11) of the main text).}
    \label{fig:flipped_weighted_pehe}
\end{figure}

\paragraph{Double-Robust ATE estimation}
We may easily enhance the ATE estimate in equation (3) of the main text by accounting for model bias, using equation (6) from \cite{Mao2018}. Specifically, we can define biases as:
\begin{align}
    b^{(t)} = \frac{1}{\sum_{i: T_i=t} w_\eta(X_i,t)}\sum_{i:T_i=t }w_\eta(X_i,t)[h(\Phi(X_i),t)-Y_i], \hspace{0.1in} \text{for}~~t \in \{0,1\}
\end{align}

which we may use to obtain a doubly-robust \citep{Lunceford} ATE estimate via:
\begin{align}\label{eq:ate_dr}
    \hat{\tau}_{\textup{ATE},g}^{\textup{DR}} = \hat{\tau}_{\textup{ATE},g} -b^{(1)}+b^{(0)}
\end{align}
Note that $b^{(1)},b^{(0)}$ are calculated using the training set only.
We compute the target population ATE error via:
\begin{align}
    \epsilon^{\textup{DR}}_{\textup{ATE},g} = |\tau_{\textup{ATE},g} - \hat{\tau}_{\textup{ATE},g}^{\textup{DR}}|
\end{align}

Figure \ref{fig:ate_dr_diff_clean} shows percent improvement of the double-robust estimator, computed as
\begin{align}
\Delta_g^{\textup{DR}} \triangleq \frac{\epsilon_{\textup{ATE},g}-\epsilon_{\textup{ATE},g}^{\textup{DR}}}{\epsilon_{\textup{ATE},g}} \label{eq:percent_imp_DR}
\end{align}
 on the (test) toy datasets. The double-robust ATE estimator (i.e. $\hat{\tau}_{\textup{ATE},g}^{\textup{DR}}$) enjoys some improvement in the ATE estimation error in most cases, though this is not true across all the toy datasets.

\begin{figure}[!h]
    \centering
    \makebox[\textwidth][c]{\includegraphics[width=1.3\textwidth]{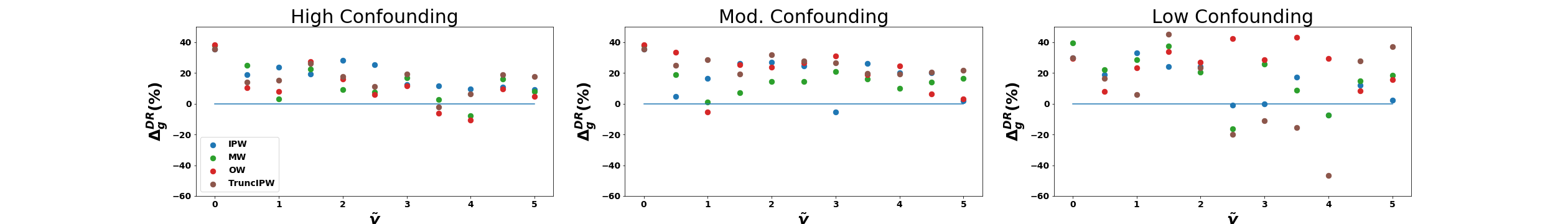}}
    \caption{Improvement in ATE estimation (on the toy datasets) using the double-robust $\hat{\tau}_{\textup{ATE},g}^{\textup{DR}}$ from \eqref{eq:ate_dr}. The x-axis is the imbalance parameter $\Tilde{\gamma}$, the y-axis is the percent improvement defined in \eqref{eq:percent_imp_DR}. The colors are different choices of $g$ (used for (i) the weight schemes during training, and (ii) to compute the target population metric $\epsilon_{\textup{ATE},g}$. Positive values means the double-robust estimator $\hat{\tau}_{\textup{ATE},g}^{\textup{DR}}$ improves upon the vanilla estimator $\hat{\tau}_{\textup{ATE},g}$.}
    \label{fig:ate_dr_diff_clean}
\end{figure}
\subsection{IHDP100 additional comparisons}
In Table 2 of the main text, some of the listed methods (namely, RCFR and CFR-ISW) actually reported their results on IHDP100, whereas we reported performance of our methods on IHDP1000. For the sake of completeness, we add the comparison between our methods, RCFR, and CFR-ISW on the IHDP100 dataset, reported in the table below. These results are consistent with the results from Table 2 in the main text, namely that our proposed methods perform on-par with state-of-the-art methods from recent work.

\begin{table}[!h]
    \caption{Results on IHDP100 test set. The top block consists of baselines from recent work. The bottom block is our proposed methods. Lower is better.}
    \centering
    \begin{small}
    \begin{tabular}{@{}lcc@{}}
    \toprule
     Model &  $\sqrt{\epsilon_{\textup{PEHE},p}}$ &   $\epsilon_{\textup{ATE},p}$\\
     \midrule
        CFR-ISW \citep{hassanpour2019counterfactual} & $.70 \pm .1$ & $.19 \pm .03$ \\
        RCFR \citep{johansson2018learning} & $.67 \pm .05$ & -  \\
        \midrule
        BWCFR-MW (Ours)&  .66 $\pm$ .06 &  .18 $\pm$ .02 \\
        BWCFR-OW (Ours)&   .66 $\pm$ .06 &  .16 $\pm$ .02 \\
        BWCFR-TruncIPW (Ours)&  .65 $\pm$ .05 &  .16 $\pm$ .02 \\
        \bottomrule
    \end{tabular}
    \end{small}
    \label{tab:IHDP_results_SM}
\end{table}

\subsection{Atlantic Causal Inference Competition 2016 (ACIC2016)}
In this section, we carefully examine the results of Section 4.3 from the main text. Specifically, Figure \ref{fig:ACIC} shows the performance of the proposed methods on each of the 77 datasets in ACIC2016, rather than an aggregate as shown in Table 3 of the main text.
Figure \ref{fig:ACIC} compares the performance of 3 types of models:
\begin{itemize}
    \item A vanilla causal forest algorithm
    \item Our proposed deep methods
    \item A hybrid model consisting of a causal forest augmented with our learned representations and weights (obtained from $\Phi(x)$ and $w_\eta(x,t)$, respectively).
\end{itemize}

From Figure \ref{fig:ACIC}, we can see that the augmented causal forest consistently outperforms the vanilla causal forests for almost all of the 77 datasets, both in terms of  $\sqrt{\epsilon_{\textup{PEHE},p}}$ and $\epsilon_{\textup{ATE},p}$. The neural network models perform (on average) better than the vanilla CF in terms of $\sqrt{\epsilon_{\textup{PEHE},p}}$, but worse than the augmented CF. In terms of $\epsilon_{\textup{ATE},p}$, the neural network models perform worse. A possible explanation for the poor performance of the neural network is that our tuning procedure was conducted only on the first 10 datasets (with 1 repetition each), whereas the ACIC2016 set comprises 77 datasets with 10 repetitions each.
This suggests that we may stand to benefit from using hybrid approaches for ITE estimation, since the hybrid approach outperforms the 2 individual components it is comprised of, even though the deep models were not extensively tuned.
Further, the hybrid models trained with Overlap Weights performed the best.
\begin{figure}
    \centering
     \makebox[\textwidth][c]{\includegraphics[width=\textwidth]{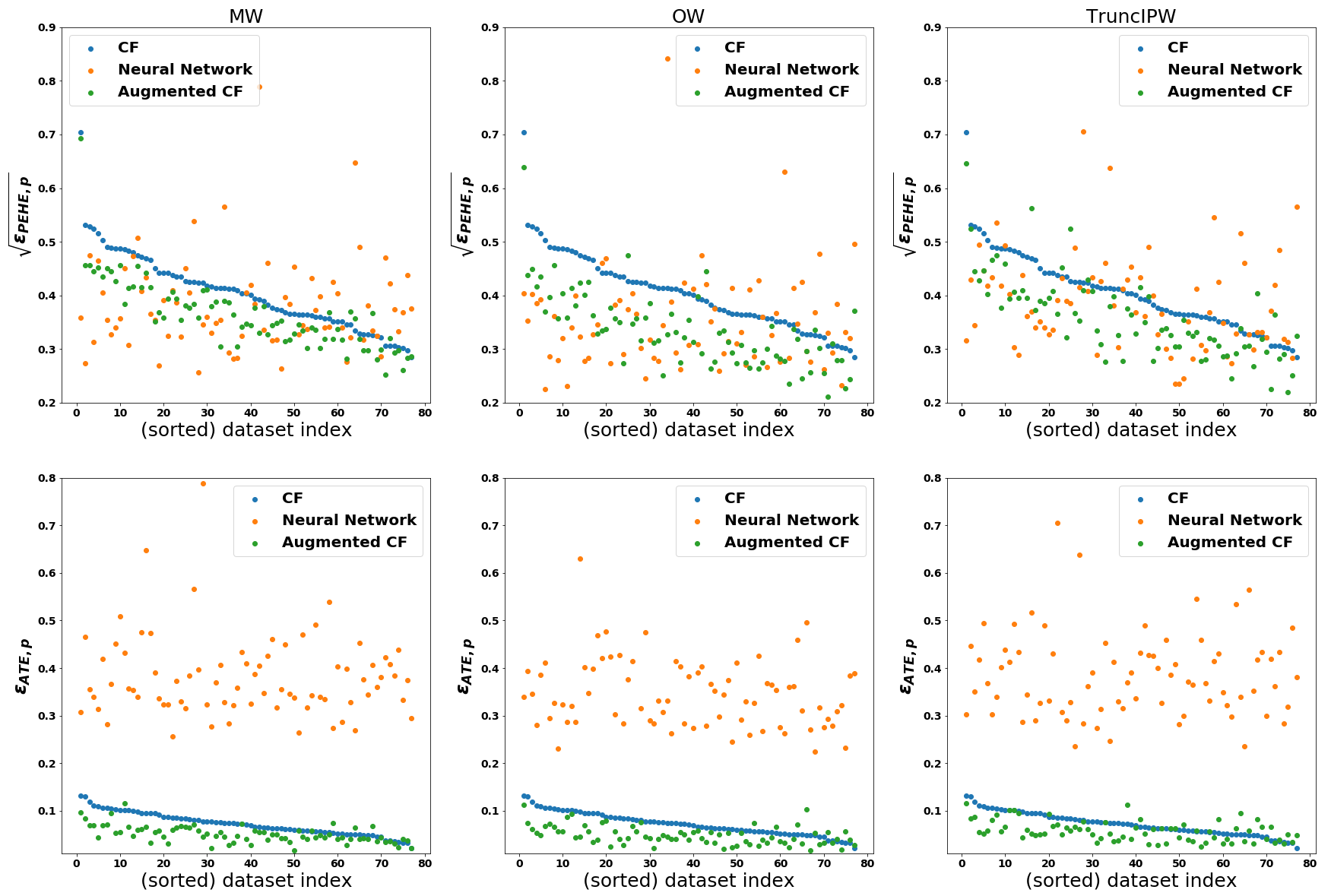}}
    \caption{Per-dataset results on ACIC2016. The first row shows $\sqrt{\epsilon_{\textup{PEHE},p}}$, and the second row shows $\epsilon_{\textup{ATE},p}$. ``CF'' is the vanilla causal forest, ``Augmented CF'' is the CF trained using the learned representations and weights, and ``Neural Network'' are our proposed methods. The columns are the weight schemes used to (i) obtain the representations and weights for the augmented CF, and (ii) used to train the neural network models. The datasets were sorted in descending order according to the performance of the causal forest.}
    \label{fig:ACIC}
\end{figure}
\section{Computing infrastructure and details}
All computation was done using Python and R. All neural network models were created and trained using Tensorflow 1.13.1 \citep{tensorflow}. Computations were done on an NVIDIA Geforce GTX 1080 Ti. The reported results on IHDP1000, ACIC2016, and the toy dataset each took approximately 20 hours to run.
\newpage
\bibliography{main}
\bibliographystyle{bib_style}
\end{document}


\aistatstitle{Supplementary Material: Counterfactual Representation Learning with Balancing Weights}

\aistatsauthor{Serge Assaad$^1$ \And Shuxi Zeng$^2$ \And Chenyang Tao$^1$ \And Shounak Datta$^1$}
\aistatsauthor{Nikhil Mehta$^1$ \And Ricardo Henao$^1$ \And Fan Li$^2$ \And Lawrence Carin$^1$}
\aistatsaddress{ $^1$Department of ECE, Duke University \And $^2$Department of Statistical Science, Duke University}

\runningauthor{Assaad, Zeng, Tao, Datta, Mehta, Henao, Li, Carin}




\tableofcontents
\clearpage

\section{Theory}

\subsection{Proof of Proposition 1}

\begin{proposition}[Balancing Property]
\label{prop:balancing}
Given the true propensity score $e(x)$, the reweighted treatment and control arms both equal the target distribution. In other words, $g(x|T=1) = g(x|T=0) = g(x)$
\end{proposition}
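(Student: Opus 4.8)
The plan is to unfold the definition of the reweighted arms and watch the propensity factors cancel. First I would write the reweighted treatment and control covariate densities as $g(x\mid T=1)\propto p(x\mid T=1)\,w_1(x)$ and $g(x\mid T=0)\propto p(x\mid T=0)\,w_0(x)$, where $w_1,w_0$ are the balancing weights defined earlier: the treated weight carries a factor $1/e(x)$ and the control weight a factor $1/(1-e(x))$, each multiplied by the common tilting factor $h(x)$ that defines the target via $g(x)\propto p(x)\,h(x)$ (with $h\equiv 1$ and $g=p$ in the plain IPW case). I would also state the overlap condition $0<e(x)<1$ up front, since that is what makes these weights finite and well defined.

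The key step is Bayes' rule for the propensity score: since $e(x)=\PP(T=1\mid X=x)$,
\[
p(x\mid T=1)=\frac{e(x)\,p(x)}{\PP(T=1)},\qquad p(x\mid T=0)=\frac{\bigl(1-e(x)\bigr)\,p(x)}{\PP(T=0)}.
\]
Substituting these into the two reweighted densities, the $e(x)$ in $p(x\mid T=1)$ cancels the $1/e(x)$ in $w_1$, and $\bigl(1-e(x)\bigr)$ cancels the $1/(1-e(x))$ in $w_0$, so both reweighted arms collapse to the same expression, proportional to $h(x)\,p(x)$, i.e. proportional to $g(x)$. Finally, since $g(x\mid T=1)$, $g(x\mid T=0)$ and $g(x)$ are all probability densities (each integrates to one), the shared proportionality constant is forced to be $1$, yielding $g(x\mid T=1)=g(x\mid T=0)=g(x)$.

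This is essentially a one-line cancellation, so I do not expect a serious obstacle; the only things requiring care are (i) invoking overlap so the weights exist, and (ii) tracking the normalizing constants hidden behind the ``$\propto$'' symbols — in particular confirming that the same target normalization appears on both sides, so that the reweighted arms literally equal $g$ rather than merely being proportional to it. I would therefore make the normalization bookkeeping explicit rather than leaving it implicit.
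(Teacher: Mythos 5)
Your proposal is correct and follows essentially the same route as the paper: apply Bayes' rule to the propensity score so that $e(x)$ (resp. $1-e(x)$) cancels against the weight, leaving both arms proportional to $f(x)p(x)\propto g(x)$, with equality following because all three are normalized densities. Your explicit handling of the normalizing constants and the overlap condition is slightly more careful than the paper's one-line chain of proportionalities, but it is the same argument.
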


\begin{proof}
\begin{align}
g(x|T=1)\trianglepropto w(x,1)p(x|T=1) = \frac{f(x)}{e(x)}p(x|T=1) = \frac{f(x)\textup{Pr}(X=x|T=1)}{\textup{Pr}(T=1|X=x)} \propto f(x)p(x) \propto g(x)
\end{align}
Similarly, we can also show that $g(x|T=0) = g(x)$.
\end{proof}

\subsection{Proof of Proposition 2}
\begin{assumption}
\label{assum:MSM}
The odds ratio between the model propensity score and true propensity score is bounded, namely:
\begin{align}
    \exists~\Gamma\geq 1~~s.t.~\forall x\in \mathcal{X},~~\frac{1}{\Gamma} \leq \frac{e(x)(1-e_\eta(x))}{e_\eta(x)(1-e(x))} \leq \Gamma 
\end{align}
\end{assumption}
This assumption is conceptually related to the Marginal Sensitivity Model of \cite{kallus19a} in that it measures the gap between two propensity functions -- we use it here to quantify the gap between true and model propensities rather than the degree of unobserved confounding.
\begin{proposition}[Generalized Balancing]
\label{prop:KL_bound}
Under Assumption \ref{assum:MSM}, and assuming that all tilting functions $f$ satisfy $f(x)>0~~\forall x\in\mathcal{X}$, we have:
$$D_{KL}(g_{\eta}(x|T=1)||g_{\eta}(x|T=0)) \leq 2\cdot\log\Gamma,$$
where $D_{KL}$ is the KL-divergence.
\end{proposition}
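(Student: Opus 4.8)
The plan is to write both reweighted arms in closed form with explicit normalizing constants, then split $D_{KL}$ into a partition-function term and an integral term, each controlled by Assumption~\ref{assum:MSM}.

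First I would repeat the computation in the proof of Proposition~\ref{prop:balancing}, but with the model propensity $e_\eta$ in the weights instead of the true propensity $e$. Using $\textup{Pr}(X=x\mid T=1) = e(x)p(x)/\textup{Pr}(T=1)$ and its $T=0$ analogue, this gives
\[
g_\eta(x\mid T=1) = \frac{1}{Z_1}\,\frac{f(x)\,e(x)\,p(x)}{e_\eta(x)}, \qquad
g_\eta(x\mid T=0) = \frac{1}{Z_0}\,\frac{f(x)\,(1-e(x))\,p(x)}{1-e_\eta(x)},
\]
where $Z_1, Z_0$ are the normalizing integrals. The hypothesis $f(x)>0$ on all of $\Xcal$ ensures both densities have exactly the support of $p$, so the KL divergence is finite and the pointwise density ratio below is well defined. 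Writing $r(x) := \dfrac{e(x)(1-e_\eta(x))}{e_\eta(x)(1-e(x))}$ for the odds ratio appearing in Assumption~\ref{assum:MSM}, the ratio of densities factorizes cleanly as $g_\eta(x\mid T=1)/g_\eta(x\mid T=0) = (Z_0/Z_1)\,r(x)$.

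Next I would substitute this factorization into the definition of the KL divergence, yielding the decomposition
\[
D_{KL}\big(g_\eta(x\mid T=1)\,\|\,g_\eta(x\mid T=0)\big)
= \log\frac{Z_0}{Z_1} \;+\; \E_{g_\eta(\cdot\mid T=1)}\!\big[\log r(X)\big].
\]
The expectation term is bounded above by $\log\Gamma$ immediately, from the upper bound $r(x)\le\Gamma$ in Assumption~\ref{assum:MSM}. For the partition-function term, the key identity is $Z_1 = \int r(x)\,\tfrac{f(x)(1-e(x))p(x)}{1-e_\eta(x)}\,dx = Z_0\cdot\E_{g_\eta(\cdot\mid T=0)}[r(X)]$; since $r(x)\ge 1/\Gamma$ this gives $Z_1/Z_0 \ge 1/\Gamma$, hence $\log(Z_0/Z_1)\le\log\Gamma$. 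Adding the two bounds yields $D_{KL} \le 2\log\Gamma$, as claimed.

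I expect the partition-function term to be the only real obstacle: a careless argument would bound the density ratio pointwise by $\Gamma$ and conclude with only $\log\Gamma$, missing that the two unnormalized tilted densities integrate to different constants. The fix --- recognizing $Z_1/Z_0$ as itself an $r$-expectation under the control arm, so that the same constant $\Gamma$ controls it --- is what produces the factor of $2$. The remaining steps are routine bookkeeping with Bayes' rule.
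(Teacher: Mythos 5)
Your proof is correct and follows essentially the same route as the paper's: factor the density ratio as $(Z_0/Z_1)\,r(x)$ with $r$ the odds ratio of Assumption~\ref{assum:MSM}, bound $\E_{g_\eta(\cdot\mid T=1)}[\log r]$ by $\log\Gamma$, and bound $\log(Z_0/Z_1)$ by $\log\Gamma$ by applying the same pointwise bound inside the normalizing integral (your phrasing of $Z_1/Z_0$ as an $r$-expectation under the control arm is just the paper's step $(**)$ in expectation form). The only cosmetic difference is that you carry the true tilting function $f$ in the weights where the paper carries the model-based $f_\eta$; since that factor cancels in the density ratio and enters $Z_0$ and $Z_1$ identically, the argument is unchanged.
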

\begin{proof}
First, we write the reweighted treatment group distribution as follows:
\begin{align}
    g_\eta(x|T=1) \trianglepropto w_\eta(x,1)p(x|T=1) =  \frac{f_\eta(x)}{e_\eta(x)}p(x|T=1),
\end{align} where we write $f_\eta$ since the tilting function is (in general) computed from the propensity score model. With $f(x)$ the ``true'' tilting function (\textit{i.e.}, the tilting function computed from the true propensity $e(x)$), we may write:
\begin{align}
    g_\eta(x|T=1)\propto \frac{f(x)}{e(x)}p(x|T=1)\frac{f_\eta(x)}{f(x)}\frac{e(x)}{e_\eta(x)} \propto g(x) \frac{f_\eta(x)}{f(x)}\frac{e(x)}{e_\eta(x)}
\end{align}
where the last equality holds from Proposition 1.
Similarly, we can write the reweighted control group distribution as $$ g_\eta(x|T=0)\propto g(x) \frac{f_\eta(x)}{f(x)}\frac{1-e(x)}{1-e_\eta(x)}.$$
Now, computing the KL-divergence between $g_\eta(x|T=1)$ and $g_\eta(x|T=0)$, we get:
\begin{align}
D_{KL}(g_\eta(x|T=1)||g_\eta(x|T=0)) &= \int_\mathcal{X} g_\eta(x|T=1)\log\left[\frac{\frac{1}{Z_1} g(x)\frac{f_\eta(x)}{f(x)}\frac{e(x)}{e_\eta(x)}}{\frac{1}{Z_0}g(x)\frac{f_\eta(x)}{f(x)}\frac{1-e(x)}{1-e_\eta(x)}}\right]dx \label{eq:proof_KLbound}
\end{align}
where $Z_1 \triangleq \int_\mathcal{X}g(x)\frac{f_\eta(x)}{f(x)}\frac{e(x)}{e_\eta(x)} dx$ and $Z_0 \triangleq \int_\mathcal{X} g(x)\frac{f_\eta(x)}{f(x)}\frac{1-e(x)}{1-e_\eta(x)} dx$.
Simplifying \eqref{eq:proof_KLbound} further, we get:
\begin{align}
D_{KL}(g_\eta(x|T=1)||g_\eta(x|T=0)) &= \int_\mathcal{X} g_\eta(x|T=1)\left[ \log\frac{Z_0}{Z_1} + \log\frac{e(x)(1-e_\eta(x))}{e_\eta(x)(1-e(x))}\right]dx\\
&\overset{(*)}{\leq} \int_\mathcal{X} g_\eta(x|T=1)\left[ \log\frac{Z_0}{Z_1} + \log\Gamma\right]dx = \log\frac{Z_0}{Z_1} + \log\Gamma \label{eq:proof_KLbound2}
\end{align}
where $(*)$ holds from Assumption \ref{assum:MSM}.
Notice that we may relate $Z_1$ and $Z_0$ as follows:
\begin{align}
    Z_0 \triangleq\int_\mathcal{X}g(x)\frac{f_\eta(x)}{f(x)}\frac{1-e(x)}{1-e_\eta(x)}dx  \overset{(**)}{\leq} \int_\mathcal{X}g(x)\frac{f_\eta(x)}{f(x)}\Gamma\frac{e(x)}{e_\eta(x)} dx = \Gamma Z_1
\end{align}
where $(**)$ also holds from Assumption \ref{assum:MSM}.
Hence $\log\frac{Z_0}{Z_1} \leq \log\Gamma$ -- plugging this into \eqref{eq:proof_KLbound2} yields:
\begin{align}
    D_{KL}(g_\eta(x|T=1)||g_\eta(x|T=0)) \leq \log\Gamma + \log\Gamma = 2\log\Gamma.
\end{align}
\end{proof}
\begin{corollary} \label{cor:KL_bound}
The bound presented in Proposition \ref{prop:KL_bound} also holds for the induced distributions $g_{\Phi,\eta}(r|T=1),g_{\Phi,\eta}(r|T=0)$ from $g_\eta(x|T=1),g_\eta(x|T=0)$ (respectively) via any invertible map $\Phi:\mathcal{X}\rightarrow\mathcal{R}$ (with inverse $\Psi$), namely:
\begin{align}
    D_{KL}(g_{\Phi,\eta}(r|T=1)||g_{\Phi,\eta}(r|T=0)) \leq 2\log\Gamma.
\end{align}
\end{corollary}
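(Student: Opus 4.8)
The plan is to show that the KL divergence is \emph{invariant} under the invertible reparametrization $\Phi$, so that the bound of Proposition~\ref{prop:KL_bound} transfers verbatim. The only tool needed is the change-of-variables formula for densities, together with the observation that a bijection contributes the \emph{same} Jacobian factor to both arguments of the divergence, so that factor cancels inside the log-ratio. (This is exactly the place where invertibility is essential: for a non-invertible map one only gets the data-processing inequality, i.e. a one-sided bound.)

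Concretely, I would first write the pushforward densities explicitly: for $t\in\{0,1\}$,
\[
g_{\Phi,\eta}(r|T=t) = g_\eta(\Psi(r)|T=t)\,\bigl|\det J_\Psi(r)\bigr|,
\]
where $J_\Psi(r)$ is the Jacobian of $\Psi=\Phi^{-1}$ at $r$. Substituting both densities into the definition of $D_{KL}(g_{\Phi,\eta}(r|T=1)\,\|\,g_{\Phi,\eta}(r|T=0))$, the factor $\bigl|\det J_\Psi(r)\bigr|$ appears in numerator and denominator of the log-argument and cancels, leaving
\[
D_{KL}(g_{\Phi,\eta}(r|T=1)\,\|\,g_{\Phi,\eta}(r|T=0)) = \int_{\mathcal{R}} g_\eta(\Psi(r)|T=1)\,\bigl|\det J_\Psi(r)\bigr|\,\log\frac{g_\eta(\Psi(r)|T=1)}{g_\eta(\Psi(r)|T=0)}\,dr.
\]
Then I would apply the substitution $x=\Psi(r)$, $dx=\bigl|\det J_\Psi(r)\bigr|\,dr$, which turns the right-hand side into $\int_{\mathcal{X}} g_\eta(x|T=1)\log\frac{g_\eta(x|T=1)}{g_\eta(x|T=0)}\,dx = D_{KL}(g_\eta(x|T=1)\,\|\,g_\eta(x|T=0))$, and invoking Proposition~\ref{prop:KL_bound} gives the claimed bound $2\log\Gamma$.

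The only point requiring care is the regularity of $\Phi$: the argument as written presumes $\Phi$ (hence $\Psi$) is differentiable with a.e.\ non-vanishing Jacobian. If one wishes to allow merely bi-measurable bijections, I would instead argue at the level of measures: the relative entropy $D_{KL}(\mu\|\nu)$ depends only on the Radon--Nikodym derivative $d\mu/d\nu$, and pushing $\mu$ and $\nu$ through the same bijection $\Phi$ leaves $d(\Phi_{\#}\mu)/d(\Phi_{\#}\nu) = (d\mu/d\nu)\circ\Phi^{-1}$, so the defining integral is unchanged. Either way there is no real obstacle — all the content is in the cancellation of the common Jacobian, which is precisely what degrades to an inequality when the map is not invertible.
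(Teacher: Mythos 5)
Your proof is correct, and it takes a slightly different (and arguably cleaner) route than the paper. The paper pushes both densities forward via the change-of-variables formula, observes that they retain the same structural form $g(\Psi(r))\frac{f_\eta(\Psi(r))}{f(\Psi(r))}\frac{e(\Psi(r))}{e_\eta(\Psi(r))}|\det(\Psi')|$ (and its $T=0$ analogue), and then re-traces the entire argument of Proposition~\ref{prop:KL_bound} in the representation space — re-bounding the log-odds ratio via Assumption~1 and re-bounding the ratio of normalizing constants. You instead prove the stronger and more general fact that $D_{KL}$ is exactly invariant under a common invertible reparametrization — the shared Jacobian factor cancels in the log-ratio and the substitution $x=\Psi(r)$ identifies the two integrals — and then invoke Proposition~\ref{prop:KL_bound} as a black box. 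Your reduction is more modular (it would apply verbatim to any divergence bound on $g_\eta(\cdot|T=1)$ vs.\ $g_\eta(\cdot|T=0)$, not just this one, and to any $f$-divergence, not just KL), and your remark about handling merely bi-measurable bijections via Radon--Nikodym derivatives is more careful than the paper on the regularity of $\Phi$. The paper's version has the minor advantage of displaying the explicit form of the pushforward densities, which it reuses later (e.g.\ in the finite-sample IPM computation), but as a proof of the corollary itself your argument is complete and, if anything, tighter: it yields equality of the two KL divergences rather than merely transferring the upper bound.
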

\begin{proof}
To see this, we can write:
\begin{align}
g_{\Phi,\eta}(r|T=1) &\overset{(*)}{\propto} g_\eta(\Psi(r)|T=1)|\textup{det}(\Psi')|\propto \frac{f(\Psi(r))}{e(\Psi(r))}p(\Psi(r)|T=1)\frac{f_\eta(\Psi(r))}{f(\Psi(r))}\frac{e(\Psi(r)}{e_\eta(\Psi(r))}|\textup{det}(\Psi')| \\&\propto g(\Psi(r))\frac{f_\eta(\Psi(r))}{f(\Psi(r))}\frac{e(\Psi(r))}{e_\eta(\Psi(r))}|\textup{det}(\Psi')| \label{eq:g_phi_1}
\end{align}
where $\textup{det}(\Psi')$ is the determinant of the Jacobian of $\Psi$, and $(*)$ holds from the change-of-variables formula. Similarly, we can write $g_{\Phi,\eta}(r|T=0)$ as:
\begin{align} \label{eq:g_phi_0}
    g_{\Phi,\eta}(r|T=0) \propto g(\Psi(r))\frac{f_\eta(\Psi(r))}{f(\Psi(r))}\frac{1-e(\Psi(r))}{1-e_\eta(\Psi(r))}|\textup{det}(\Psi')|
\end{align}
Computing the KL divergence between \eqref{eq:g_phi_1} and \eqref{eq:g_phi_0} is then similar to the proof of Proposition \ref{prop:KL_bound}, and the same bound holds.
\end{proof}
\subsection{Proof of Proposition 3}
\begin{definition}
The total variation distance (TVD) between distributions $p$ and $q$ on $\mathcal{R}$ is defined as 
\begin{align}
    \delta(p,q) \triangleq \frac12\cdot \underset{m: ||m||_\infty \leq 1}{\textup{sup}} \Big\{\int_\mathcal{R}m(r)(p(r)-q(r))dr\Big\}
\end{align}
\end{definition}
\begin{lemma}\label{lem:tvd_bound}
Under Assumption \ref{assum:MSM}, the total variation distance between the reweighted representation distribution for the treatment and control groups is upper bounded as:
\begin{align}
    \delta(g_{\Phi,\eta}(r|T=1),g_{\Phi,\eta}(r|T=0)) \leq \sqrt{\log\Gamma}
\end{align}
\end{lemma}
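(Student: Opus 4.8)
The plan is to obtain this bound as an immediate consequence of Corollary \ref{cor:KL_bound} together with Pinsker's inequality, which relates total variation distance to KL-divergence. Recall that Pinsker's inequality states that for any two probability densities $p,q$ on $\mathcal{R}$,
\begin{align}
\delta(p,q) \leq \sqrt{\tfrac12\, D_{KL}(p||q)}.
\end{align}
The only preliminary point to check is that the definition of TVD used here (the $\frac12$-scaled supremum over test functions $m$ with $\|m\|_\infty \leq 1$) coincides with the usual total variation distance $\frac12\int_\mathcal{R}|p(r)-q(r)|\,dr$; this is standard, since the supremum over such $m$ is attained (in the limit) by $m(r) = \operatorname{sign}(p(r)-q(r))$.

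Next I would instantiate $p = g_{\Phi,\eta}(r|T=1)$ and $q = g_{\Phi,\eta}(r|T=0)$. These are genuine probability densities on $\mathcal{R}$: positivity of $f$ (hence of $f_\eta$ through the propensity model), positivity of $e$, $e_\eta$ and their complements under Assumption \ref{assum:MSM}, and invertibility of $\Phi$ together with the change-of-variables formula in Corollary \ref{cor:KL_bound} guarantee that \eqref{eq:g_phi_1}–\eqref{eq:g_phi_0} define normalizable densities. Pinsker's inequality then gives
\begin{align}
\delta(g_{\Phi,\eta}(r|T=1),g_{\Phi,\eta}(r|T=0)) \leq \sqrt{\tfrac12\, D_{KL}(g_{\Phi,\eta}(r|T=1)||g_{\Phi,\eta}(r|T=0))}.
\end{align}

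Finally I would substitute the bound $D_{KL}(g_{\Phi,\eta}(r|T=1)||g_{\Phi,\eta}(r|T=0)) \leq 2\log\Gamma$ from Corollary \ref{cor:KL_bound}, yielding
\begin{align}
\delta(g_{\Phi,\eta}(r|T=1),g_{\Phi,\eta}(r|T=0)) \leq \sqrt{\tfrac12 \cdot 2\log\Gamma} = \sqrt{\log\Gamma},
\end{align}
which is the claimed bound. There is essentially no analytical obstacle here — the content was already absorbed into Proposition \ref{prop:KL_bound} and Corollary \ref{cor:KL_bound}; the only thing to be a little careful about is invoking Pinsker's inequality in the correct direction (TVD bounded by the square root of half the KL, not the other way around) and confirming that the KL-divergence used in Corollary \ref{cor:KL_bound} is oriented consistently with the $p$ appearing as the first argument of $\delta$.
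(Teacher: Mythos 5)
Your argument is correct and is exactly the paper's proof: apply Pinsker's inequality to bound $\delta$ by $\sqrt{\tfrac12 D_{KL}}$ and then invoke the $2\log\Gamma$ bound from Corollary \ref{cor:KL_bound}. The extra remarks about the test-function form of TVD and normalizability are fine but not needed beyond what the paper states.
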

\begin{proof}
\begin{align}
    \delta(g_{\Phi,\eta}(r|T=1),g_{\Phi,\eta}(r|T=0)) \overset{(*)}{\leq}
    \sqrt{\frac12 D_{KL}(g_{\Phi,\eta}(r|T=1)||g_{\Phi,\eta}(r|T=0))} 
   \overset{(**)}{\leq} \sqrt{\log\Gamma}
\end{align}
where $(*)$ follows from Pinsker's inequality, and $(**)$ follows from Corollary \ref{cor:KL_bound}.
\end{proof}

\begin{proposition}\label{prop:wass_mmd_bounds}
Under Assumption \ref{assum:MSM}, assuming the representation space $\mathcal{R}$ is bounded, and assuming the tilting functions satisfy $f(x)>0~\forall x \in \mathcal{X}$, the following bounds hold:
\begin{align}
    &\mathcal{W}(g_{\Phi,\eta}(r|T=1),g_{\Phi,\eta}(r|T=0)) \leq \textup{diam}(\mathcal{R})\sqrt{\log\Gamma} \notag\\
    &\textup{MMD}_k(g_{\Phi,\eta}(r|T=1),g_{\Phi,\eta}(r|T=0)) \leq 2\sqrt{C_k\log\Gamma},
\end{align}
where $\mathcal{W}$ is the Wasserstein distance, $\textup{diam}(\mathcal{R}) \triangleq \textup{sup}_{r,r'\in\mathcal{R}}||r-r'||_2$, $\textup{MMD}_k$ is the MMD with kernel $k$, and $C_k \triangleq \textup{sup}_{r\in\mathcal{R}}k(r,r)$.
\end{proposition}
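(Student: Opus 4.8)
The plan is to derive both inequalities directly from Lemma~\ref{lem:tvd_bound}, using two classical facts: the Wasserstein-1 distance and the kernel MMD are each controlled by the total variation distance whenever the underlying space is bounded (respectively, the kernel is bounded). Throughout, write $p \triangleq g_{\Phi,\eta}(r|T=1)$ and $q \triangleq g_{\Phi,\eta}(r|T=0)$ for brevity. From Lemma~\ref{lem:tvd_bound} we have $\delta(p,q) \leq \sqrt{\log\Gamma}$, and from the definition of the TVD, taking $m=\textup{sign}(p-q)$, it follows that $\int_{\mathcal{R}} |p(r)-q(r)|\,dr = 2\,\delta(p,q)$. These two observations are the only inputs needed.

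For the Wasserstein bound, I would invoke Kantorovich--Rubinstein duality to write $\mathcal{W}(p,q) = \sup_{\textup{Lip}(h)\leq 1}\int_{\mathcal{R}} h(r)\big(p(r)-q(r)\big)\,dr$. Fixing any reference point $r_0\in\mathcal{R}$ and setting $\tilde h \triangleq h - h(r_0)$, the measure $p-q$ has total mass zero so $\int_{\mathcal{R}} h\,(p-q) = \int_{\mathcal{R}} \tilde h\,(p-q)$; and since $h$ is $1$-Lipschitz on the bounded set $\mathcal{R}$, we have $\sup_{\mathcal{R}}\tilde h - \inf_{\mathcal{R}}\tilde h = \sup_{\mathcal{R}} h - \inf_{\mathcal{R}} h \leq \textup{diam}(\mathcal{R})$. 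Pairing a function of oscillation at most $\textup{diam}(\mathcal{R})$ against the signed measure $p-q$, whose positive and negative parts each have mass $\delta(p,q)$, gives $\int_{\mathcal{R}} \tilde h\,(p-q) \leq \big(\sup_{\mathcal{R}}\tilde h - \inf_{\mathcal{R}}\tilde h\big)\,\delta(p,q) \leq \textup{diam}(\mathcal{R})\,\delta(p,q)$. Taking the supremum over $h$ and applying Lemma~\ref{lem:tvd_bound} yields $\mathcal{W}(p,q) \leq \textup{diam}(\mathcal{R})\sqrt{\log\Gamma}$. (Equivalently, one may build a coupling that agrees off an event of probability $\delta(p,q)$ and bound the transport cost on that event by $\textup{diam}(\mathcal{R})$.)

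For the MMD bound, recall that $\textup{MMD}_k(p,q) = \|\mu_p - \mu_q\|_{\mathcal{H}_k}$, where $\mu_p \triangleq \int_{\mathcal{R}} k(\cdot,r)\,p(r)\,dr$ is the kernel mean embedding of $p$ in the RKHS $\mathcal{H}_k$, and likewise $\mu_q$. Then $\mu_p - \mu_q = \int_{\mathcal{R}} k(\cdot,r)\big(p(r)-q(r)\big)\,dr$, so by the triangle inequality for the Bochner integral together with the reproducing property $\|k(\cdot,r)\|_{\mathcal{H}_k} = \sqrt{k(r,r)} \leq \sqrt{C_k}$,
\[
\textup{MMD}_k(p,q) \;\leq\; \int_{\mathcal{R}} \|k(\cdot,r)\|_{\mathcal{H}_k}\,|p(r)-q(r)|\,dr \;\leq\; \sqrt{C_k}\int_{\mathcal{R}}|p(r)-q(r)|\,dr \;=\; 2\sqrt{C_k}\,\delta(p,q).
\]
Applying Lemma~\ref{lem:tvd_bound} gives $\textup{MMD}_k(p,q) \leq 2\sqrt{C_k}\sqrt{\log\Gamma} = 2\sqrt{C_k\log\Gamma}$, as claimed.

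There is no substantial obstacle: both statements are ``TVD dominates a weaker metric'' reductions, and the only points requiring care are (i) bookkeeping the factor $\tfrac12$ in the definition of $\delta$ so that $\int_{\mathcal{R}}|p-q| = 2\,\delta(p,q)$, and (ii) in the Wasserstein step, centering the Lipschitz test function at a reference point so that one may use its boundedness via $\textup{diam}(\mathcal{R})$ rather than merely its Lipschitz constant. Finiteness of $C_k$ is guaranteed by boundedness of $\mathcal{R}$ together with continuity of $k$ (or simply assumed), and $f(x)>0$ is needed only so that Lemma~\ref{lem:tvd_bound}, hence the chain Proposition~\ref{prop:KL_bound}--Corollary~\ref{cor:KL_bound}, applies.
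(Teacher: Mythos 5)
Your proof is correct and follows essentially the same route as the paper: both inequalities are obtained by dominating the metric in question by the total variation distance $\delta(g_{\Phi,\eta}(r|T=1),g_{\Phi,\eta}(r|T=0))$ and then invoking Lemma~\ref{lem:tvd_bound}. The only difference is that the paper simply cites external results (Theorem 4 of \cite{TVD_Wass_bound} for $\mathcal{W}\leq \textup{diam}(\mathcal{R})\,\delta$ and Theorem 14-ii of \cite{sriperumbudur2009integral} for $\textup{MMD}_k\leq 2\sqrt{C_k}\,\delta$), whereas you prove these two domination inequalities from scratch via Kantorovich--Rubinstein duality with a centered test function and via the Bochner-integral bound on the kernel mean embeddings --- both arguments are sound, including the bookkeeping of the factor $\tfrac12$ in the definition of $\delta$.
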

\begin{proof}
\begin{align}
\mathcal{W}(g_{\Phi,\eta}(r|T=1),g_{\Phi,\eta}(r|T=0)) \overset{(*)}{\leq} \textup{diam}(\mathcal{R})\delta(g_{\Phi,\eta}(r|T=1),g_{\Phi,\eta}(r|T=0)) \overset{(**)}{\leq} diam(\mathcal{R})\sqrt{\log\Gamma}
\end{align}
where $(*)$ holds from Theorem 4 of \cite{TVD_Wass_bound}, and $(**)$ holds from Lemma \ref{lem:tvd_bound}.
\begin{align}
    \textup{MMD}_k(g_{\Phi,\eta}(r|T=1),g_{\Phi,\eta}(r|T=0)) \overset{(*)}{\leq} 2\sqrt{C_k}\delta(g_{\Phi,\eta}(r|T=1),g_{\Phi,\eta}(r|T=0))\overset{(**)}{\leq} 2\sqrt{C_k\log\Gamma}
\end{align}
where $(*)$ holds from Theorem 14-ii of \cite{sriperumbudur2009integral}, and $(**)$ holds from Lemma \ref{lem:tvd_bound}.
\end{proof}

\subsection{Relationship between $\epsilon_{\textup{PEHE},p}$ and $\epsilon_{\textup{PEHE},g}$}\label{sec:pehe_equivalence_supp}
In this section, we establish a relationship between $\epsilon_{\textup{PEHE},p}$ and $\epsilon_{\textup{PEHE},g}$ which explains why targeting the population $g(x)\trianglepropto f(x)p(x)$ for ITE prediction may also aid ITE prediction on the original covariate distribution $p(x)$.
As a reminder, Table \ref{tab:weights_supp} and Figure \ref{fig:target_populations} shows the different tilting functions of interest and their corresponding weighting schemes. $e(x)\triangleq \textup{Pr}(T=1|X=x)$ is the propensity score.
The weight schemes we use here have been carefully examined in classical causal inference literature \citep{Crump2009,Fan_overlap,MW}. Specifically, the Matching Weights \citep{MW} were designed as a weighting analogue to matching, the Truncated IPW weights \citep{Crump2009} were used to estimate a low-variance average treatment effect for a subpopulation, and the Overlap Weights \citep{Fan_overlap} were proven to minimize (out of all the possible balancing weights) the asymptotic variance of the estimated weighted average treatment effect. Figure \ref{fig:target_populations} shows how TruncIPW, MW, and OW place a specific emphasis on regions of good overlap in covariate space.
\begin{table}[!ht]
\caption{\label{tab:weights_supp} \small Choices of tilting function $f(x)$ and associated weight schemes $w(x,t)$ (see equation (6) in the main text). Note $\mathds{1}(\cdot)$ is the indicator function. We set $\xi=0.1$ as in \cite{Crump2009}.}
\centering
\begin{tabular}{@{}cc@{}}
\toprule
\textbf{Tilting function $f(x)$} & \textbf{Associated weight scheme $w(x,t)$}\\
\midrule
 $1$ & Inverse Probability Weights (IPW)\\ {\footnotesize${\mathds{1}(\xi<e(x)<1-\xi)}$}&
Truncated IPW (TruncIPW) \\
${\text{min}(e(x),1-e(x))}$ & Matching Weights (MW) \\
$e(x)(1-e(x))$ & Overlap Weights (OW)\\
\bottomrule
\end{tabular}
\end{table}
\begin{figure}[H]
    \centering
    \includegraphics[width=\textwidth]{Figures/target_populations_SM.png}
     \caption{(Left) Tilting functions $f(x)$ used. (Right) Illustrative treatment group densities $p(x|T=t)$, and reweighted densities $g(x)\propto f(x)p(x)$ for different $f(x)$. TruncIPW, MW, and OW specifically emphasize regions of good overlap between the treatment and control groups.}
    \label{fig:target_populations}
\end{figure}
\begin{definition}[$\delta$-strict overlap]
$\exists\delta\in(0,0.5): \forall x\in\mathcal{X}~\delta<e(x)<1-\delta$. 
\end{definition}
\begin{definition}[$\epsilon_{\textup{PEHE},g}$]
 $$\epsilon_{\textup{PEHE},g}(\hat{\tau}) \triangleq \int_{\mathcal{X}} (\tau(x)-\hat{\tau}(x))^2g(x)dx, $$
 where $\tau(x)\triangleq \mathbb{E}[Y(1)-Y(0)|X=x]$ is the true individual treatment effect, and $\hat{\tau}$ is an estimate of $\tau(x)$. We often omit $\hat{\tau}$ from $\epsilon_{\textup{PEHE}}(\hat{\tau})$ for brevity.
\end{definition}
\begin{proposition}\label{prop:pehe_equiv}
Assuming $\delta$-strict overlap, for all the tilting functions presented in Table \ref{tab:weights_supp} (for $f(x)=\mathds{1}(\xi<e(x)<1-\xi)$, the additional condition $\delta\geq\xi$ is required), we have:
$$
    A_f \cdot \epsilon_{PEHE,g}(\hat{\tau})\leq \epsilon_{PEHE,p}(\hat{\tau})\leq B_f\cdot\epsilon_{PEHE,g}(\hat{\tau}),
$$
where $A_f$ and $B_f$ are constants depending on $f$, $p(x)\triangleq \textup{Pr}(X_i=x)$, and $g(x)\trianglepropto f(x)p(x)$
\end{proposition}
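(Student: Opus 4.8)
The plan is to reduce the statement to an elementary ``sandwich'' argument: every tilting function in Table~\ref{tab:weights_supp} is, under $\delta$-strict overlap, bounded between two strictly positive constants on all of $\mathcal{X}$, and a density that is a reweighting by a boundedly-varying function yields equivalent $L^2$-type error functionals. First I would normalize: write $g(x) = f(x)p(x)/Z$ with $Z \triangleq \int_{\mathcal{X}} f(x)p(x)\,dx = \E_p[f(X)] > 0$, so that
\begin{align}
\epsilon_{\textup{PEHE},g}(\hat{\tau}) = \frac{1}{Z}\int_{\mathcal{X}} (\tau(x)-\hat{\tau}(x))^2 f(x) p(x)\,dx = \frac{1}{Z}\,\E_p\!\left[(\tau(X)-\hat{\tau}(X))^2 f(X)\right].
\end{align}

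Next I would record the key (trivial) lemma: if $0 < L \le f(x) \le U$ for every $x \in \mathcal{X}$, then by monotonicity of the integral applied to the nonnegative integrand $(\tau-\hat{\tau})^2 p$,
\begin{align}
L\,\epsilon_{\textup{PEHE},p}(\hat{\tau}) \;\le\; Z\,\epsilon_{\textup{PEHE},g}(\hat{\tau}) \;\le\; U\,\epsilon_{\textup{PEHE},p}(\hat{\tau}),
\end{align}
which rearranges to $\tfrac{Z}{U}\,\epsilon_{\textup{PEHE},g}(\hat{\tau}) \le \epsilon_{\textup{PEHE},p}(\hat{\tau}) \le \tfrac{Z}{L}\,\epsilon_{\textup{PEHE},g}(\hat{\tau})$. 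Thus one may take $A_f = Z/U$ and $B_f = Z/L$; since $Z = \E_p[f] \in [L,U]$, these automatically satisfy $A_f \le 1 \le B_f$, consistent with $g$ being a probability reweighting of $p$. All that remains is to exhibit the constants $L,U$ for each choice of $f$.

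The case analysis is then routine. For IPW, $f \equiv 1$, so $L = U = Z = 1$ and $A_f = B_f = 1$. For Matching Weights, $\delta$-strict overlap gives $\delta < \min(e(x),1-e(x)) \le \tfrac12$ for all $x$, so $L = \delta$, $U = \tfrac12$. For Overlap Weights, concavity and symmetry of $e \mapsto e(1-e)$ on the open interval $(\delta,1-\delta)$ give $\delta(1-\delta) < e(x)(1-e(x)) \le \tfrac14$, so $L = \delta(1-\delta)$, $U = \tfrac14$. The one case that needs care — and the main (if minor) obstacle — is TruncIPW, where $f(x) = \mathds{1}(\xi < e(x) < 1-\xi)$ takes the value $0$ and the lemma would be vacuous with $L = 0$; here I would invoke the extra hypothesis $\delta \ge \xi$, which together with $\delta$-strict overlap forces $\xi \le \delta < e(x) < 1-\delta \le 1-\xi$ for every $x \in \mathcal{X}$, so that $f \equiv 1$ on $\mathcal{X}$ and the situation collapses to the IPW case. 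This argument also shows $Z > 0$ in every case, so $g$ is a well-defined density and the stated bounds are non-vacuous, completing the proof.
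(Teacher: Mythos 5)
Your proof is correct and follows essentially the same route as the paper's: sandwich the nonnegative integrand $(\tau-\hat\tau)^2 p$ between $\inf_x f$ and $\sup_x f$ times itself, then divide by the normalizer $Z$ to get $A_f = Z/\sup_x f$ and $B_f = Z/\inf_x f$. Your explicit case-by-case computation of the bounds $L,U$ for each tilting function (and the observation that $\delta\geq\xi$ collapses TruncIPW to the IPW case) is a welcome addition that the paper leaves implicit, merely asserting $\sup_x f<\infty$ and $\inf_x f>0$.
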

\begin{proof}
For all the tilting functions $f(x)$ in Table 1, we have $\textup{sup}_x f(x)<\infty$. Assuming $\delta$-strict overlap (and for $f(x)=\mathds{1}(\xi<e(x)<1-\xi)$, assuming $\delta\geq \xi$) we also get (for all $f(x)$ in Table \ref{tab:weights_supp}) that $\textup{inf}_x f(x)>0$.
Since $g(x)\trianglepropto f(x)p(x)$ by definition (with $p(x)\triangleq \textup{Pr}(X=x)$ the marginal covariate density), we may write:
\begin{align}
    \epsilon_{PEHE,g} &\triangleq \int_{\mathcal{X}} (\tau(x)-\hat{\tau}(x))^2g(x)dx\\
    &= \int_{\mathcal{X}} (\tau(x)-\hat{\tau}(x))^2\frac{f(x)p(x)}{Z_f}dx
\end{align}
where $Z_f \triangleq \int_\mathcal{X} f(x)p(x)dx$.
We may bound this expression above and below via:

\begin{align}
   &\int_{\mathcal{X}} (\tau(x)-\hat{\tau}(x))^2\frac{\textup{inf}_x[f(x)]p(x)}{Z_f}dx \leq \epsilon_{PEHE,g} \leq \int_{\mathcal{X}} (\tau(x)-\hat{\tau}(x))^2\frac{\textup{sup}_x[f(x)]p(x)}{Z_f}dx\\
   \Rightarrow &\frac{\textup{inf}_x[f(x)]}{Z_f}\int_{\mathcal{X}} (\tau(x)-\hat{\tau}(x))^2p(x)dx \leq \epsilon_{PEHE,g} \leq \frac{\textup{sup}_x[f(x)]}{Z_f}\int_{\mathcal{X}} (\tau(x)-\hat{\tau}(x))^2p(x)dx\\
   \Rightarrow &\frac{\textup{inf}_x[f(x)]}{Z_f}\cdot \epsilon_{PEHE,p}\leq \epsilon_{PEHE,g} \leq \frac{\textup{sup}_x[f(x)]}{Z_f}\cdot \epsilon_{PEHE,p}
\end{align}

Defining $B_f \triangleq \frac{Z_f}{\textup{inf}_x[f(x)]}$ and $A_f \triangleq \frac{Z_f}{\textup{sup}_x[f(x)]}$:
\begin{align}
    \frac{1}{B_f} \cdot \epsilon_{PEHE,p}\leq \epsilon_{PEHE,g} \leq \frac{1}{A_f}\cdot\epsilon_{PEHE,p}
\end{align}
Which we may also write as:
\begin{align}
    A_f \cdot \epsilon_{PEHE,g}\leq \epsilon_{PEHE,p} \leq B_f \cdot\epsilon_{PEHE,g} \label{eq:pehe_norms}
\end{align}
\end{proof}

Proposition \ref{prop:pehe_equiv} gives a ``two birds, one stone'' property, whereby $\epsilon_{\textup{PEHE},p}$ may also be minimized when $\epsilon_{\textup{PEHE},g}$ is minimized. This is a possible justification for why targeting the population $g(x)$ (via minimizing an upper bound on $\epsilon_{\textup{PEHE},g}$) may also benefit ITE estimation on the observed population $p(x)$.







\subsection{$\epsilon_{\textup{PEHE},g}$ bound}\label{sec:bound_pehe_supp}
Here, we establish conditions for which the bound on $\epsilon_{\textup{PEHE},g}$ (equation (8) in the main text) holds.
\begin{proposition}
Assuming the encoder $\Phi$ is invertible, and assuming $\frac{1}{\alpha}\ell_{h,\Phi}\in G$ for a function class $G$ and a constant $\alpha$, we have:\\
\begin{align}
    \epsilon_{\textup{PEHE},g}\leq 2\cdot(\epsilon_{F,g}^{T=1} + \epsilon_{F,g}^{T=0}) + \alpha\cdot \textup{IPM}_G(g_\Phi(r|T=1),g_\Phi(r|T=0)) + C\triangleq B,
\end{align}
where $C$ is a constant w.r.t. model parameters, and $g_\Phi(r|T=t)$ is the distribution induced by the invertible map $\Phi$ from the distribution $g(x|T=t)$ (for $t\in\{0,1\}$).
\end{proposition}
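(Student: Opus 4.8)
The plan is to mirror the standard representation-based generalization bound for treatment-effect estimation, adapted so that the imbalance penalty is measured between the \emph{reweighted} arm distributions $g_\Phi(r\mid T=1)$ and $g_\Phi(r\mid T=0)$ rather than the raw ones. Throughout, write $m_t(x) \triangleq \E[Y(t)\mid X=x]$ and $h_t(x) \triangleq h(\Phi(x),t)$, so that $\tau(x) = m_1(x) - m_0(x)$ and $\hat\tau(x) = h_1(x) - h_0(x)$.

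First I would split the PEHE into two outcome-prediction errors: applying $(a-b)^2 \le 2a^2 + 2b^2$ pointwise with $a = m_1(x)-h_1(x)$, $b = m_0(x)-h_0(x)$ and integrating against the target density $g$ gives
\[
\epsilon_{\textup{PEHE},g} \le 2\!\int_{\mathcal X}\!(m_1(x)-h_1(x))^2 g(x)\,dx + 2\!\int_{\mathcal X}\!(m_0(x)-h_0(x))^2 g(x)\,dx .
\]
A bias--variance decomposition rewrites each squared bias as the expected (factual) loss $\ell_{h,\Phi}(x,t)$ minus the conditional outcome variance $\textup{Var}(Y(t)\mid X=x)$; the integrated variance terms, together with any normalization constants appearing in the definition of $\epsilon_{F,g}$, do not depend on $(\Phi,h)$ and get collected into $C$. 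By the balancing property (Proposition~\ref{prop:balancing} under the true propensity, and otherwise up to the imbalance term) one may identify the target population with the reweighted treated arm when integrating the treated-outcome loss, so that $\int \ell_{h,\Phi}(x,1)\,g(x)\,dx$ is exactly $\epsilon_{F,g}^{T=1}$. What remains is the control-outcome loss $\int \ell_{h,\Phi}(x,0)\,g(x)\,dx$, which is a \emph{counterfactual} quantity --- the control head evaluated on (essentially) the treated population --- and I would bound it by $\epsilon_{F,g}^{T=0}$ plus the discrepancy between the two reweighted arms.

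The core step is turning that discrepancy into an IPM, and this is where invertibility of $\Phi$ is used. Since $\Phi$ is invertible with inverse $\Psi$, the per-example loss pushes forward to a function of the representation alone, $\tilde\ell_{h,\Phi}(r,0)\triangleq \ell_{h,\Phi}(\Psi(r),0)$, and the change-of-variables formula gives $\int \ell_{h,\Phi}(x,0)\,g(x\mid T=t)\,dx = \int \tilde\ell_{h,\Phi}(r,0)\,g_\Phi(r\mid T=t)\,dr$. Hence the counterfactual--factual gap equals $\int \tilde\ell_{h,\Phi}(r,0)\big(g_\Phi(r\mid T=1)-g_\Phi(r\mid T=0)\big)\,dr$, and because $\tfrac1\alpha \ell_{h,\Phi}\in G$ this is at most $\alpha\cdot\textup{IPM}_G(g_\Phi(r\mid T=1),g_\Phi(r\mid T=0))$ straight from the definition of the IPM as a supremum over $G$. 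Combining the three steps and renaming constants yields the claimed bound $B$.

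I expect the only real obstacle to be bookkeeping rather than anything conceptual: (i) keeping the change of variables consistent between the loss and the densities --- precisely what the invertibility assumption buys, since otherwise the push-forward $\tilde\ell_{h,\Phi}$ is not even well defined --- and (ii) tracking which quantities are genuinely constant in $(\Phi,h)$ so they can legitimately be absorbed into $C$, and getting the numerical constant in front of the IPM right (the factor $2$ from the first step and whatever normalization is adopted for $\textup{IPM}_G$ interact here). The identification in the second step is immediate from Proposition~\ref{prop:balancing} when the true propensity is used, and otherwise merely shifts an extra term --- bounded via Proposition~\ref{prop:wass_mmd_bounds} --- into the imbalance penalty.
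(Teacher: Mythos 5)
Your proposal is correct and is essentially the paper's own proof unpacked: the paper disposes of this proposition in one line by invoking Theorem 1 of Shalit et al.\ on the target population $g(x)$, and your sketch is precisely the proof of that theorem transplanted to $g$ --- the $(a-b)^2\le 2a^2+2b^2$ split of the PEHE, the bias--variance step absorbing the conditional outcome variances into $C$, and the counterfactual-to-factual transfer via the change of variables through $\Psi$ and the assumption $\tfrac{1}{\alpha}\ell_{h,\Phi}\in G$. The only bookkeeping item you already flag (the factor of $2$ multiplying the IPM term, absorbed by redefining $\alpha$) is handled the same way in the paper's own finite-sample derivation, so nothing is missing.
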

\begin{proof}
The proof follows straightforwardly by applying Theorem 1 from \cite{Shalit} on the target population $g(x)$.
\end{proof}

\section{Finite-sample objective}\label{sec:finite_sample}
From equation (8) in the main text, we know $\epsilon_{\textup{PEHE},g}\leq B$.
From equation (9) in the main text, we have:
\begin{align}
    B \approx 2\cdot(\epsilon_{F,g_\eta}^{T=1}+ \epsilon_{F,g_\eta}^{T=0}) + \alpha\cdot \textup{IPM}_G(g_{\Phi,\eta}(r|T=1),g_{\Phi,\eta}(r|T=0)) + C\label{eq:bound_approx_sup}
\end{align}

We would like to obtain a finite-sample estimate of $B$ (shown in equation (11) in the main text).
\subsection{Finite-sample factual error terms $\epsilon_{F,g_\eta}^{T=1}$,$\epsilon_{F,g_\eta}^{T=0}$}
We will start by estimating the first 2 terms in \eqref{eq:bound_approx_sup}, choosing $\epsilon_{F,g_\eta}^{T=1}$ WLOG.

\begin{align}
\epsilon_{F,g_\eta}^{T=1} &\triangleq \int_{\mathcal{X}}\ell_{h,\Phi}(x,1)g_\eta(x|T=1)dx = \int_{\mathcal{X}}\ell_{h,\Phi}(x,1)\frac{w_\eta(x,1)p(x|T=1)}{Z_1}dx
\end{align}
where $Z_1 = \int_\mathcal{X} w_\eta(x,1)p(x|T=1)dx$, and $\ell_{h,\Phi}(x,t) \triangleq \int_\mathcal{Y}L(y,h(\Phi(x),t))\textup{Pr}(Y(t)=y|X=x)dy$, with $L(y,y') = (y-y')^2$.

We may approximate $\epsilon_{F,g_\eta}^{T=1}$ as:
\begin{align}
    \epsilon_{F,g_\eta}^{T=1} &\approx \frac{1}{Z_1\cdot n_1}\sum_{i\in\mathcal{B}:T_i=1}w_\eta(X_i,1)(Y_i-h(\Phi(X_i),1))^2 \label{eq:err_1_approx}
\end{align}
where $\mathcal{B}$ is a sampled batch, and $n_1 \triangleq \sum_{i\in \mathcal{B}} T_i$.\\
The target distribution $g(x)$ is defined as $g(x)\triangleq \frac{f(x)p(x)}{Z}$ where $Z\triangleq \int_\mathcal{X} f(x)p(x)$.
We make the following approximation for $Z_1$:
\begin{align}
Z_1 &\triangleq \int_{\mathcal{X}}w_\eta(x,1)p(x|T=1) \approx \int_{\mathcal{X}}w(x,1)p(x|T=1)dx = \int_{\mathcal{X}}\frac{f(x)}{e(x)}p(x|T=1)dx \\
&= \int_{\mathcal{X}}\frac{f(x)}{\textup{Pr}(T=1)}p(x)dx = \frac{Z}{\textup{Pr}(T=1)}\int_{\mathcal{X}}\frac{f(x)p(x)}{Z}dx = \frac{Z}{\textup{Pr}(T=1)}\int_{\mathcal{X}}g(x)dx \\
&= \frac{Z}{\textup{Pr}(T=1)} \approx \frac{Z\cdot N}{N_1}\end{align}
where $N$ is the number of samples in the dataset, and $N_1 = \sum_{i=1}^N T_i$ is the number of treatment samples in the dataset.
We explicitly construct the batches $\mathcal{B}$ of size $n$ such that $n_1/n = N_1/N$, so we get:
\begin{align}
Z_1 \approx \frac{Z\cdot n}{n_1}
\end{align}
Finally, we plug in the above approximation of $Z_1$ into \eqref{eq:err_1_approx} to get:
\begin{align}
\epsilon_{F,g_\eta}^{T=1} &\approx \frac{1}{Z\cdot n}\sum_{i\in\mathcal{B}:T_i=1}w_\eta(X_i,1)(Y_i-h(\Phi(X_i),1))^2 \label{eq:err_1_approx_final}
\end{align}
Similarly, we may approximate $ \epsilon_{F,g_\eta}^{T=0}$ as:\\
\begin{align}
 \epsilon_{F,g_\eta}^{T=0} &\approx
    \frac{1}{Z\cdot n}\sum_{i\in\mathcal{B}:T_i=0}w_\eta(X_i,0)(Y_i-h(\Phi(X_i),0))^2 \label{eq:err_0_approx_final}
\end{align}
We also tried the approximations $Z_1 \approx \frac{1}{n_1}\sum_{i\in\mathcal{B}:T_i=1}w_\eta(X_i,1)$ and $Z_1 \approx \frac{1}{N_1}\sum_{i:T_i=1}w_\eta(X_i,1)$ (and similar approximations for $Z_0$), but they did not work well in practice.
\subsection{Finite-sample IPM term}
Finally, we seek a Monte-Carlo approximation of the third term in \eqref{eq:bound_approx_sup}. Recalling the definition of $ g_\eta(x|T=1)$, we have:
\begin{align}
    g_\eta(x|T=1) &\triangleq \frac{w_\eta(x,1)p(x|T=1)}{Z_1} \label{eq:def_g_eta_sup}
\end{align}
where $Z_1 \triangleq \int_\mathcal{X}w_\eta(x,1)p(x|T=1)dx$.

We assume that $\Phi(\cdot): \mathcal{X}\rightarrow \mathcal{R}$ is an invertible transformation (with inverse $\Psi$), so it induces distributions $g_{\Phi,\eta}(r|T=1)$ and $p_{\Phi}(r|T=1)$ from $g_\eta(x|T=1)$ and $p(x|T=1)$, respectively.
From the change of variables formula:
\begin{align}
    g_{\Phi,\eta}(r|T=1) = g_\eta(\Psi(r)|T=1)\cdot |\textup{det}(\Psi')|
\end{align}
where $\Psi'$ is the Jacobian of $\Psi$, and $\textup{det}(\cdot)$ is the determinant. From \eqref{eq:def_g_eta_sup}, we get:\\
\begin{align}
    g_{\Phi,\eta}(r|T=1) = \frac{w_\eta(\Psi(r),1)}{Z_1}\cdot p(\Psi(r)|T=1)\cdot|\textup{det}(\Psi')|
\end{align}
By the change of variables formula on the last 2 terms above, we get:
\begin{align}
     g_{\Phi,\eta}(r|T=1) = \frac{w_\eta(\Psi(r),1)}{Z_1}\cdot p_\Phi(r|T=1) \label{eq:ratio_1}
\end{align}
We may approximate $g_{\Phi,\eta}(r|T=1)$ from samples in a batch $\mathcal{B}$ as:
\begin{align}
    &g_{\Phi,\eta}(r|T=1) \approx \frac{1}{\sum_{i\in \mathcal{B}: T_i=1}w_\eta(X_i,1)/Z_1}\sum_{i\in \mathcal{B}: T_i=1} \frac{w_\eta(X_i,1)}{Z_1}\delta(r-\Phi(X_i))\\
    &= \frac{1}{\sum_{i\in \mathcal{B}: T_i=1}w_\eta(X_i,1)}\sum_{i\in \mathcal{B}: T_i=1} w_\eta(X_i,1)\delta(r-\Phi(X_i)) \triangleq \hat{g}_{\Phi,\eta}(r|T=1) \label{eq:ratio_1_approx}
\end{align}
Where $\delta(r-z)$ is a point-mass centered at $z$.
Similarly, we can approximate $g_{\Phi,\eta}(r|T=0)$ as:
\begin{align}
    g_{\Phi,\eta}(r|T=0) \approx \hat{g}_{\Phi,\eta}(r|T=0) \triangleq  \frac{1}{\sum_{i\in \mathcal{B}: T_i=0}w_\eta(X_i,0)}\sum_{i\in \mathcal{B}: T_i=0} w_\eta(X_i,0)\delta(r-\Phi(X_i)) \label{eq:ratio_0_approx}
\end{align}

\subsection{Putting it all together}
Plugging \eqref{eq:err_1_approx_final}, \eqref{eq:err_0_approx_final}, \eqref{eq:ratio_1_approx}, and \eqref{eq:ratio_0_approx} into \eqref{eq:bound_approx_sup}, we may write an approximation of the bound $B$ (from \eqref{eq:bound_approx_sup}) as:
\small
\begin{align}
\frac{2}{Z\cdot n}\sum_{i\in \mathcal{B}}w_\eta(X_i,T_i)(Y_i-h(\Phi(X_i),T_i))^2
    +\alpha\cdot \textup{IPM}_G(\hat{g}_{\Phi,\eta}(r|T=1),\hat{g}_{\Phi,\eta}(r|T=0))
    + C
\end{align}
\normalsize
The above has the same argmin as:\\
\small
\begin{align}
    \mathcal{L}(h,\Phi,\mathcal{B}) &=  \frac{1}{n}\sum_{i\in \mathcal{B}}w_\eta(X_i,T_i)(Y_i-h(\Phi(X_i),T_i))^2
    +\alpha'\cdot \textup{IPM}_G(\hat{g}_{\Phi,\eta}(r|T=1),\hat{g}_{\Phi,\eta}(r|T=0))
\end{align}
\normalsize
for some constant $\alpha'$ (which we leave as $\alpha$ in the main text to avoid introducing more notation).\\
This is the finite-sample objective presented in equation (11) of the main text -- the version presented here is over a mini-batch $\mathcal{B}$, but we omitted this detail from the main text for simplicity.
\subsection{Weighted Integral Probability Metric (IPM) computation}
As a reminder, IPMs \citep{IPM} are defined as follows:
\begin{align}
    \textup{IPM}_G(u,v) = \underset{m\in G}{\textup{sup}}\int_\mathcal{R} m(r)[u(r)-v(r)]dr
\end{align}
where $G$ is a function class, and $u$ and $v$ are probability measures.
In our implementation, similar to \cite{Shalit}, we use two kinds of IPMs: namely, the Wasserstein distance, by setting the function class $G = \{m: ||m||_L \leq 1 \}$ to be the set of 1-Lipschitz functins, and the Maximum Mean Discrepancy (MMD; \citeauthor{GrettonMMD}, \citeyear{GrettonMMD}), by setting $G = \{m: ||m||_\mathcal{H} = 1\}$ to be the set of norm-1 functions in a reproducing kernel Hilbert space $\mathcal{H}$. In this section, we provide details for how to compute these IPMs between the reweighted distributions $g_{\Phi,\eta}(r|T=1)$ and $g_{\Phi,\eta}(r|T=0)$, which is the last term in our objective in equation (11) of the main text.

\paragraph{Finite-sample weighted MMD}
First, suppose the class of functions $G = \{m: ||m||_\mathcal{H}=1\}$ is the set of norm-1 functions in a reproducing kernel Hilbert space (RKHS) $\mathcal{H}$ with corresponding kernel $k(\cdot,\cdot)$. $\textup{IPM}_G$ is then equivalent to the Maximum-Mean Discrepancy (MMD).
From Lemma 4 in \cite{GrettonMMD}, the squared MMD is equal to:
\begin{align}
\textup{MMD}^2(p,q) &= ||\mu_p-\mu_q||_\mathcal{H}^2\\
&= \langle \mu_p,\mu_p\rangle_{\mathcal{H}} + \langle \mu_q,\mu_q\rangle_{\mathcal{H}} - 2\cdot\langle \mu_p,\mu_q\rangle_{\mathcal{H}}
\end{align}
where $\mu_p(\cdot) \triangleq \mathbb{E}_{x \sim p}[k(\cdot,x)]$ and $\mu_q$ is defined similarly.\\
We now wish to get a finite sample estimate of $\textup{MMD}^2(g_{\Phi,\eta}(r|T=1),g_{\Phi,\eta}(r|T=0))$. Assuming $\Phi$ is invertible with inverse $\Psi$, from equation \eqref{eq:ratio_1}, we have:
\begin{align}
g_{\Phi,\eta}(r|T=1) = \frac{w_\eta(\Psi(r),1)p_\Phi(r|T=1)}{Z_1}\\
g_{\Phi,\eta}(r|T=0) = \frac{w_\eta(\Psi(r),0)p_\Phi(r|T=0)}{Z_0}
\end{align}

Where $Z_t = \int_{\mathcal{R}}w_\eta(\Psi(r),t)p_\Phi(r|T=t)$ for $t\in \{0,1\}$.\\
WLOG, we now seek a finite-sample estimate of $\langle \mu_1,\mu_1\rangle_\mathcal{H}$, where $\mu_1 \triangleq \mathbb{E}_{R \sim g_{\Phi,\eta}(r|T=1)}[k(\cdot,R)]$.\\
\begin{align}
\mu_1(\cdot) \approx &\sum_{i\in\mathcal{B}:T_i=1}\frac{w_\eta(X_i,1)}{\sum_{i\in\mathcal{B}:T_i=1}w_\eta(X_i,1)}k(\cdot,\Phi(X_i))\\
\Rightarrow \langle\mu_1,\mu_1\rangle_{\mathcal{H}} \approx &\frac{\sum_{i\in\mathcal{B}:T_i=1}\sum_{j\in\mathcal{B}:T_j=1}w_\eta(X_i,1)w_\eta(X_j,1)k(\Phi(X_i),\Phi(X_j))}{[\sum_{i\in\mathcal{B}:T_i=1}w_\eta(X_i,1)]^2}
\end{align}
Using the V-statistic version \citep{GrettonMMD} of the above, we get:
\begin{align}
 \langle\mu_1,\mu_1\rangle_{\mathcal{H}} \approx &\frac{\sum_{i\in\mathcal{B}:T_i=1}\sum_{j\in\mathcal{B}:T_j=1,j\neq i}w_\eta(X_i,1)w_\eta(X_j,1)k(\Phi(X_i),\Phi(X_j))}{\sum_{i\in\mathcal{B}:T_i=1}\sum_{j\in\mathcal{B}:T_j=1,j\neq i }w_\eta(X_i,1)w_\eta(X_j,1)} \label{eq:mmd_treatment}
\end{align}
Similarly, we can approximate $\langle\mu_0,\mu_0\rangle_\mathcal{H}$ as:
\begin{align}
\langle\mu_0,\mu_0\rangle_{\mathcal{H}} \approx &\frac{\sum_{i\in\mathcal{B}:T_i=0}\sum_{j\in\mathcal{B}:T_j=0,j\neq i}w_\eta(X_i,0)w_\eta(X_j,0)k(\Phi(X_i),\Phi(X_j))}{\sum_{i\in\mathcal{B}:T_i=0}\sum_{j\in\mathcal{B}:T_j=0,j\neq i }w_\eta(X_i,0)w_\eta(X_j,0)]} \label{eq:mmd_control}
\end{align}
Finally, we similarly approximate $\langle\mu_1,\mu_0\rangle_\mathcal{H}$ as:
\begin{align}
\langle\mu_1,\mu_0\rangle_{\mathcal{H}} \approx &\frac{\sum_{i\in\mathcal{B}:T_i=1}\sum_{j\in\mathcal{B}:T_j=0}w_\eta(X_i,1)w_\eta(X_j,0)k(\Phi(X_i),\Phi(X_j))}{\sum_{i\in\mathcal{B}:T_i=1}\sum_{j\in\mathcal{B}:T_j=0}w_\eta(X_i,1)w_\eta(X_j,0)]} \label{eq:mmd_cross}
\end{align}

Finally, we get the finite-sample estimate of $\textup{MMD}^2(g_\eta(r|T=1),g_\eta(r|T=0))$ via:\\
\begin{align}
\textup{MMD}^2(g_\eta(r|T=1),g_\eta(r|T=0)) \approx
\eqref{eq:mmd_treatment}+ \eqref{eq:mmd_control} -2\cdot \eqref{eq:mmd_cross}
\end{align}

In practice we set $k(\cdot,\cdot)$ to either be a linear kernel, i.e. $k(R_i,R_j) = R_i^TR_j$, or a RBF kernel, i.e. $k(R_i,R_j) = \exp(-\frac{||R_i-R_j||_2^2}{\sigma^2})$, where $\sigma$ is set to 0.1.

\paragraph{Finite-sample weighted Wasserstein distance}
For the finite sample approximation of the weighted Wasserstein distance, we use Algorithm 3 of \cite{Cuturi} (shown here in Algorithm \ref{algo:sk} for convenience), with the entropic regularization strength set to $\lambda=10$, and vectors $a\in\mathbb{R}^{n_1}$, $b\in\mathbb{R}^{n_0}$ and matrix $M\in\mathbb{R}^{n_1\times n_0}$ set to:
\begin{align}
    a^{(i)}= \frac{w_\eta(X_i,1)}{\sum_{k\in\mathcal{B}:T_k=1}w_\eta(X_k,1)};\hspace{0.15in}
    b^{(j)}= \frac{w_\eta(X_j,0)}{\sum_{k\in\mathcal{B}:T_k=0}w_\eta(X_k,0)};\hspace{0.15in}
    M^{(i,j)}= ||\Phi(X_i)-\Phi(X_j)||_2
\end{align}
We fix the number of Sinkhorn iterations to $S=10$.

\begin{algorithm}[!h]
        \begin{algorithmic}
	\caption{Sinkhorn-Knopp Algorithm for weighted Wasserstein distance approximation}\label{algo:sk}
          \STATE \textbf{Input} batch $\mathcal{B}$, entropic regularization parameter $\lambda\in\mathbb{R}$, number of Sinkhorn iterations $S$, encoder $\Phi(\cdot)$, propensity score parameters $\eta$\\
          \hfill\\
          \STATE $n_1 = \sum_{i\in\mathcal{B}}T_i$;~~~ $n_0 = \sum_{i\in\mathcal{B}}(1-T_i);$\\
          \hfill\\
          \STATE Compute weight vectors $a\in\mathbb{R}^{n_1},b\in\mathbb{R}^{n_0}$ of empirical approximations $\hat{g}_{\Phi,\eta}(r|T=1),\hat{g}_{\Phi,\eta}(r|T=0)$, as:
          \STATE    $a^{(i)}= \frac{w_\eta(X_i,1)}{\sum_{k\in\mathcal{B}:T_k=1}w_\eta(X_k,1)} ~~\forall i\in \mathcal{B}: T_i=1$;\hspace{0.15in} $b^{(j)}= \frac{w_\eta(X_j,0)}{\sum_{k\in\mathcal{B}:T_k=0}w_\eta(X_k,0)}~~\forall j\in \mathcal{B}: T_j=0;$\\
          \hfill\\
          \STATE Compute pairwise distance matrix $M\in \mathbb{R}^{n_1\times n_0}$ between treatment \& control representations, as:
          \STATE $M^{(i,j)}= ||\Phi(X_i)-\Phi(X_j)||_2 ~~\forall i\in \mathcal{B}: T_i=1, \forall j\in \mathcal{B}: T_j=0$;\\
          \hfill\\
		  \STATE $K=\exp(-\lambda M)$; \hspace{0.15in} \% elementwise exponential
		  \STATE $\tilde{K}=\textup{diag}(a^{-1}) K$;
		  \STATE Initialize $u=a$;

		  \FOR{$s\in[0,...,S-1]$}
		  \STATE \texttt{$u=1./(\tilde{K}(b./(K^Tu)))$}; \hspace{0.15in} \% Sinkhorn iterations
		  \ENDFOR
		  \STATE $v=b./(K^Tu).$
		  \STATE  $T^{\star}_\lambda=\textup{diag}(u)K\textup{diag}(v)$;\\
		  \hfill\\
		  \STATE \textbf{return} $ \textup{Wass}(\hat{g}_{\Phi,\eta}(r|T=1),\hat{g}_{\Phi,\eta}(r|T=0)) \approx \sum_{i,j} T^{\star (i,j)}_\lambda M^{(i,j)}$
        \end{algorithmic}
     \end{algorithm}
\section{Experimental details}
\subsection{Toy experiment}\label{sec:toy_exp_details}
\paragraph{Data-generating parameters}\\
We specify $\beta_0,\beta_\tau$,and $\gamma \in \mathbb{R}^p$ (from Section 4.1 in the main text) as follows:
\begin{align}
\beta_0 \triangleq \Tilde{\beta_0}\cdot\mathds{1}_\mathcal{B};\hspace{0.5in}\beta_\tau \triangleq\Tilde{\beta_\tau}\cdot\mathds{1}_\mathcal{B};\hspace{0.5in}\gamma \triangleq\Tilde{\gamma}\cdot\mathds{1}_\mathcal{G}
\end{align}
where $\Tilde{\beta_0},\Tilde{\beta_\tau},\Tilde{\gamma} \in \mathbb{R}$,~~$\mathcal{B} \triangleq supp(\beta_0) = supp(\beta_\tau)$, and $\mathcal{G} \triangleq supp(\gamma)$.

Note that $\Tilde{\beta_0},\Tilde{\beta_\tau},\Tilde{\gamma}$ can be used to control the magnitudes of $\beta_0,\beta_\tau,\gamma$ respectively.
Table \ref{tab:toy_details} indicates the value of every parameter used to generate the toy dataset described in the previous section. With the values in Table \ref{tab:toy_details}, we get 33 datasets (each simulated 20 times), indexed by $\Tilde{\gamma}$ (which controls the imbalance) and $\Omega \triangleq |\mathcal{B} \cap \mathcal{G}|$ (which controls the level of confounding).

\begin{table}[h!]
    \centering
    \caption{Data-generating parameters for Section 4.1 in the main text}
\begin{tabular}{p{0.8in}p{3in}p{1.4in}}
\toprule
     \textbf{Parameter} & \textbf{Description} & \textbf{Value/Range}\\
     \midrule
     $N$ & number of data points & 525/225/250 (train/val/test)\\
     $p$ & dimension of covariates &50\\
     $p^*$ & non-zero dimensions in $\beta_0,\beta_\tau,\gamma$ & 20\\
     $\sigma_X^2$ & variance of covariates & 0.05\\
     $\sigma_Y$ & variance of additive Gaussian noise in potential outcomes $Y_i(0),Y_i(1)$ & 1.0\\
     $\rho$ & correlation between covariates & 0.3 \\
     $\Tilde{\beta_0}$ & effective magnitude of $\beta_0$& 1.0\\
     $\Tilde{\beta_\tau}$ & effective magnitude of $\beta_\tau$& 0.3\\
     $ \Tilde{\gamma}$ & imbalance parameter & $\{0,0.5,1.00,...,5.00\}$\\
     $\mathcal{B}$ & support of $\beta_0,\beta_\tau$ & \\
     $\mathcal{G}$ & support of $\gamma$ & \\
     $\Omega$ & confounding parameter: $|\mathcal{B}\cap \mathcal{G}|$ & $\{0,10,20\}$\\
     $\theta$ & True ATE & 3.0\\
     \bottomrule
\end{tabular}
\label{tab:toy_details}
\end{table}

\paragraph{Model hyperparameters}\\
For the purposes of the toy experiment, we fix a regression neural network architecture, as well as a propensity score network architecture. The only hyperparameter we vary is $\alpha$, which is the strength of the IPM regularization term. This was done for reasons of time efficiency, as well as to have an ``apples-to-apples'' comparison between different weighting schemes used in the regression loss. The model hyperparameter values used are shown in Table \ref{tab:hyperparams}.

\begin{table}[h!]
    \centering
    \caption{Model hyperparameter ranges for toy experiment (middle column) and ``real'' datasets (IHDP/ACIC, right column). ``Wass'' is the Wasserstein distance, ``MMD-linear'' is the MMD with a linear kernel, ``MMD-RBF'' is the MMD with an RBF kernel. $e_\eta(\cdot)$ is the fully-connected neural network predicting the propensity score. ``ELU'' is the exponential linear unit activation, ``ReLU'' is the rectified linear unit activation.}
\begin{tabular}{lll}
\toprule
     \textbf{Hyperparameter} & \multicolumn{2}{c}{\textbf{Value/Range}}\\
     & Toy experiment & IHDP \& ACIC2016 \\
     \midrule
    $\alpha$ (strength of IPM term) & \{0, 0.01, 0.1, 1, 10, 100\} & $\{10^{k/2}\}_{k=-10}^6$\\
     IPM used & Wass & \{Wass, MMD-linear, MMD-RBF\}\\
     Num. hidden layers in $\Phi(\cdot)$ & 1 & \{1,2,3\}\\
     Num. hidden layers in $h(\cdot,t)$ & 1 & \{1,2,3\}\\
    Num. hidden layers in $e_\eta(\cdot)$ & 1 & \{1,2,3\}\\
     $\Phi(\cdot)$  hidden layer dim. & 100 & \{20,50,100,200\}\\
     $h(\cdot,t)$ hidden layer dim. & 100 & \{20,50,100,200\}\\
     $e_\eta(\cdot)$ hidden layer dim. & 10 & \{10,20,30\}\\
     $h(\cdot,t),\Phi$ hidden-layer activations & ELU & ELU\\
     $e_\eta(\cdot)$ hidden-layer activations & ReLU & ReLU\\
     Batch size & 200 & 200\\
     Learning rate & 0.001 & 0.001\\
     Optimizer & Adam & Adam\\
     \bottomrule
\end{tabular}
    \label{tab:hyperparams}
\end{table}
\subsection{Infant Health and Development Program (IHDP)}\label{sec:ihdp_details}
From the IHDP dataset \citep{Hill}, \cite{Shalit} made 2 datasets, named IHDP100 and IHDP1000\footnote{both datasets were downloaded from https://www.fredjo.com/}. We used the former (IHDP100) for parameter tuning/model selection, and the latter (IHDP1000) for evaluation.
For the IHDP dataset, we randomly sampled 100 hyperparameter configurations (the hyperparameter ranges are shown in Table \ref{tab:hyperparams}) -- for each sampled configuration, we train 3 models ( with respective weight schemes MW, OW, TruncIPW). We train on the IHDP100 dataset, perform early stopping based on the validation loss, and we select 3 best models (one for each weighting scheme) according to $\epsilon_{\textup{PEHE},p}^{\textup{NN}}$ on the validation set, where:
\begin{align}
    \epsilon_{\textup{PEHE},p}^{\textup{NN}} &\triangleq \frac{1}{N}\sum_{i=1}^N  [(1-2\cdot T_i)(Y_{j(i)}-Y_i)-(h(\Phi(X_i,1)-h(\Phi(X_i,0)]^2\\
    j(i) &\triangleq \underset{j: T_j = 1-T_i}{\textup{argmin}}||X_i-X_j||_2
\end{align}
This is a proxy for $\epsilon_{\textup{PEHE},p}$ which does not make use of counterfactual information.
After the model tuning stage on IHDP100, we report 3 results (1 for each weight scheme) on the IHDP1000 dataset.

For the causal forest results in Section 4.3 of the main text, we obtained the representations and weights (obtained from our 3 best models) for IHDP100, and used them as input to a causal forest (CF) algorithm. We then compared the augmented CF models to a vanilla CF model on IHDP100. More details are provided in the section below.
\subsection{Causal Forests}
\paragraph{IHDP100 weight ablation}
In addition to comparing the vanilla CF with the CF augmented with learned weights & representations, we add a comparison to the CF augmented with the representations only (\textit{i.e.}, without weights). We find that the unweighted augmented CF (``CF+$\Phi$'' in Table \ref{tab:causal_forest_ihdp}) performs similarly to its weighted counterparts for the IHDP100 dataset.
\begin{table}[!h]
    \vspace{-1mm}
    \centering
    \caption{Causal forest (CF) results for IHDP100. The top block is a vanilla CF model. The middle block is a causal forest model using learned representations (denoted $\Phi$) without weights (\textit{i.e.}, the equivalent of CFRNet). The bottom block consists of causal forest models using the learned representations and weights. The bottom block rows are the weights used in the training objective and as the per-sample weights to train the CF.}
    \begin{tabular}{@{}lcc@{}}
    \toprule
     &  $\sqrt{\epsilon_{\textup{PEHE},p}}$ &  $\epsilon_{\textup{ATE},p}$\\
     \midrule
     CF & 3.54 $\pm$ .58 & .47 $\pm$ .06\\
     \midrule
     CF + $\Phi$ & 1.52 $\pm$ .35 & $.20 \pm .04$\\
     \midrule
     CF + $\Phi$ + MW & 1.51 $\pm$ .31 & .20 $\pm$ .03\\
     CF + $\Phi$ + OW & 1.59 $\pm$ .31 & .19 $\pm$ .03\\
     CF + $\Phi$ + TruncIPW & 1.55 $\pm$ .35 & .22 $\pm$ .03\\   \bottomrule
    \end{tabular}
    \vspace{-4mm}
    \label{tab:causal_forest_ihdp}
\end{table}
\paragraph{Atlantic Causal Inference Competition 2016 (ACIC2016)}
In Section 4.3 of the main text, we considered the ACIC2016 dataset \citep{dorie2019}, which comprises 77 datasets (we use 10 repetitions of each)\footnote{generated using https://github.com/vdorie/aciccomp/tree/master/2016, and setting \texttt{parameterNum} between 1 and 77, and \texttt{simulationNum} between 1 and 10.}, each with 4802 samples, and 58-dimensional covariates. ACIC2016 uses the same covariates for all the datasets, but different data-generating mechanisms for potential outcomes and treatment across datasets. We removed the categorical covariates (named $x_2,x_{21},x_{24}$ in the dataset), since our models are not equipped to handle categorical data. We standardized the remaining 55 covariate dimensions (\textit{i.e.}, for each dimension we subtract the mean and divide by the standard deviation). We used the first 4000 samples for training, and the remaining 802 for testing. We used 30\% of the training set for validation.

The tuning procedure is similar to the one described for IHDP. We use the first 10 (out of 77) datasets, with 1 repetition of each, as a tuning set. We pick 3 best models (one for each weighting scheme) according to the average $\epsilon_{\textup{PEHE},p}^{\textup{NN}}$ (across the 10 datasets) on the validation set. The hyperparameter ranges used for tuning are shown in Table \ref{tab:hyperparams}. Hence, after tuning, we have 3 ``best'' models (1 for each weighting scheme), which we apply to the 77 datasets (with 10 repetitions).

We then use the obtained representations and weights as input to a causal forest (CF) model\footnote{Implemented using the \texttt{causal\_forest} function of the \texttt{GRF} package in R: https://CRAN.R-project.org/package=grf}, and we compare the performance of the ``vanilla'' causal forest (\textit{i.e.}, CF using only the original covariates) with the performance of the ``augmented'' CF models (\textit{i.e.}, which use the learned representations and weights as input). More specifically, for the augmented models, we use the learned representations as the ``covariates'', and we use the propensity-based weights as the per-sample weights to train the CF.
\section{Additional Results}

\subsection{Toy experiment}
\paragraph{Performance on target populations $g(x)$}\\
In Section 4.1 of the main text, we measured the performance of our models on the observed population $p(x)$. Here, we extend this evaluation to the different target populations $g(x)$. We report performance using $\sqrt{\epsilon_{\textup{PEHE},g}}$, computed via equation (5) from the main text. Figure \ref{fig:flipped_weighted_pehe} shows a plot of $\epsilon_{\textup{PEHE},g}$ for all choices of $g(x)$, for all toy datasets, and for all weight schemes used during training. 
From Figure \ref{fig:flipped_weighted_pehe}, we can see that each weighting scheme (row) tends to do well for the target population it was trained for -- \textit{i.e.}, within each row, the corresponding metric (color) is lowest (or close to the lowest). This provides some evidence for the fact that the models perform well on the target population they were trained for.
\begin{figure}
    \centering
    \includegraphics[width=\textwidth]{Figures/flipped_weighted_pehe.png}
    \caption{$\epsilon_{\textup{PEHE},g}$ vs. $\Tilde{\gamma}$. The colors are the different choices of $g(x)$ (to calculate $\sqrt{\epsilon_{\textup{PEHE},g}}$ from equation (5) in the main text), and the rows are the weight schemes used during training (\textit{i.e.}, in equation (11) of the main text).}
    \label{fig:flipped_weighted_pehe}
\end{figure}

\paragraph{Double-Robust ATE estimation}\\
We may easily enhance the ATE estimate in equation (3) of the main text by accounting for model bias, using equation (6) from \cite{Mao2018}. Specifically, we can define biases as:
\begin{align}
    b^{(t)} = \frac{1}{\sum_{i: T_i=t} w_\eta(X_i,t)}\sum_{i:T_i=t }w_\eta(X_i,t)[h(\Phi(X_i),t)-Y_i], \hspace{0.1in} \text{for}~~t \in \{0,1\}
\end{align}

which we may use to obtain a doubly-robust \citep{Lunceford} ATE estimate via:
\begin{align}\label{eq:ate_dr}
    \hat{\tau}_{\textup{ATE},g}^{\textup{DR}} = \hat{\tau}_{\textup{ATE},g} -b^{(1)}+b^{(0)}
\end{align}
Note that $b^{(1)},b^{(0)}$ are calculated using the training set only.
We compute the target population ATE error via:
\begin{align}
    \epsilon^{\textup{DR}}_{\textup{ATE},g} = |\tau_{\textup{ATE},g} - \hat{\tau}_{\textup{ATE},g}^{\textup{DR}}|
\end{align}

Figure \ref{fig:ate_dr_diff_clean} shows percent improvement of the double-robust estimator, computed as
\begin{align}
\Delta_g^{\textup{DR}} \triangleq \frac{\epsilon_{\textup{ATE},g}-\epsilon_{\textup{ATE},g}^{\textup{DR}}}{\epsilon_{\textup{ATE},g}} \label{eq:percent_imp_DR}
\end{align}
 on the (test) toy datasets. The double-robust ATE estimator (i.e. $\hat{\tau}_{\textup{ATE},g}^{\textup{DR}}$) enjoys some improvement in the ATE estimation error in most cases, though this is not true across all the toy datasets.

\begin{figure}[!h]
    \centering
    \makebox[\textwidth][c]{\includegraphics[width=1.3\textwidth]{Figures/ate_dr.png}}
    \caption{Improvement in ATE estimation (on the toy datasets) using the double-robust $\hat{\tau}_{\textup{ATE},g}^{\textup{DR}}$ from \eqref{eq:ate_dr}. The x-axis is the imbalance parameter $\Tilde{\gamma}$, the y-axis is the percent improvement defined in \eqref{eq:percent_imp_DR}. The colors are different choices of $g$ (used for (i) the weight schemes during training, and (ii) to compute the target population metric $\epsilon_{\textup{ATE},g}$. Positive values means the double-robust estimator $\hat{\tau}_{\textup{ATE},g}^{\textup{DR}}$ improves upon the vanilla estimator $\hat{\tau}_{\textup{ATE},g}$.}
    \label{fig:ate_dr_diff_clean}
\end{figure}
\subsection{IHDP100 additional comparisons}
In Table 2 of the main text, some of the listed methods (namely, RCFR and CFR-ISW) actually reported their results on IHDP100, whereas we reported performance of our methods on IHDP1000. For the sake of completeness, we add the comparison between our methods, RCFR, and CFR-ISW on the IHDP100 dataset, reported in the table below. These results are consistent with the results from Table 2 in the main text, namely that our proposed methods perform on-par with state-of-the-art methods from recent work.

\begin{table}[!h]
    \caption{Results on IHDP100 test set. The top block consists of baselines from recent work. The bottom block is our proposed methods. Lower is better.}
    \centering
    \begin{small}
    \begin{tabular}{@{}lcc@{}}
    \toprule
     Model &  $\sqrt{\epsilon_{\textup{PEHE},p}}$ &   $\epsilon_{\textup{ATE},p}$\\
     \midrule
        CFR-ISW \citep{hassanpour2019counterfactual} & $.70 \pm .1$ & $.19 \pm .03$ \\
        RCFR \citep{johansson2018learning} & $.67 \pm .05$ & -  \\
        \midrule
        BWCFR-MW (Ours)&  .66 $\pm$ .06 &  .18 $\pm$ .02 \\
        BWCFR-OW (Ours)&   .66 $\pm$ .06 &  .16 $\pm$ .02 \\
        BWCFR-TruncIPW (Ours)&  .65 $\pm$ .05 &  .16 $\pm$ .02 \\
        \bottomrule
    \end{tabular}
    \end{small}
    \label{tab:IHDP_results}
\end{table}

\subsection{Atlantic Causal Inference Competition 2016 (ACIC2016)}
In this section, we carefully examine the results of Section 4.3 from the main text. Specifically, Figure \ref{fig:ACIC} shows the performance of the proposed methods on each of the 77 datasets in ACIC2016, rather than an aggregate as shown in Table 3 of the main text.
Figure \ref{fig:ACIC} compares the performance of 3 types of models:
\begin{itemize}
    \item A vanilla causal forest algorithm
    \item Our proposed deep methods
    \item A hybrid model consisting of a causal forest augmented with our learned representations and weights (obtained from $\Phi(x)$ and $w_\eta(x,t)$, respectively).
\end{itemize}

From Figure \ref{fig:ACIC}, we can see that the augmented causal forest consistently outperforms the vanilla causal forests for almost all of the 77 datasets, both in terms of  $\sqrt{\epsilon_{\textup{PEHE},p}}$ and $\epsilon_{\textup{ATE},p}$. The neural network models perform (on average) better than the vanilla CF in terms of $\sqrt{\epsilon_{\textup{PEHE},p}}$, but worse than the augmented CF. In terms of $\epsilon_{\textup{ATE},p}$, the neural network models perform worse. A possible explanation for the poor performance of the neural network is that our tuning procedure was conducted only on the first 10 datasets (with 1 repetition each), whereas the ACIC2016 set comprises 77 datasets with 10 repetitions each.
This suggests that we may stand to benefit from using hybrid approaches for ITE estimation, since the hybrid approach outperforms the 2 individual components it is comprised of, even though the deep models were not extensively tuned.
Further, the hybrid models trained with Overlap Weights performed the best.
\begin{figure}
    \centering
     \makebox[\textwidth][c]{\includegraphics[width=\textwidth]{Figures/ACIC2016.png}}
    \caption{Per-dataset results on ACIC2016. The first row shows $\sqrt{\epsilon_{\textup{PEHE},p}}$, and the second row shows $\epsilon_{\textup{ATE},p}$. ``CF'' is the vanilla causal forest, ``Augmented CF'' is the CF trained using the learned representations and weights, and ``Neural Network'' are our proposed methods. The columns are the weight schemes used to (i) obtain the representations and weights for the augmented CF, and (ii) used to train the neural network models. The datasets were sorted in descending order according to the performance of the causal forest.}
    \label{fig:ACIC}
\end{figure}
\section{Computing infrastructure and details}
All computation was done using Python and R. All neural network models were created and trained using Tensorflow 1.13.1 \citep{tensorflow}. Computations were done on an NVIDIA Geforce GTX 1080 Ti. The reported results on IHDP1000, ACIC2016, and the toy dataset each took approximately 20 hours to run.
\newpage
\bibliography{refs}
\bibliographystyle{bib_style}